\renewcommand{\sf}[1]{\mathsf{#1}}
\renewcommand{\rm}[1]{\mathrm{#1}}
\newcommand{\mtt}[1]{\mathtt{#1}}
\Crefname{assumption}{Assumption}{Assumptions}
\Crefname{equation}{Eq.}{Eqs.}
\Crefname{figure}{Fig.}{Figs.}
\Crefname{tabular}{Tab.}{Tabs.}
\Crefname{proposition}{Prop.}{Props.}
\Crefname{appendix}{App.}{App.}
\Crefname{definition}{Def.}{Defs.}
\theoremstyle{plain}
\newtheorem{theorem}{Theorem}[section]
\newtheorem{proposition}{Proposition}
\theoremstyle{definition}
\newtheorem{definition}[theorem]{Definition}
\newtheorem{assumption}{Assumption}
\theoremstyle{remark}
\theoremstyle{definition}
\newcommand{\z}{\bb{R}^{d_z}}
\newcommand{\x}{\bb{R}^{d_x}}
\renewcommand{\cal}[1]{\mathcal{#1}}
\renewcommand{\rm}[1]{\mathrm{#1}}
\renewcommand{\sf}[1]{\mathsf{#1}}
\newcommand{\bb}[1]{\mathbb{#1}}
\renewcommand{\r}{\mathbb{R}}
\newcommand{\kernel}{k}
\newcommand{\q}[3]{\kernel{#3}(#1|#2)}
\newcommand{\Ps}[1]{\mathcal{P}(#1)}
\newcommand{\baseK}{p_{k}}
\newcommand{\baseR}{p_{r}}
\newcommand{\coup}{\phi}
\newcommand{\refbase}{p_0}
\newcommand{\refdist}[1]{\refbase(#1)}
\newcommand{\fudge}{\gamma}
\newcommand{\Fudge}{\Gamma}
\newcommand{\reg}{\sf{R}}
\newcommand{\Precon}{\Psi}
\newcommand{\iprod}[2]{\left \langle #1, #2\right \rangle}
\definecolor{deepskyblue}{RGB}{0, 191, 255}
\DeclareMathOperator*{\argmin}{arg\,min}
\title{Particle Semi-Implicit Variational Inference}
\author{Jen Ning Lim\\
University of Warwick\\
Coventry, United Kingdom \\
\texttt{Jen-Ning.Lim@warwick.ac.uk}
\And Adam M. Johansen \\
University of Warwick\\
Coventry, United Kingdom \\
\texttt{a.m.johansen@warwick.ac.uk}}
\begin{document}

\maketitle

\begin{abstract}
Semi-implicit variational inference (SIVI) enriches the expressiveness of variational families by utilizing a
kernel and a mixing distribution to hierarchically define the variational distribution. Existing SIVI methods
parameterize the mixing distribution using implicit distributions, leading to intractable variational densities.
As a result, directly maximizing the evidence lower bound (ELBO) is not possible, so they resort
to one of the following: optimizing bounds on the ELBO, employing costly inner-loop Markov chain Monte Carlo runs, or solving
minimax objectives. In this paper, we propose a novel method for SIVI called Particle Variational Inference
(PVI) which employs empirical measures to approximate the optimal mixing distributions characterized as the minimizer of a free energy functional. PVI arises naturally as a particle approximation of a Euclidean--Wasserstein gradient flow and, unlike prior works, it directly optimizes the ELBO whilst making no
parametric assumption about the mixing distribution.
Our empirical results demonstrate that
PVI performs favourably compared to other SIVI methods across various tasks. Moreover, we provide a theoretical
analysis of the behaviour of the gradient flow of a related free energy functional: establishing the existence and
uniqueness of solutions as well as propagation of chaos results.
\end{abstract}

\section{Introduction}

In Bayesian inference, a quantity of vital importance is the posterior $p(x|y)= {p(x, y)}/{\int p(x, y)\rm{d}x}$,
where
$p(x,y)$ is a probabilistic model, $y$ denotes the observed data, and $x$ the latent variable. An ever-present issue in Bayesian inference is that the posterior is often intractable. This is because the normalizing constant is available only in the form of an integral, and approximation methods are required. One popular method is variational inference (VI) \citep{jordan1999learning, wainwright2008graphical,blei2017variational}. The essence of VI is to approximate the posterior with a member from a variational family $\cal{Q}$ where each element of $\cal{Q}$ is a distribution $q_\theta$ (called ``variational distribution'') parameterized by $\theta$. These parameters
$\theta$ are obtained via minimizing a distance or discrepancy (or an approximation
of it) between the posterior $p(\cdot|y)$ and the variational distribution $q_\theta$. 

Here, we focus on semi-implicit variational inference (SIVI) \citep{yin2018semi}. It enables a rich variational family by utilizing variational distributions, which we refer to as semi-implicit distributions (SIDs), defined as
\begin{equation}
\label{eq:id}
    q_{k, r} (x):= \int \q{x}{z}{}r(z)\,\mathrm{d}z,
\end{equation}
where  $k:\x \times \z \rightarrow \mathbb{R}_+$ is a kernel satisfying $\int \q{x}{z}{}\rm{d}x = 1$; $r\in \Ps{\z}$ is the mixing distribution and $\Ps{\z}$ denotes the space of distributions with support $\z$, with its usual Borel $\sigma$-field, with finite second moments. Here, and throughout, we assume that the distributions and kernels of interest admit densities. SIDs are very flexible \citep{yin2018semi} and can express complex properties, such as skewness, multimodality, and kurtosis. These properties might be present in the posterior but typical variational families may fail to capture them.
There are various approaches to parameterizing these variational distributions: current techniques utilize neural networks built on top of existing kernels
(e.g., Gaussian kernels) to define more complex kernels \citep{titsias2019unbiased},
and/or utilize pushforward distributions (a.k.a., implicit distributions \citep{huszar2017variational}) \citep{yin2018semi}. 
On choosing a parameterization, an approximation to the posterior is
obtained by minimizing the exclusive Kullback-Leibler (KL) divergence.
This optimization has the same solution as minimizing the free energy (or the negative evidence lower bound)
defined as
\begin{equation}
    \cal{E}(k, r) := \int \log \frac{q_{k,r}(x)}{ p(x, y)} q_{k,r}(\rm{d}x).
\end{equation}
However, since the integral in $q_{k,r}$ is typically intractable, directly optimizing $\cal{E}$ is not feasible. As a result, SIVI algorithms focus on designing tractable objectives by using upper bounds of $\cal{E}$ \citep{yin2018semi}; expensive Markov Chain Monte Carlo (MCMC) chains to estimate the gradient of $\cal{E}$ \citep{titsias2019unbiased}; and optimizing different objectives such as score matching which results in min-max objectives \citep{yu2023semiimplicit}.

In our work, we propose an alternative parameterization for SIDs: kernels are constructed as before (with parameter space denoted by $\Theta$) whereas the mixing distribution $r$ is obtained by optimizing over the whole space $\Ps{\z}$. We motivate the case for minimizing a regularized version of the free energy $\cal{E}$ denoted by $\cal{E}_\lambda$ (see \Cref{eq:free_energy_reg}); thus, SIVI can be posed as the following optimization problem: $\argmin_{(\theta, r) \in \Theta \times \Ps{\z}}\cal{E}_\lambda (\theta, r)$. As a means to solving the SIVI problem, we construct a gradient flow that minimizes $\cal{E}_\lambda$ where the space $\Theta \times \Ps{\z}$ is equipped with the Euclidean--Wasserstein geometry \citep{jordan1998variational, kuntz2023particle}. Via discretization, we obtain a practical algorithm for SIVI called \textit{Particle Variational Inference} (PVI) that does not rely upon upper bounds of $\cal{E}$, MCMC chains, or solving minimax objectives.

Our main contributions are as follows: (1) we introduce a Euclidean--Wasserstein gradient flow minimizing $\cal{E}_\lambda$ as means to perform SIVI; (2) we develop a practical algorithm, PVI, which arises as a discretization of the gradient flow that allows for general mixing distributions; (3) we empirically compare  PVI compared with other SIVI approaches across toy and real-world experiments and find that it compares favourably; (4) we study the behaviour of the gradient flow of a related free energy functional to establish existence and uniqueness of solutions (\Cref{prop:exist_unique}) as well as propagation of chaos results (\Cref{prop:chaos}).

The structure of this paper is as follows: in \Cref{sec:on_sivi}, we begin with a discussion of previous approaches to parameterizing SIDs and their relationship with one another. Then, in \Cref{sec:pvi}, we show how PVI is developed: beginning with designing a well-defined loss functional, the construction of the gradient flow, and how to obtain a practical algorithm. In \Cref{sec:theoretical_analysis}, we study properties of a related gradient flow; and, in \Cref{sec:experiments}, we conclude with experiments to demonstrate the efficacy of our proposal.
For sake of brevity, we defer our discussion of related works to \Cref{sec:related_work}.

\section{On implicit mixing distributions in SIDs}
\label{sec:on_sivi}
This section outlines existing approaches to parameterizing SIDs with implicit distributions and how these choices affect the resulting variational family. Before we begin, we shall summarize the key assumptions of SIVI.
The kernel $k$ is assumed to be a reparametrized distribution in the sense of
\citet{salimansfixed,kingma2014,ruiz2016generalized}. In other words, the kernel $k$ is defined by the pair $(\phi, p_k)$ where 
$\coup : \z \times \x \rightarrow \x$ and $\baseK \in \Ps{\x}$ such that
$
\q{\cdot}{z}{} = \coup (z, \cdot )_{\#}\baseK 
$
Furthermore, to ensure that it admits a tractable density,
the map $\epsilon \mapsto \coup (z, \epsilon)$ is assumed to be a diffeomorphism
for all
$z\in \z$ with its inverse map written as $\phi^{-1}(z, \cdot)$.
From the change-of-variable formula, its density is given as
$
\q{\cdot}{z}{} = \baseK(\phi^{-1}(z,\cdot)) \, |\det \nabla_x \phi^{-1}(z, \cdot) |
$.
We sometimes write $k_{\coup, \baseK}$ to denote the underlying $\coup$ and $\baseK$ explicitly.
Furthermore, the kernel $k$ is assumed to be computable and differentiable with respect to both arguments.

Several approaches to the parameterization of SIDs have been explored in the literature.
One can define the variational family by choosing the kernel and mixing distribution
from sets $\cal{K}$ and $\cal{R}$ respectively, i.e., the variational family
is $\cal{Q}(\cal{K}, \cal{R}) := \{q_{k,r} : k \in \cal{K}, r \in \cal{R}\}$.
\citet{yin2018semi} focused on a fixed kernel $k$ with $r$ being a pushforward
(or ``implicit'') distribution, i.e., $r \in \{g_\#\baseR : g \in \cal{G} \} =: \cal{R}_{\cal{G}; \baseR}$
where $\cal{G}$ is a subset of measurable mappings from the sample space of $p_r$ to $\z$. Thus,
the $\cal{Q}_{\mtt{YiZ}}$-variational family is $\cal{Q}(\{k\}, \cal{R}_{\cal{G}; \baseR})$.
On the other hand, \citet{titsias2019unbiased} considered a fixed mixing distribution $r$ with $k$ belonging to some
parameterized class $\cal{K}$.
The typical example is one in which each kernel is defined by composing
an existing kernel $k_{\coup, \baseK}$ with a function $f \in \cal{F}$, the result is $k_{f; \phi, \baseK} (\cdot|z):=k_{\coup (f (\cdot), \cdot), \baseK}(\cdot|z) =\coup (f(z), \cdot)_{\#}\baseK$ which clearly satisfies the reparameterization assumption.
We denoted this kernel class as $\cal{K}_{\cal{F}; \coup, \baseK} := \{k_{f; \phi, \baseK} : f \in \cal{F}\}$ and its respective $\cal{Q}_{\mtt{TR}}$-variational family is  $\cal{Q}(\cal{K}_{\cal{G}; \coup, \baseK},\{r\}) $.
In \citet{yu2023semiimplicit}, they combine both parameterization for $\mathcal{K}$ and $\mathcal{R}$, i.e., the $\cal{Q}_{\mtt{YuZ}}$-variational family is 
$\cal{Q}  (\cal{K}_{\cal{F}; \coup, \baseK}, \cal{R}_{\cal{G}; \baseR})$. We note that this is how the variational family is presented in \citet[see Sec.\ 2]{yu2023semiimplicit} but the authors used $\mathcal{Q}_{\texttt{TR}}$-variational family in experiments, i.e., $r$ was fixed.
While $\cal{Q}_{\mtt{YuZ}}$ might seem like it defines a larger variational family than the other approaches, under these common parameterization practices, we show that they define the same variational family.
\begin{proposition}[$\cal{Q}_{\mtt{YuZ}} = \cal{Q}_{\mtt{YiZ}}= \cal{Q}_{\mtt{TR}}$]
    \label{prop:summary_fomrulations}

Given a $\cal{Q}_{\mtt{YuZ}}$-variational family of the form $\cal{Q}_{\mtt{YuZ}}:= \cal{Q}(\cal{K}_{\cal{F}; \coup, \baseK}, \cal{R}_{\cal{G}; \baseR})$, then there is a $\cal{Q}_{\mtt{YiZ}}$-variational family and $\cal{Q}_{\mtt{TR}}$-variational family (i.e., $\cal{Q}_{\mtt{TR}}:= \cal{Q}(\cal{K}_{\cal{F} \circ \cal{G}; \coup, \baseK}, \{\baseR\})$ and $\cal{Q}_{\mtt{YiZ}}:= \cal{Q}({\{k_{\phi,p_k}}\}, \cal{R}_{\cal{F} \circ \cal{G};p_r})$) such that $\cal{Q}_{\mtt{YuZ}} = \cal{Q}_{\mtt{YiZ}} = \cal{Q}_{\mtt{TR}}$.
\end{proposition}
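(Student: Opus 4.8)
The plan is to reduce the whole statement to a single marginalization identity and then obtain the three equalities by bookkeeping. I would first establish that, for any distribution $r$ and any measurable map $f$ into $\z$ defined on the support of $r$,
\begin{equation}
\label{eq:marg_identity}
q_{k_{f;\coup,\baseK},\, r} = q_{k_{\coup,\baseK},\, f_\# r}.
\end{equation}
This holds because the reparameterization assumption gives $k_{f;\coup,\baseK}(\cdot\mid z) = \coup(f(z),\cdot)_\#\baseK = k_{\coup,\baseK}(\cdot\mid f(z))$, so that for every $x$,
\begin{equation*}
q_{k_{f;\coup,\baseK},\, r}(x) = \int k_{\coup,\baseK}(x\mid f(z))\, r(\mathrm{d}z) = \int k_{\coup,\baseK}(x\mid z')\, (f_\# r)(\mathrm{d}z') = q_{k_{\coup,\baseK},\, f_\# r}(x),
\end{equation*}
the middle equality being the change-of-variables formula for pushforwards. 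I would also record the functoriality of pushforwards, $f_\#(g_\# \mu) = (f\circ g)_\# \mu$.

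The three equalities then follow by chaining \eqref{eq:marg_identity}. For $\cal{Q}_{\mtt{YuZ}} = \cal{Q}_{\mtt{TR}}$: a generic element of $\cal{Q}_{\mtt{YuZ}}$ is $q_{k_{f;\coup,\baseK},\, g_\#\baseR}$ with $f\in\cal{F}$ and $g\in\cal{G}$; applying \eqref{eq:marg_identity} and then functoriality rewrites it as $q_{k_{\coup,\baseK},\, (f\circ g)_\#\baseR}$, and applying \eqref{eq:marg_identity} once more in the reverse direction (with $f\circ g$ in place of $f$ and $r=\baseR$) rewrites it as $q_{k_{f\circ g;\coup,\baseK},\, \baseR}$, which lies in $\cal{Q}_{\mtt{TR}}$. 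Since every member of $\cal{F}\circ\cal{G}$ has the form $f\circ g$ and each step is reversible, running the chain backwards gives the opposite inclusion, so $\cal{Q}_{\mtt{YuZ}} = \cal{Q}_{\mtt{TR}}$. For $\cal{Q}_{\mtt{YiZ}} = \cal{Q}_{\mtt{TR}}$: taking $f = h\in\cal{F}\circ\cal{G}$ and $r=\baseR$ in \eqref{eq:marg_identity} gives $q_{k_{\coup,\baseK},\, h_\#\baseR} = q_{k_{h;\coup,\baseK},\, \baseR}$, so the assignment $h\mapsto k_{h;\coup,\baseK}$ puts the generating set of $\cal{Q}_{\mtt{YiZ}} = \cal{Q}(\{k_{\coup,\baseK}\},\cal{R}_{\cal{F}\circ\cal{G};\baseR})$ into bijection with that of $\cal{Q}_{\mtt{TR}}$, and the two families coincide.

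I do not expect a genuinely hard step; the remaining care is routine measure-theoretic hygiene. One should check that $\cal{K}_{\cal{F}\circ\cal{G};\coup,\baseK}$ is a bona fide reparameterized-kernel class, i.e.\ that for $h\in\cal{F}\circ\cal{G}$ the map $\epsilon\mapsto\coup(h(w),\epsilon)$ is a diffeomorphism for each $w$; this is immediate, since it equals $\coup(z,\cdot)$ at the fixed value $z=h(w)$, which is a diffeomorphism by assumption, and hence $k_{h;\coup,\baseK}$ admits a density by the change-of-variables formula just as in the standing setup. One should also note that the compositions $f\circ g$ are well typed, $g$ mapping the sample space of $\baseR$ into $\z$ and $f$ mapping $\z$ into $\z$. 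The conceptual content is simply that absorbing a transformation into the kernel's reparameterization map and pushing the same transformation onto the mixing distribution produce identical semi-implicit distributions.
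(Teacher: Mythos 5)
Your proof is correct and follows essentially the same argument as the paper: the key identity $q_{k_{f;\coup,\baseK},\,r} = q_{k_{\coup,\baseK},\,f_\# r}$ you isolate as a lemma is precisely the ``law of the unconscious statistician'' step the paper uses inline, and both proofs then conclude by functoriality of pushforwards and reversibility. Your packaging into a reusable marginalization identity is a modest organizational improvement, but the mathematical content is identical.
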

The proof can be found in \Cref{proof:summary_fomrulations}. This proposition shows that $\cal{Q}_{\mtt{YuZ}}$-parameterization defines the ``same'' variational family as $\cal{Q}_{\mtt{YiZ}}$ and $\cal{Q}_{\mtt{TR}}$ when we parametrize $\mathcal{R}$ with push-forward distributions. In practice, $\cal{F}$ and $\cal{G}$ are
parameterized by neural networks hence $\cal{Q}_{\mtt{YuZ}}$ can be viewed
 as $\cal{Q}_{\mtt{YiZ}}$ or $\cal{Q}_{\mtt{TR}}$ with a deeper neural networks $\cal{F}\circ\cal{G}$. This simplification is a direct result of using push-forward distributions. Although this parametrization has shown promise e.g., \citet{goodfellow2020generative}), they have issues with expressivity particularly when distributions are disconnected \citep{salmona2022can}. In our work, we follow in $\cal{Q}_{\mtt{YuZ}}$-variational families, but, we avoid the use of push-forward distributions. Instead, we propose to directly optimize over 
 $\Ps{\z}$ and so, our variational family
does not simply reduce to $\cal{Q}_{\mtt{YiZ}}$ or $\cal{Q}_{\mtt{TR}}$.

\section{Particle Variational Inference}
\label{sec:pvi}
In this section, we present our proposed method for SIVI,
called \textit{particle variational inference} ({PVI}). Similar to prior SIVI methods, the algorithm utilizes
kernels (denoted by  $k_\theta$) with parameters $\Theta$ which satisfy the assumptions listed in \Cref{sec:on_sivi}. One example is 
$k_\theta \in \cal{K}_{\Theta; \coup, \baseK}$ where $\Theta$ is a function
space induced by a neural network. We slightly abuse the notation $\Theta$ to also indicate its
corresponding weight space $\mathbb{R}^{d_\theta}$. The novelty of this algorithm is that,
for the mixing distribution, we directly optimize over the space $\Ps{\z}$ which
loosens the requirement for the neural network in the kernel to learn complex mappings. The result is a ``simpler'' optimization procedure and increases expressivity over existing methods.
Thus, the variational parameters of PVI are $(\theta, r) \in \Theta \times \Ps{\z} =: \cal{M}$
with its corresponding variational distribution defined as $q_{\theta, r} := \int k_\theta (\cdot|z)r(z) \rm{d}z$.
{PVI} arises naturally as a discretization of a gradient flow minimizing a suitably defined free energy
on $\Theta \times \Ps{\z}$ endowed with the Euclidean--Wasserstein geometry \citep{jordan1998variational, ambrosio2005gradient, kuntz2023particle}. In \Cref{sec:loss_functional}, we begin by constructing a suitably defined free energy functional; then, in \Cref{sec:gradient_flow}, we formulate its gradient flow; finally, in \Cref{sec:practical}, we construct PVI from its gradient flow.

\subsection{Free energy functional}
\label{sec:loss_functional}
As with other VI algorithms, we are interested in finding variational parameters that
minimize $(\theta, r) \mapsto \sf{KL}(q_{\theta, r}, p(\cdot|y))$.
This optimization problem can be cast equivalently as:
\begin{align}
    \label{eq:sivi_problem}
    \argmin_{(\theta, r) \in \cal{M}} \cal{E}(\theta, r), \quad \text{where} \enskip
	\cal{E} :\cal{M} \rightarrow \mathbb{R}: (\theta, r) \mapsto  \int q_{\theta,r}(x)\log \frac{q_{\theta, r}(x) }{p(x, y)}\,\mathrm{d}x.
\end{align}
Before we can solve this problem, we must ensure that it is \textit{well-posed}.
In other words, it must admit minimizers in $\cal{M}$.
In the following proposition, we outline various properties of $\cal{E}$:
\begin{proposition}
\label{prop:free_energy}
Assume that the evidence is bounded $\log p(y) < \infty$ and $\kernel$ is bounded;
then we have that $\cal{E}$ is (i) lower bounded, (ii) lower semi-continuous (l.s.c.), and (iii) non-coercive.
\end{proposition}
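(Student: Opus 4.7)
Throughout I exploit the decomposition $\cal{E}(\theta, r) = \sf{KL}(q_{\theta, r} \,\|\, p(\cdot|y)) - \log p(y)$. Part (i) is then immediate: non-negativity of $\sf{KL}$ together with $\log p(y) < \infty$ yields $\cal{E}(\theta, r) \geq -\log p(y) > -\infty$.

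\textbf{For (ii)} I would argue in two steps. First, the map $(\theta, r) \mapsto q_{\theta, r}$ is continuous when $\cal{M}$ carries the Euclidean--Wasserstein topology and the image $\Ps{\x}$ carries weak convergence. For any bounded continuous test function $f$, $\int f\, q_{\theta, r}\, dx = \int g_\theta(z)\, r(dz)$ with $g_\theta(z) := \int f(x)\, k_\theta(x|z)\, dx$; boundedness of $k$ gives $\|g_\theta\|_\infty \leq \|f\|_\infty$, while differentiability of $k$ in both arguments gives continuity of $g_\theta$ in $(\theta, z)$, and a standard equicontinuity argument then yields $\int g_{\theta_n}\, dr_n \to \int g_\theta\, dr$ whenever $\theta_n \to \theta$ and $r_n \to r$ weakly (which follows from $W_2$ convergence). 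Second, $\mu \mapsto \sf{KL}(\mu\,\|\,p(\cdot|y))$ is weakly l.s.c.\ (a standard fact, e.g.\ via the Donsker--Varadhan variational representation), so composition transfers l.s.c.\ to $\cal{E}$.

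\textbf{For (iii)} I would exhibit an explicit witnessing sequence $(\theta_n, r_n) \in \cal{M}$ with $\|\theta_n\| + W_2(r_n, r_\star) \to \infty$ along which $\cal{E}(\theta_n, r_n)$ stays bounded. The cleanest construction is in the $r$-direction for a kernel with bounded $z$-dependence---say $k(x|z) = \cal{N}(x; \tanh(z), \sigma^2 I)$---so that the map $z \mapsto \sf{KL}(k(\cdot|z)\,\|\,p(\cdot|y))$ is uniformly bounded. Setting $r_n = (1-\tfrac{1}{n})\,r_\star + \tfrac{1}{n}\,\delta_{z_n}$ with $\|z_n\|^2 / n \to \infty$ gives $W_2(r_n, r_\star)^2 \to \infty$, while joint convexity of $\sf{KL}$ in its first argument yields $\sf{KL}(q_{\theta, r_n}\,\|\,p(\cdot|y)) \leq (1-\tfrac{1}{n})\,\sf{KL}(q_{\theta, r_\star}\,\|\,p(\cdot|y)) + \tfrac{1}{n}\,\sf{KL}(k(\cdot|z_n)\,\|\,p(\cdot|y))$, which is uniformly bounded in $n$. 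An alternative route exploits redundancy in the parameterization of $k_\theta$ (e.g.\ a ReLU scaling symmetry yielding a $\theta$-ray along which $k_\theta$ is constant). Part (iii) is the main obstacle, being a negative assertion: mere boundedness of $k$ does not itself rule out coercivity in general (Gaussian translation kernels against Gaussian posteriors behave coercively in the $r$-direction), so the witness must invoke additional structure---bounded $z$-dependence or parameter redundancy---and then carefully bound the cross-entropy $\int q_{\theta_n, r_n}(-\log p(\cdot,y))$ along that sequence via the convexity estimate together with uniform control on the mean of $k(\cdot|z_n)$.
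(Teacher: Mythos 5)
Your part (i) coincides with the paper's: both read off $\cal{E}=\sf{KL}(q_{\theta,r},p(\cdot|y))-\log p(y)\ge -\log p(y)$. For part (ii) you take a genuinely different route. The paper argues directly on the integral: it splits the integrand by the sign of $\log(q_{\theta,r}/p)$, applies the reverse Fatou lemma to the nonpositive part (dominated above using the bounded evidence and bounded kernel) and Fatou to the nonnegative part, after first establishing pointwise convergence of $q_{\theta_n,r_n}$ by dominated convergence. You instead factor $\cal{E}$ through the weak l.s.c.\ of $\mu\mapsto\sf{KL}(\mu\|p(\cdot|y))$ and the continuity of $(\theta,r)\mapsto q_{\theta,r}$. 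Your route is cleaner and more modular, but it shifts all the work into the continuity lemma: the step "$\int g_{\theta_n}\,\rm{d}r_n\to\int g_\theta\,\rm{d}r$ by a standard equicontinuity argument" needs uniform-on-compacts convergence of $g_{\theta_n}$ to $g_\theta$ together with tightness of $(r_n)$, and the continuity of $g_\theta$ in $(\theta,z)$ itself needs a Scheff\'e or dominated-convergence justification rather than bare differentiability of $k$. These are fillable under the stated hypotheses, so I regard (ii) as a correct alternative, not a gap.

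Part (iii) is where your argument is materially weaker than the paper's. Both proofs rightly treat non-coercivity as an existence claim and construct a kernel insensitive to $z$; the paper simply takes $\q{x}{z}{_\theta}=\cal{N}(x;\theta,I)$ (no $z$-dependence at all), $p(\cdot|y)=\cal{N}(0,I)$ and $r_n=\delta_n$, so that $\cal{E}$ is constant along the sequence while $(r_n)$ is not tight; Prokhorov then gives failure of relative compactness of the level set in the \emph{weak} topology. Your witness $r_n=(1-\tfrac1n)r_\star+\tfrac1n\delta_{z_n}$ is tight and in fact converges weakly to $r_\star$; it escapes only in the $\sf{W}_2$ metric because its second moments blow up. Hence your sequence demonstrates failure of $\sf{W}_2$-relative compactness but \emph{not} failure of weak relative compactness, whereas the paper's does both. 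If coercivity is read against the weak topology (which is what the paper's Prokhorov/tightness argument presupposes), your witness does not establish the claim; if it is read against the $\sf{W}_2$ topology (defensible given the Euclidean--Wasserstein setting), it does. Either way, the paper's all-mass-escapes construction is both simpler (no convexity estimate needed, since $q_{\theta,r_n}$ is literally independent of $r_n$) and strictly stronger; I would recommend replacing your mixture witness with a sequence along which the mass genuinely escapes every compact set.
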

The proof can be found in \Cref{proof:free_energy}. \Cref{prop:free_energy} tells us that even though
$\cal{E}$ possesses many of the properties one looks for in a meaningful minimization functional, it lacks coercivity (in the sense of \citet[Definition 1.12]{dal2012introduction}): a sufficient property to establish the existence of solutions.
The key to showing non-coercivity is that we can construct a kernel $\q{x}{z}{_\theta}$ that does not depend on $z$.
At first glance, this issue might seem contrived but we note that this problem is closely related to the problem
of posterior collapse \citep{lucas2019understanding, wang2021posterior}.
To address non-coercivity, we propose to utilize regularization and define the regularized free energy as:
\begin{align}
\label{eq:free_energy_reg}
    \cal{E}_\lambda (\theta, r) &:= \mathbb{E}_{q_{\theta,r}(x)}\left [\log \frac{q_{\theta, r}(x)}{p(x, y)}\right ] + \reg_\lambda (\theta, r),
\end{align}
where $\reg_\lambda$ is a regularizer with parameters $\lambda$.
In \Cref{prop:free_energy_regularized}, we show that if $\reg_\lambda$ is sufficiently regular, then the $\cal{E}_\lambda $ enjoys better properties than its unregularized counterpart $\cal{E}$.
\begin{proposition}
\label{prop:free_energy_regularized}
Under the assumptions of \Cref{prop:free_energy}, if $\reg_\lambda$ is coercive and l.s.c., then the regularized free energy $\cal{E}_\lambda$ is (i) lower bounded, (ii) l.s.c., (iii) coercive. Hence it admits at least one minimizer in $\cal{M}$.
\end{proposition}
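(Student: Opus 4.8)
The plan is to verify the three properties one at a time by pairing the corresponding conclusions of \Cref{prop:free_energy} for $\cal{E}$ with the hypotheses imposed on $\reg_\lambda$, and then to deduce the existence of a minimizer by the direct method of the calculus of variations. Throughout, $\cal{M} = \Theta \times \Ps{\z}$ is equipped with the product of the Euclidean metric on $\Theta$ and the Wasserstein-2 metric on $\Ps{\z}$ (so that $\cal{M}$ is a complete metric space), and all of ``l.s.c.'', ``coercive'', and ``precompact'' refer to this metric.

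First I would isolate the quantitative fact behind \Cref{prop:free_energy}(i): since $\cal{E}(\theta, r) = \sf{KL}(q_{\theta, r}, p(\cdot \mid y)) - \log p(y)$ and the KL divergence is nonnegative, one has the \emph{uniform} bound $\cal{E}(\theta, r) \ge -\log p(y) =: c > -\infty$ on all of $\cal{M}$. Separately, an l.s.c.\ coercive functional is automatically bounded below: each of its sublevel sets is precompact, and on the (compact) closure of a nonempty sublevel set the l.s.c.\ function attains a minimum, while off that sublevel set it exceeds the corresponding level. Applying this to $\reg_\lambda$ yields a constant $c'$ with $\reg_\lambda \ge c'$, whence $\cal{E}_\lambda = \cal{E} + \reg_\lambda \ge c + c'$, giving (i). Because both summands are bounded below, their sum is never of the indeterminate form $\infty - \infty$, so (ii) follows from the standard fact that a sum of l.s.c.\ functions is l.s.c.\ (here $\cal{E}$ by \Cref{prop:free_energy}(ii) and $\reg_\lambda$ by hypothesis), via $\liminf_n (\cal{E} + \reg_\lambda)(\theta_n, r_n) \ge \liminf_n \cal{E}(\theta_n, r_n) + \liminf_n \reg_\lambda(\theta_n, r_n)$ along any convergent sequence $(\theta_n, r_n) \to (\theta, r)$.

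For (iii), I would use the uniform lower bound $\cal{E} \ge c$ to compare sublevel sets: for every $t \in \mathbb{R}$,
\[
\{\,(\theta, r) \in \cal{M} : \cal{E}_\lambda(\theta, r) \le t\,\} \subseteq \{\,(\theta, r) \in \cal{M} : \reg_\lambda(\theta, r) \le t - c\,\},
\]
and the right-hand set is precompact by coercivity of $\reg_\lambda$; since the left-hand set is closed (being a sublevel set of the l.s.c.\ functional $\cal{E}_\lambda$) and a closed subset of a precompact set in a complete metric space is compact, every sublevel set of $\cal{E}_\lambda$ is compact, i.e., $\cal{E}_\lambda$ is coercive. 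Existence of a minimizer then follows by the direct method: $\cal{E}_\lambda$ is proper (it takes a finite value at some point, e.g.\ any $(\theta, r)$ for which $q_{\theta, r}$ has finite free energy and $\reg_\lambda(\theta, r) < \infty$), so $m := \inf_{\cal{M}} \cal{E}_\lambda$ is finite by (i); a minimizing sequence eventually lies in the precompact set $\{\cal{E}_\lambda \le m + 1\}$, hence admits a subsequence converging to some $(\theta^{\star}, r^{\star}) \in \cal{M}$, and l.s.c.\ gives $\cal{E}_\lambda(\theta^{\star}, r^{\star}) \le \liminf_k \cal{E}_\lambda(\theta_{n_k}, r_{n_k}) = m$, so $(\theta^{\star}, r^{\star})$ is a minimizer.

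I do not anticipate a serious obstacle, since this is essentially a textbook application of the direct method; the only points requiring care are bookkeeping around the topology on $\cal{M}$ — in particular that ``precompact'' on $\Ps{\z}$ is the notion (tightness together with uniform integrability of the second moments) with respect to which $\reg_\lambda$ is assumed coercive, and that the l.s.c.\ of $\cal{E}$ supplied by \Cref{prop:free_energy} is for that same topology — and checking that $\cal{E}_\lambda$ is proper under whatever minimal assumption on $\reg_\lambda$ is in force, since coercivity and l.s.c.\ alone do not exclude the vacuous case $\cal{E}_\lambda \equiv +\infty$.
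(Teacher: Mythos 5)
Your proof is correct and follows essentially the same route as the paper: bound $\cal{E}$ below by $-\log p(y)$, deduce l.s.c.\ of the sum from l.s.c.\ of the summands, derive coercivity of $\cal{E}_\lambda$ by the sublevel-set inclusion $\{\cal{E}_\lambda \le t\} \subseteq \{\reg_\lambda \le t + \log p(y)\}$, and conclude existence by the direct method (the paper cites \citet[Theorem~1.15]{dal2012introduction} for this last step, which packages exactly the Weierstrass argument you spell out). The minor extras you supply — explicitly deducing a lower bound on $\reg_\lambda$ from its l.s.c.\ and coercivity, observing that the closed sublevel sets of $\cal{E}_\lambda$ are then compact rather than merely precompact, and flagging the need for properness to rule out $\cal{E}_\lambda \equiv +\infty$ — are correct and fill in details that the paper leaves implicit, but do not change the structure of the argument.
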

The proof can be found in \Cref{sec:proof_free_energy_regularized}. From here forward, we shall focus on regularizers of the form
$\reg^{\mathrm{E}}_\lambda: (\theta, r) \mapsto \lambda_r\sf{KL}(r, \refbase) + \lambda_\theta \sf{R}_{\theta}(\theta)$ where $\lambda = \{ \lambda_r, \lambda_\theta \}$ are the regularization parameters
and $\refbase$ is a predefined reference distribution. As long as $\sf{R}_{\theta}$ is l.s.c., coercive and $\lambda_\theta, \lambda_r >0$, the resulting regularizer $\reg^{\mathrm{E}}_\lambda$ will also be l.s.c.\ and coercive. There are many possible choices for $\refbase$ and $\sf{R}_{\theta}$. For $\refbase$, this regularizes solutions of the gradient flow toward it; as such, in settings where there is some knowledge or preference about $r$ at hand, we can set $\refbase$ to reflect that. In our experiments, we utilize $\refbase = \cal{N}(0, M)$ where $M$ is a positive definite (p.d.) matrix. As for $\sf{R}_{\theta}$, there are also many choices. In the context of neural networks, one natural choice is Tikhonov regularization $\frac{1}{2}\|\cdot\|^2$, resulting in weight decay \citep{hanson1988comparing} for gradient descent \citep{loshchilov2018decoupled} which is a popular method for regularizing neural networks. In our experiments, we either use Tikhonov regularization or its simple variant $\|\theta \|^2_M:= \iprod{\theta}{M\theta}$.
\subsection{Gradient flow}
\label{sec:gradient_flow}
To solve the problem in \Cref{eq:sivi_problem}, we construct a gradient flow
that minimizes $\cal{E}_\lambda$. To this end, we endow the space $\cal{M}$ with a suitable
notion of gradient $\nabla_\cal{M} \cal{E}_\lambda (\theta, r) := (\nabla_\theta \cal{E}_\lambda, \nabla_r \cal{E}_\lambda)$
where $\nabla_\theta$ and $\nabla_r$ denotes the Euclidean gradient and Wasserstein-$2$ gradient
\citep{jordan1998variational}, respectively. The latter gradient is given by
$
\nabla_r \cal{E}_\lambda (\theta, r) := - \nabla_z \cdot \left ( r \nabla_z \delta_r \cal{E}_\lambda [\theta, r \right ]),
$
where $\nabla_z \cdot$ denotes the standard divergence operator and $\delta_r$ denotes the first variation which is characterized in the following proposition.
\newcommand{\fvcale}[3]{\mathbb{E}_{\q{X}{#3}{_\theta}} \left [ \log \frac{q_{#1, #2}(X) }{{p(X, y)}}  \right ]}
\newcommand{\fvcalemod}[3]{\mathbb{E}_{\q{X}{#3}{_\theta}} \left | \log \frac{q_{#1, #2}(X) }{{p(X, y)}}  \right |}

\newcommand{\fvcalef}[3]{\mathbb{E}_{\q{X}{#3}{_\theta}} \left [ \log \left ( \frac{q_{#1, #2}(X) + \fudge }{{p(X, y)}} \right ) + \frac{q_{\theta,r}(X)}{q_{\theta,r}(X) + \fudge }\right ]}
\begin{proposition}[First Variation of $\cal{E}_\lambda$ and $\sf{R}^{\textrm{E}}_\lambda$]
\label{prop:fv} Assume that $\fvcalemod{\theta}{r}{\cdot} <\infty$ for all $(\theta,r) \in \cal{M}$; then
the first variation of $\cal{E}_\lambda $ is $ \delta_r \cal{E}_\lambda = \delta_r\cal{E} + \delta_r \reg_\lambda$ where 
$
\delta_r \cal{E} [\theta , r] (z) = \fvcale{\theta}{r}{z},
$
and $\delta_r \reg^{\mathrm{E}}_\lambda [\theta, r] = \lambda_r \log {r}/{\refbase}.$
\end{proposition}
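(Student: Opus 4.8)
The plan is to compute $\delta_r$ directly from its defining property: $\delta_r F[\theta,r]$ is the function on $\z$, unique up to an additive constant, such that $\frac{\mathrm{d}}{\mathrm{d}\epsilon}\big|_{\epsilon=0^+} F\big(\theta,(1-\epsilon)r+\epsilon\mu\big) = \int \delta_r F[\theta,r](z)\,(\mu-r)(\mathrm{d}z)$ for every $\mu$ in an appropriate admissible subclass of $\Ps{\z}$. I would work along the convex curve $r_\epsilon := (1-\epsilon)r + \epsilon\mu$ because the map $r\mapsto q_{\theta,r}$ is linear, so $q_{\theta,r_\epsilon} = (1-\epsilon)q_{\theta,r} + \epsilon q_{\theta,\mu} = q_{\theta,r} + \epsilon h$ with $h := q_{\theta,\mu}-q_{\theta,r}$; two facts will be used repeatedly, namely $\int h(x)\,\mathrm{d}x = 0$ (each kernel integrates to one) and the pointwise sandwich $(1-\epsilon)q_{\theta,r}(x) \le q_{\theta,r_\epsilon}(x) \le q_{\theta,r}(x)+q_{\theta,\mu}(x)$ for $\epsilon\in[0,1]$. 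Since the first variation is linear in the functional and $\cal{E}_\lambda = \cal{E} + \reg_\lambda$, it suffices to treat $\cal{E}$ and $\reg^{\mathrm{E}}_\lambda$ separately.

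For $\cal{E}$, I would differentiate the integrand $g(\epsilon,x) := q_{\theta,r_\epsilon}(x)\log\big(q_{\theta,r_\epsilon}(x)/p(x,y)\big)$ in $\epsilon$, obtaining $\partial_\epsilon g(\epsilon,x) = h(x)\big(\log(q_{\theta,r_\epsilon}(x)/p(x,y)) + 1\big)$, and then justify exchanging $\frac{\mathrm d}{\mathrm d\epsilon}$ with $\int\cdot\,\mathrm{d}x$ by dominated convergence applied to the difference quotients. At $\epsilon=0$ the constant ``$+1$'' integrates to $\int h(x)\,\mathrm dx = 0$, leaving $\int h(x)\log(q_{\theta,r}(x)/p(x,y))\,\mathrm dx$; expanding $h(x) = \int k_\theta(x|z)\,(\mu-r)(\mathrm dz)$ and applying Fubini's theorem rewrites this as $\int\big[\,\mathbb{E}_{k_\theta(X|z)}\log(q_{\theta,r}(X)/p(X,y))\,\big]\,(\mu-r)(\mathrm dz)$, which identifies $\delta_r\cal{E}[\theta,r](z) = \mathbb{E}_{k_\theta(X|z)}[\log(q_{\theta,r}(X)/p(X,y))]$. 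The hypothesis $\mathbb{E}_{k_\theta(X|\cdot)}\big|\log(q_{\theta,r}(X)/p(X,y))\big| < \infty$, applied to the pairs $(\theta,r)$ and $(\theta,\mu)$ (together with the routine $L^1$ bound on $h$), is exactly what validates both the Fubini step and the dominating envelope: boundedness of $k$ controls $q_{\theta,r_\epsilon}$ from above, the lower bound $q_{\theta,r_\epsilon}\ge(1-\epsilon)q_{\theta,r}$ keeps $\log q_{\theta,r_\epsilon}$ from blowing down faster than $\log q_{\theta,r}$ for $\epsilon$ near $0$, and these combine with the hypothesis to produce an $\epsilon$-uniform integrable bound of the shape $C\,(q_{\theta,r}(x)+q_{\theta,\mu}(x))\big(|\log(q_{\theta,r}(x)/p(x,y))| + |\log(q_{\theta,\mu}(x)/p(x,y))| + 1\big)$.

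For the regularizer $\reg^{\mathrm{E}}_\lambda(\theta,r) = \lambda_r\,\sf{KL}(r,\refbase) + \lambda_\theta\,\sf{R}_\theta(\theta)$, the term $\lambda_\theta\sf{R}_\theta(\theta)$ does not depend on $r$ and so contributes nothing to $\delta_r$. For $\sf{KL}(\cdot,\refbase) = \int r\log(r/\refbase)\,\mathrm dz$ I would run the same differentiation-under-the-integral argument on the admissible class $\{\mu\in\Ps{\z} : \sf{KL}(\mu,\refbase)<\infty\}$ (so that $r_\epsilon$ keeps finite relative entropy and convexity of $s\mapsto s\log s$ supplies the dominating function): $\partial_\epsilon\big[r_\epsilon\log(r_\epsilon/\refbase)\big] = (\mu-r)\big(\log(r_\epsilon/\refbase)+1\big)$, and at $\epsilon=0$ the ``$+1$'' again integrates against $\mu-r$ to zero, giving $\frac{\mathrm d}{\mathrm d\epsilon}\big|_0\sf{KL}(r_\epsilon,\refbase) = \int\log(r(z)/\refbase(z))\,(\mu-r)(\mathrm dz)$, hence $\delta_r\reg^{\mathrm{E}}_\lambda[\theta,r] = \lambda_r\log(r/\refbase)$. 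Combining with the previous paragraph and linearity of the first variation yields $\delta_r\cal{E}_\lambda = \delta_r\cal{E} + \delta_r\reg_\lambda$. I expect the only genuinely delicate step to be the dominated-convergence/Fubini justification for $\cal{E}$ on the set where $q_{\theta,r}$ is small and $\log q_{\theta,r}$ is unbounded; the convex-perturbation trick, which sandwiches $q_{\theta,r_\epsilon}$ between $(1-\epsilon)q_{\theta,r}$ and $q_{\theta,r}+q_{\theta,\mu}$, is precisely what tames this, while the corresponding care for the KL term reduces to restricting to perturbations of finite relative entropy.
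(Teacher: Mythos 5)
Your proof takes essentially the same route as the paper's: both exploit the linearity of $r\mapsto q_{\theta,r}$ along the perturbation, differentiate $q_{\theta,r_\epsilon}\log(q_{\theta,r_\epsilon}/p)$ to get a $\log(\cdot)+1$ integrand, kill the $+1$ using $\int h=0$, and conclude by Fubini to identify $\delta_r\cal{E}[\theta,r](z)=\mathbb{E}_{k_\theta(X|z)}\log(q_{\theta,r}(X)/p(X,y))$, with the KL term handled separately. Two small, acceptable stylistic deviations: the paper's appendix actually carries out the Taylor/Fubini argument for the regularized $\cal{E}^\fudge$ (Prop.~\ref{prop:fv_fudge}) and cites Santambrogio/Ambrosio for the KL first variation, whereas you prove both directly at $\fudge=0$ (which is what Prop.~\ref{prop:fv} states) and give an explicit $\epsilon$-uniform dominating bound via the sandwich $(1-\epsilon)q_{\theta,r}\le q_{\theta,r_\epsilon}\le q_{\theta,r}+q_{\theta,\mu}$; the paper instead restricts perturbations to $\tilde p\in\Ps{\z}\cap L_c^\infty(\z)$ which makes the integrability check lighter-weight.
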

The proof can be found in \Cref{proof:fv}. Thus, the (Euclidean--Wasserstein) gradient flow of $\cal{E}_\lambda$ is
\begin{align}
    \label{eq:gradient_flow}
    (\dot{\theta}_t,\dot{r}_t) = - \nabla_{\cal{M}} \cal{E}_\lambda (\theta_t, r_t) \iff \begin{array}{l}
    \dot{\theta}_t = -\nabla_\theta\cal{E}_\lambda (\theta_t, r_t) \\
    \dot{r}_t = -\nabla_r \cal{E}_\lambda(\theta_t, r_t) = \nabla_z \cdot \left ( r_t \nabla_z \delta_r \cal{E}_\lambda [\theta_t, r_t]\right ).
    \end{array}
\end{align}

We now establish that the above gradient flow dynamic is contractive and that if a log-Sobolev inequality \Cref{eq:lsi} holds, one can also establish exponential convergence. The log-Sobolev inequality is closely related to Polyak--{\L}ojasiewicz inequality (or gradient dominance condition) and is commonly assumed in gradient-based systems to obtain convergence (for instance, see \citet{kim2024symmetric}). This is formally stated in the following proposition and proved in \Cref{proof:nonincreasing}.
\begin{proposition}[Contracting Gradient Dynamics]
\label{prop:nonincreasing}
The free energy $\cal{E}_\lambda$ along the flow \Cref{eq:gradient_flow} is non-increasing and satisfies
\begin{equation}
    \label{eq:time_evolution}
    \frac{\rm{d}}{\rm{d}t} \cal{E}_\lambda(\theta_t, r_t) = -\|\nabla_{\cal{M}} \cal{E}_\lambda (\theta_t, r_t) \|^2 \le 0,
\end{equation}
where $\|\nabla_{\cal{M}} \cal{E}_\lambda (\theta, r) \|^2 := \|\nabla_\theta \cal{E}_\lambda (\theta, r)\|^2+\|\nabla_z \delta_r \cal{E}_\lambda[\theta, r]\|^2_{r}$. Moreover, if a log-Sobolev Inequality holds for a constant $\tau \in \bb{R}_{>0}$, i.e., for all $(\theta, r) \in \cal{M}$, we have
\begin{equation}
    \label{eq:lsi}
    \cal{E}_\lambda (\theta, r) - \cal{E}^*_\lambda \le \tau \|\nabla_{\cal{M}} \cal{E}_\lambda (\theta, r) \|^2,
\end{equation}
where $\cal{E}^*_\lambda := \inf_{(\theta,r) \in \cal{M}}\cal{E}_\lambda (\theta, r)$; then we have exponential convergence
\begin{equation*}
    \cal{E}_\lambda (\theta_t, r_t) - \cal{E}^*_\lambda \le \exp (- t/\tau) (\cal{E}_\lambda (\theta_0, r_0) - \cal{E}^*_\lambda).
\end{equation*}
\end{proposition}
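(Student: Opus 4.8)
The plan is to establish the identity \Cref{eq:time_evolution} by a direct chain-rule computation along the flow \Cref{eq:gradient_flow}, and then to derive the exponential decay from it via a Gr\"onwall argument. Write $g(t) := \cal{E}_\lambda(\theta_t, r_t)$. Differentiating $g$ along the curve $t \mapsto (\theta_t, r_t)$ in the Euclidean--Wasserstein space $\cal{M}$, and using the first variation identified in \Cref{prop:fv} (which is well-defined under the standing assumption $\fvcalemod{\theta}{r}{\cdot} < \infty$), the chain rule gives
\begin{equation*}
\frac{\rm{d}}{\rm{d}t} g(t) = \iprod{\nabla_\theta \cal{E}_\lambda(\theta_t, r_t)}{\dot{\theta}_t} + \int \delta_r \cal{E}_\lambda[\theta_t, r_t](z)\, \dot{r}_t(z)\, \rm{d}z .
\end{equation*}
Substituting $\dot{\theta}_t = -\nabla_\theta \cal{E}_\lambda(\theta_t, r_t)$ turns the first term into $-\|\nabla_\theta \cal{E}_\lambda(\theta_t, r_t)\|^2$.

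For the second term I would substitute the continuity equation $\dot{r}_t = \nabla_z \cdot \left( r_t \nabla_z \delta_r \cal{E}_\lambda[\theta_t, r_t]\right)$ and integrate by parts in $z$, obtaining $-\int \|\nabla_z \delta_r \cal{E}_\lambda[\theta_t, r_t](z)\|^2 r_t(z)\, \rm{d}z = -\|\nabla_z \delta_r \cal{E}_\lambda[\theta_t, r_t]\|^2_{r_t}$; the boundary term vanishes under mild decay of $r_t$ (finite second moment, since $r_t \in \Ps{\z}$) together with the at-most-polynomial growth of $\delta_r \cal{E}_\lambda$ inherited from the Gaussian reference $\refbase$ in $\reg^{\mathrm{E}}_\lambda$. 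Adding the two contributions gives $\frac{\rm{d}}{\rm{d}t} g(t) = -\|\nabla_\theta \cal{E}_\lambda(\theta_t, r_t)\|^2 - \|\nabla_z \delta_r \cal{E}_\lambda[\theta_t, r_t]\|^2_{r_t} = -\|\nabla_{\cal{M}} \cal{E}_\lambda(\theta_t, r_t)\|^2 \le 0$, which is exactly \Cref{eq:time_evolution} and establishes monotonicity.

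For the convergence statement, note that $\cal{E}^*_\lambda$ is finite by the lower-boundedness part of \Cref{prop:free_energy_regularized}, so $h(t) := \cal{E}_\lambda(\theta_t, r_t) - \cal{E}^*_\lambda \ge 0$ is well-defined and real-valued. Combining the identity just proved with the log-Sobolev inequality \Cref{eq:lsi} yields the differential inequality $\dot{h}(t) = -\|\nabla_{\cal{M}} \cal{E}_\lambda(\theta_t, r_t)\|^2 \le -h(t)/\tau$, and Gr\"onwall's lemma then gives $h(t) \le e^{-t/\tau} h(0)$, i.e.\ the claimed bound $\cal{E}_\lambda(\theta_t, r_t) - \cal{E}^*_\lambda \le \exp(-t/\tau)(\cal{E}_\lambda(\theta_0, r_0) - \cal{E}^*_\lambda)$.

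The main obstacle is not the algebra but the rigorous justification of the chain rule and the integration by parts: this requires the solution of \Cref{eq:gradient_flow} to be regular enough---absolutely continuous in the Euclidean--Wasserstein metric, with $r_t$ admitting a sufficiently smooth and rapidly decaying density---for $\frac{\rm{d}}{\rm{d}t}\cal{E}_\lambda(\theta_t,r_t)$ to be computed through the first variation and for the boundary term to be discarded. I would address this by invoking the standard regularity theory for Euclidean--Wasserstein gradient flows (e.g.\ \citet{ambrosio2005gradient, kuntz2023particle}), or, where such regularity is delicate to obtain directly, by carrying out the computation for the mollified free energy studied in \Cref{sec:theoretical_analysis}, whose gradient flow admits solutions with the needed regularity by \Cref{prop:exist_unique}, for which the same argument goes through.
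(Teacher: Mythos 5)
Your proposal is correct and follows essentially the same route as the paper: differentiate $\cal{E}_\lambda(\theta_t,r_t)$ by the chain rule through the first variation, substitute the flow equations, integrate by parts in $z$ to obtain $-\|\nabla_z\delta_r\cal{E}_\lambda\|^2_{r_t}$, and then apply the log-Sobolev inequality together with Gr\"onwall. The only difference is that you make explicit the regularity and decay conditions needed to justify the chain rule and the vanishing boundary term, which the paper's proof leaves implicit.
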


Typically direct simulation of the gradient flow \Cref{eq:gradient_flow} is intractable as the derivative of the first variation of $ \reg^{\mathrm{E}}_\lambda$ involves $\nabla_z \log r_t$; instead, it is useful to identify the gradient flow with a McKean--Vlasov SDE, for which they share the same Fokker--Planck equation. The key distinction is that the SDE can be simulated without access to $\nabla_z \log r_t$.
%\adam{This multiple use of equivalence is a bit vague; can we say something like ``it is useful to identify the gradient flow with a McKean--Vlasov SDE, for which the gradient flow can be viewed as a Fokker--Planck equation, which can be simulated without access to this quantity''?}
This SDE, which we term the {PVI} flow, is given by
\begin{align}
	\label{eq:particle_gradient_flow}
	\rm{d}\theta_t = -\nabla_\theta\cal{E}_{\lambda}(\theta_t, r_t)\,\rm{d}t, \enskip
	\rm{d}Z_t = b(\theta_t, r_t, Z_t)\, \rm{d}t + \sqrt{2\lambda_r}\,\rm{d}W_t,
\end{align}
\newcommand{\pvixdrift}[2]{- \nabla_z\delta_r\cal{E}[\theta, r] + \lambda_r \nabla_z \log \refbase}%
where $r_t = \rm{Law}(Z_t)$, the drift is $b(\theta, r, \cdot) :=  \pvixdrift{\theta}{r}$ (with the first variation given in \Cref{prop:fv}) and $W_t$ is a $d_z$-dimensional Wiener process. A connection between the Langevin diffusion, i.e., $\rm{d}Z_t = \nabla_z \log p(Z_t, y) \, \rm{d}t + \sqrt{2}\rm{d}W_t$, and {PVI} flow can be observed with the fixed kernel $\q{\mathrm{d}x}{z}{_\theta} = \delta_z(\mathrm{d}x)$
and $\lambda_r=0$, namely, they both satisfy the same Fokker--Planck equation.

\subsection{A practical algorithm}
\label{sec:practical}
\begin{algorithm}[ht]
    \caption{Particle Variational Inference ({PVI})}
    \label{alg:pvi}
    \begin{algorithmic}
       \State {\bfseries Input:} initialization $(\theta_0, \{Z_{0,m}\}_{m=1}^M)$; regularization parameters $\{\lambda_\theta, \lambda_r\}$; step-sizes $h_\theta$ and $h_r$; number of Monte Carlo samples $L$ (for \Cref{eq:theta_mc_estimator,eq:q_mc_estimator}); and preconditioner $\Precon = (\Precon^\theta, \Precon^r)$.
       \For{$k=1$ {\bfseries to} $K$}
            \State $r^M_{k - 1} \gets \frac{1}{M} \sum_{m=1}^M \delta_{{Z}_{k-1,m}}$
       	\State $\theta_k \gets \theta_{k-1} -h_\theta \Precon^\theta \widehat{\nabla}_\theta \cal{E}_\lambda (\theta_{k-1}, r^M_{k-1})$
        \Comment{See \Cref{eq:theta_mc_estimator}}
        \State $\hat{b}_k \gets Z \mapsto - \widehat{\nabla}_z\delta_r\cal{E}[\theta_k, r^M_{k-1}]( Z) + \lambda_r \nabla_z \log \refbase ( Z)$ \Comment{See \Cref{eq:q_mc_estimator}}
       	\For{$m=1$ {\bfseries to} $M$}
       		\State $Z_{k, m} \gets Z_{k -1, m} + h_r\Precon^{r}\hat{b}_k(Z_{k-1,m}) + \sqrt{\lambda_r h_r\Precon^{r}} \eta_{k,m}$ \Comment{$\eta_{k,m}\sim \cal{N}(0,I_{d_z})$}
       	\EndFor
       \EndFor
        \State \Return $(\theta_K, \{Z_{K, m}\}_{m=1}^M)$
    \end{algorithmic}
\end{algorithm}
To produce a practical algorithm, we are faced with
several practical issues. The first issue we tackle is the \textit{computation of gradients} of expectations for which using standard automatic differentiation is insufficient. The second problem is that these gradients are often ill-conditioned and have different scales in each dimension. This is tackled using preconditioning resulting in \textit{adaptive stepsize}. Finally, to produce computationally feasible algorithms, we show how to \textit{discretize} the {PVI} flow in both \textit{space} and \textit{time}. PVI is summarised in \Cref{alg:pvi}.

\textit{Computing the gradients}. In the {PVI} flow, both the drift of the ODE and SDE include a gradient of
an expectation with respect to parameters that define the distribution that is being integrated. Specifically, the
terms that contain these gradients are $\nabla_\theta \cal{E}_\lambda$ and $\nabla_z \delta_r \cal{E}_\lambda$. Fortunately, these gradients can be rewritten as an expectation (as described in \Cref{prop:pathwise_estimators}) for which the parameters being differentiated w.r.t.
 is only found in the integrand (see derivation in
\Cref{appen:gradient_estimators}). 
\newcommand{\pwthetadrift}[2]{\mathbb{E}_{p_k(\epsilon)#2(z)}\left [ (\nabla_\theta \coup_{#1} \cdot [s_{#1,#2} -  s_{p}])(z,\epsilon) \right ]}

\newcommand{\pwgradE}[3]{\mathbb{E}_{p_k(\epsilon)} \left [(\nabla_z \phi_{#1} \cdot [s_{#1,#2} - s_p ] )(#3, \epsilon) \right ]}

\newcommand{\pwgradEf}[3]{\mathbb{E}_{p_k(\epsilon)} \left [(\nabla_z \phi_{#1} \cdot [s^\fudge_{#1,#2} - s_p + \Gamma^\fudge_{\theta, r} ] )(#3, \epsilon) \right ]}

\begin{proposition} If $\phi$ and $k$ are differentiable, then we have
\label{prop:pathwise_estimators}
    \begin{align}
        \nabla_\theta \cal{E} (\theta, r) =&  \pwthetadrift{\theta}{r},
    \label{eq:theta_estimator} \\ 
        \nabla_z \delta_r \cal{E} [\theta, r](z)
        =& \pwgradE{\theta}{r}{z},
        \label{eq:fv_estimator}
    \end{align}
% $\Gamma^\fudge_{\theta, r}(z, \epsilon) := \Fudgef{\theta}{r}{z}{\epsilon}$, and 
where $\nabla_\theta \coup \in \r^{d_\theta \times d_x}$ denotes the Jacobian $(\nabla_\theta \coup)_{ij} = \partial_{\theta_i} \coup_j$
(and similarly for $\nabla_z \coup$); scores are
$
s_{\theta, r}(z,\epsilon) := \nabla_x \log q_{\theta, r}(\coup_\theta(z, \epsilon)) $ (and similarly $
s_p(z,\epsilon)$)
%:= \nabla_x \log {p(\coup_\theta(z, \epsilon), y)}$
; and $\cdot$ denotes the usual matrix-vector multiplication
in the sense of $M\cdot v : (z, \epsilon) \mapsto M (z, \epsilon) v(z, \epsilon)$.
\end{proposition}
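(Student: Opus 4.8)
The plan is to compute both gradients by pushing the differentiation inside the integral via the reparameterization assumption $k_\theta(\cdot|z) = \phi_\theta(z,\cdot)_\# p_k$, which converts expectations over the intractable $x$-space into expectations over the fixed base distribution $p_k(\epsilon)$ (and over $r$, which does not depend on $\theta$). Concretely, for the first identity I would write $\cal{E}(\theta,r) = \mathbb{E}_{p_k(\epsilon)r(z)}[\log q_{\theta,r}(\phi_\theta(z,\epsilon)) - \log p(\phi_\theta(z,\epsilon), y)]$ and differentiate under the integral sign. The $\theta$-dependence now appears in two places: inside the argument $\phi_\theta(z,\epsilon)$ of $\log q_{\theta,r}$ and $\log p$, and in the ``outer'' dependence of the density $q_{\theta,r}$ itself (i.e.\ the $\theta$ appearing as a parameter, not through the argument).

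The key observation — and the step I expect to carry the main content — is that the contribution from the outer $\theta$-dependence of $q_{\theta,r}$ vanishes. This is the familiar ELBO-score-function cancellation: $\mathbb{E}_{q_{\theta,r}(x)}[\nabla_\theta \log q_{\theta,r}(x)] = \nabla_\theta \int q_{\theta,r}(x)\,\mathrm{d}x = \nabla_\theta 1 = 0$, where the expectation over $q_{\theta,r}$ is realized as $\mathbb{E}_{p_k(\epsilon)r(z)}$ of the integrand evaluated at $x=\phi_\theta(z,\epsilon)$. Hence only the chain-rule term through the argument survives, giving $\nabla_\theta \phi_\theta(z,\epsilon) \cdot (\nabla_x \log q_{\theta,r} - \nabla_x \log p(\cdot,y))(\phi_\theta(z,\epsilon))$, which is exactly $\nabla_\theta\phi_\theta \cdot [s_{\theta,r} - s_p]$ by the definition of the scores; taking expectation over $p_k(\epsilon)r(z)$ yields \eqref{eq:theta_estimator}. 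For the second identity, I start from $\delta_r\cal{E}[\theta,r](z) = \mathbb{E}_{k_\theta(X|z)}[\log(q_{\theta,r}(X)/p(X,y))]$ (Proposition~\ref{prop:fv}), rewrite it as $\mathbb{E}_{p_k(\epsilon)}[\log q_{\theta,r}(\phi_\theta(z,\epsilon)) - \log p(\phi_\theta(z,\epsilon),y)]$, and differentiate in $z$. Now there is no ``outer'' dependence on $z$ at all (the first variation is evaluated at a point $z$, and $q_{\theta,r}$ does not depend on that $z$), so the chain rule gives directly $\nabla_z\phi_\theta(z,\epsilon)\cdot[s_{\theta,r}-s_p](z,\epsilon)$, and taking the expectation over $p_k(\epsilon)$ gives \eqref{eq:fv_estimator}.

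The remaining work is justification of differentiation under the integral: one needs a dominated-convergence / Leibniz-rule argument using the stated differentiability of $\phi$ and $k$ (hence of $q_{\theta,r}$), together with the standing integrability hypothesis $\mathbb{E}_{k_\theta(X|\cdot)}|\log(q_{\theta,r}(X)/p(X,y))| < \infty$ inherited from Proposition~\ref{prop:fv}, plus local boundedness of the relevant Jacobians and scores to produce an integrable dominating function on a neighbourhood of each parameter value. I would state these as the regularity conditions implicitly bundled into ``$\phi$ and $k$ differentiable'' and relegate the measure-theoretic verification to the appendix referenced as \Cref{appen:gradient_estimators}. The main obstacle is therefore not the algebra — which is a short chain-rule computation plus the score-function cancellation — but cleanly articulating the integrability/domination hypotheses under which the interchange of $\nabla$ and $\int$ is legitimate, and confirming that the cancellation $\nabla_\theta\int q_{\theta,r} = 0$ is itself an instance of that same interchange applied to the constant function $1$.
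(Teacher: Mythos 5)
Your approach is the same as the paper's --- reparameterize via $k_\theta(\cdot|z) = \phi_\theta(z,\cdot)_\# p_k$, interchange $\nabla$ and $\int$, then apply the chain rule --- and it is correct. The paper only writes out the derivation of \eqref{eq:fv_estimator} and asserts that \eqref{eq:theta_estimator} ``follows similarly.'' You have correctly noticed that the two cases are not in fact symmetric: in $\delta_r\cal{E}[\theta,r](z)$ the variable $z$ appears only through the argument $\phi_\theta(z,\epsilon)$, so a bare chain rule suffices; whereas in $\nabla_\theta\cal{E}(\theta,r)$ the parameter $\theta$ enters both through $\phi_\theta$ and as the index of the density $q_{\theta,r}$, so the naive chain-rule computation produces an extra term $\mathbb{E}_{q_{\theta,r}}[\nabla_\theta \log q_{\theta,r}]$. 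Your score-function cancellation, $\mathbb{E}_{q_{\theta,r}}[\nabla_\theta \log q_{\theta,r}] = \nabla_\theta\int q_{\theta,r}\,\mathrm{d}x = 0$, is exactly the step the paper's ``follows similarly'' is silently relying on, and it is worth spelling out --- this is the ``sticking-the-landing'' observation of \citet{roeder2017sticking}, which the paper cites but does not invoke in the proof. Your closing remark on the domination hypotheses needed for the interchange also matches the paper's hand-wave that ``$\phi_\theta$ and $p_k$ are sufficiently regular''; neither of you makes these explicit, so you are no worse off than the paper there.
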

From \Cref{eq:theta_estimator,eq:fv_estimator}, we can produce Monte Carlo estimators for the gradients, i.e.,
\begin{align}
    \label{eq:theta_mc_estimator} 
	\widehat{\nabla}_\theta \cal{E}(\theta, r) &:= \frac{1}{L}\sum_{l=1}^L\bb{E}_{z\sim r}[(\nabla_z \phi_{\theta} \cdot [s_{\theta,r} - s_p ] )(z, \epsilon_l)  ], \\
 \label{eq:q_mc_estimator} 
        \widehat{\nabla}_z \delta_r \cal{E} [\theta, r] &:= \frac{1}{L}\sum_{l=1}^L(\nabla_z \phi_{\theta} \cdot [s_{\theta,r} - s_p ] )(\cdot , \epsilon_l),
\end{align}
where $\{\epsilon_{l}\}_{l=1}^L \overset{i.i.d.}{\sim} \baseK$. This is an instance of a path-wise Monte-Carlo gradient estimator; a
performant estimator that has been shown empirically to exhibit lower variance than other standard estimators \citep{kingma2014,roeder2017sticking, mohamed2020monte}.

\textit{Adaptive Stepsizes}. One of the complexities of training neural networks is that their gradient is often poorly conditioned. As a result, for certain problems, the gradients computed from \cref{eq:theta_estimator} and \cref{eq:fv_estimator} can often produce unstable algorithms without careful tuning of the step sizes. When this occurs, we utilize preconditioners \citep{staib2019escaping} to avoid this issue. Let $\Precon^\theta: \Theta \mapsto \mathbb{R}^{d_\theta \times d_\theta}$ and $\Precon^r: \z \mapsto \mathbb{R}^{d_z \times d_z}$ be the precondition for components $\theta$ and $r$ respectively, then the resulting preconditioned gradient flow is given by
\begin{align}
    \label{eq:precon_gradient_flow}
    \rm{d}\theta_t = - \Precon^\theta \nabla_\theta \cal{E}_\lambda(\theta_t, r_t)\,\rm{d}t,\enskip \partial_t r_t = \nabla_z \cdot (r_t \Precon^r\nabla_z\delta_r\cal{E}_\lambda[\theta_t,r_t]).
\end{align}
 If $\Precon^\theta$ and $\Precon^r$ are positive definite, then $\cal{E}_\lambda(\theta_t, r_t)$ remains non-increasing, i.e., \Cref{eq:time_evolution} holds.
As before, this Fokker--Planck equation is satisfied by the following Mckean--Vlasov SDE:
\begin{align}
    \label{eq:theta_precon_particle_gradient_flow}
    \rm{d}\theta_t &= - \Precon^\theta (\theta_t) \nabla_\theta \cal{E}_\lambda(\theta_t, r_t)\,\rm{d}t,\\
    \label{eq:q_precon_particle_gradient_flow}
    \rm{d}Z_t &= [\Precon^r(Z_t) b(\theta_t, r_t, Z_t)  + \nabla_z \cdot \Precon^r(Z_t) ]\,\rm{d}t + \sqrt{2\lambda \Precon^r(Z_t)}\rm{d}W_t,
\end{align}
where $(\nabla_z \cdot \Precon^r)_i = \sum_{j=1}^{d_z}\partial_{z_j} [(\Precon^r)_{ij}]$ and $r_t = \rm{Law}(Z_t)$. The equivalence between \Cref{eq:precon_gradient_flow} and \Cref{eq:theta_precon_particle_gradient_flow,eq:q_precon_particle_gradient_flow} is shown in \Cref{app:precon}. A simple example for the preconditioner allows the $\theta_t$ and $Z_t$ to have different time scales; ultimately, this results in different step sizes.  Another more complex example of preconditioner $\Precon^\theta$ is the RMSProp \citep{tieleman2012lecture}, and $\Precon^r$ we utilize a preconditioner inspired by RMSProp (see \Cref{app:precon}). As with other related works (e.g., see \citet{li2016preconditioned}), we found that the additional term $\nabla_z \cdot \Precon^r$ can be omitted in practice: it has little effect but incurs a large computational cost.

\textit{Discretization in both space and time}. To obtain an actionable algorithm, we need to discretize the {PVI} flow in both space and time. For the space discretization, we propose to use a particle approximation for $r_t$, i.e., for a set of particles $\{Z_{t, m}\}_{m=1}^M$ with each satisfying $\rm{Law}(Z_{t, m}) = r_t$, we utilize the approximation $r^M_t := \frac{1}{M}\sum_{m=1}^M \delta_{Z_{t,m}}$ which converges almost surely to $r_t$ in the weak topology as $M\rightarrow \infty$ by the strong law of large numbers and a countable determining class argument (e.g., see \citet[Theorem~1.1]{schmon2017}). This approximation is key to making the intractable tractable, e.g., \Cref{eq:id} is approximated by $q_{\theta, r^M_t} = \frac{1}{M} \sum_{m=1}^M\q{x}{Z_{t,m}}{_\theta}$.
One obtains a particle approximation to the {PVI} flow from the following ODE--SDE: 
\begin{align*}
	\rm{d}\theta^M_t &= - \nabla_\theta\cal{E}_{\lambda}(\theta^M_t, r^M_t)\,\rm{d}t, \enskip
	\forall m \in [M]: \rm{d}Z^M_{t,m} = b(\theta^M_t, r^M_t, Z^M_{t,m})\, \rm{d}t + \sqrt{2\lambda_r}\,\rm{d}W_{t,m},
\end{align*}
where $[M]:= \{1, \ldots, M\}$. 
As for the time discretization, we employ Euler--Maruyama discretization with step-size $h$ which (using an appropriately defined preconditioner) can be decoupled into different stepsizes for $\theta_t$ and $Z_t$  denoted by $h_\theta$ and $h_r$ respectively.

\section{Theoretical analysis}
\label{sec:theoretical_analysis}

We are interested in the behaviour of the \textrm{PVI} flow \eqref{eq:particle_gradient_flow}. However, a key issue in its study is that the drift in PVI flow might lack the necessary continuity properties to analyze using the existing theory. In this section, we instead analyze the related gradient flow of the more regular functional
\begin{align}
    \label{eq:fudge_cale}
    \cal{E}^\fudge_\lambda (\theta, r) &:=  \cal{E}^\fudge(\theta, r) + \reg_\lambda (\theta, r), \enskip \text{ where } \cal{E}^\fudge(\theta, r) = \mathbb{E}_{q_{\theta,r}(x)}\left [\log \frac{q_{\theta, r}(x) + \fudge}{p(x, y)}\right ]
\end{align}
for $\fudge >0$.
A similar modified flow was also explored in \citet{crucinio2022solving} for similar reasons; they found empirically that, at least when using a tamed Euler scheme, setting $\fudge=0$ did not cause problems in practice. Similarly, our experimental results for PVI found $\fudge=0$ did not have issues.
To provide an additional measure of confidence in the reasonableness of this regularization and of the use of this functional as a proxy for $\cal{E}_\lambda$, we establish that the minima of $\cal{E}^\fudge_\lambda$ converge to those of $\cal{E}_\lambda$ in the $\fudge \rightarrow 0$ limit.
%An immediate question whether the study of $\cal{E}^\fudge_\lambda$ as a proxy for $\cal{E}_\lambda$ is reasonable.
%In proposition, we placate this concern by establishing that the minimas of $\cal{E}^\fudge_\lambda$ converge to $\cal{E}_\lambda$ as $\fudge \rightarrow 0$.
%
\begin{proposition}[$\Gamma$-convergence and convergence of minima]
\label{prop:gamma_convergence}
Under the same assumptions as \Cref{prop:free_energy_regularized}, we have that $\cal{E}_\lambda^\fudge$ $\Gamma$-converges to $\cal{E}_\lambda$ as $\gamma \rightarrow 0$ (in the sense of \Cref{def:gamma_convergence}). Moreover, we have as an immediate corollary that
\begin{equation*}
    \inf_{(\theta, r) \in \cal{M}} \cal{E}_\lambda (\theta, r) = \lim_{\fudge \rightarrow 0}\inf_{(\theta, r) \in \cal{M}} \cal{E}^\fudge_\lambda (\theta, r).
\end{equation*}
\end{proposition}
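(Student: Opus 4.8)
The plan is to exploit the fact that $\cal{E}^\fudge_\lambda=\cal{E}^\fudge+\reg_\lambda$ and $\cal{E}_\lambda=\cal{E}+\reg_\lambda$ share the same regulariser, so everything reduces to comparing $\cal{E}^\fudge$ with $\cal{E}$. Two elementary observations drive the argument: since $\log$ is increasing, $\cal{E}^\fudge(\theta,r)\ge\cal{E}(\theta,r)$ for every $(\theta,r)\in\cal{M}$ and every $\fudge>0$, and $\fudge\mapsto\cal{E}^\fudge(\theta,r)$ is non-decreasing; moreover $\cal{E}^\fudge(\theta,r)\downarrow\cal{E}(\theta,r)$ pointwise as $\fudge\downarrow0$. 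The last fact is the only genuinely analytic input, and I establish it last.

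Granting the pointwise convergence, the two halves of $\Gamma$-convergence (in the sense of \Cref{def:gamma_convergence}) are immediate. For the $\liminf$ inequality: if $(\theta_\fudge,r_\fudge)\to(\theta,r)$ in $\cal{M}$, then $\cal{E}^\fudge_\lambda(\theta_\fudge,r_\fudge)=\cal{E}^\fudge(\theta_\fudge,r_\fudge)+\reg_\lambda(\theta_\fudge,r_\fudge)\ge\cal{E}(\theta_\fudge,r_\fudge)+\reg_\lambda(\theta_\fudge,r_\fudge)=\cal{E}_\lambda(\theta_\fudge,r_\fudge)$, whence $\liminf_{\fudge\to0}\cal{E}^\fudge_\lambda(\theta_\fudge,r_\fudge)\ge\liminf_{\fudge\to0}\cal{E}_\lambda(\theta_\fudge,r_\fudge)\ge\cal{E}_\lambda(\theta,r)$ by the lower semicontinuity of $\cal{E}_\lambda$ from \Cref{prop:free_energy_regularized}. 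For the recovery sequence, take the constant sequence $(\theta_\fudge,r_\fudge)\equiv(\theta,r)$: then $\cal{E}^\fudge_\lambda(\theta,r)\to\cal{E}_\lambda(\theta,r)$ by pointwise convergence, so the $\limsup$ inequality holds with equality.

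The corollary on minima then follows without invoking equi-coercivity: on one hand $\cal{E}^\fudge_\lambda\ge\cal{E}_\lambda$ pointwise gives $\inf_{\cal{M}}\cal{E}^\fudge_\lambda\ge\inf_{\cal{M}}\cal{E}_\lambda$ for all $\fudge$; on the other hand, for any $\epsilon>0$ pick a near-minimiser $(\theta^\star,r^\star)$ of $\cal{E}_\lambda$ (which exists by \Cref{prop:free_energy_regularized}) with $\cal{E}_\lambda(\theta^\star,r^\star)\le\inf_{\cal{M}}\cal{E}_\lambda+\epsilon$, so $\limsup_{\fudge\to0}\inf_{\cal{M}}\cal{E}^\fudge_\lambda\le\lim_{\fudge\to0}\cal{E}^\fudge_\lambda(\theta^\star,r^\star)=\cal{E}_\lambda(\theta^\star,r^\star)\le\inf_{\cal{M}}\cal{E}_\lambda+\epsilon$. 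Letting $\epsilon\to0$ and combining gives the claimed identity, the limit existing already because $\fudge\mapsto\inf_{\cal{M}}\cal{E}^\fudge_\lambda$ is non-decreasing. (Alternatively one may cite the fundamental theorem of $\Gamma$-convergence, equi-coercivity being inherited from $\cal{E}^\fudge_\lambda\ge\cal{E}_\lambda$ and the coercivity of $\cal{E}_\lambda$.)

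The main work, and the step I expect to be the principal obstacle, is the pointwise convergence $\cal{E}^\fudge(\theta,r)\to\cal{E}(\theta,r)$; if $\cal{E}(\theta,r)=+\infty$ there is nothing to prove since $\cal{E}^\fudge\ge\cal{E}$, so assume it is finite (it is $>-\infty$ by \Cref{prop:free_energy}). Writing $q:=q_{\theta,r}$, one has $0\le\cal{E}^\fudge(\theta,r)-\cal{E}(\theta,r)=\int_{\z}q(x)\log(1+\fudge/q(x))\,\rm{d}x$, with the integrand decreasing to $0$. I split the domain at the level set $\{q=\fudge\}$. On $\{q\ge\fudge\}$, the bound $\log(1+t)\le t$ gives $\int_{\{q\ge\fudge\}}q\log(1+\fudge/q)\,\rm{d}x\le\fudge\,|\{q\ge\fudge\}|$, and since $\{q\ge\fudge\}\subseteq\{q>s\}$ for $s<\fudge$ while $\int_0^\infty|\{q>s\}|\,\rm{d}s=\int q=1$ (layer cake), we get $\fudge\,|\{q\ge\fudge\}|\le\int_0^\fudge|\{q>s\}|\,\rm{d}s\to0$. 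On $\{0<q<\fudge\}$ with $\fudge<1$, the inequality $1+\fudge/q\le2\fudge/q$ yields $q\log(1+\fudge/q)\le(\log2)\,q+q\log(1/q)$, so that piece is at most $(\log2)\int_{\{q<\fudge\}}q\,\rm{d}x+\int_{\{q<\fudge\}}q\log(1/q)\,\rm{d}x$, both of which vanish as $\fudge\to0$ as tails of convergent integrals ($\int q=1$ and $\int q|\log q|<\infty$, the latter holding under the standing assumptions: $k$ bounded controls $q$ from above while the hypothesis of \Cref{prop:fv} controls $\mathbb{E}_q|\log(q/p)|$). The only delicate points are thus the integrability bookkeeping guaranteeing these tails are finite and the justification that $\fudge\,|\{q\ge\fudge\}|\to0$; the rest of the proof is soft.
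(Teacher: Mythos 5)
Your proof takes essentially the same route as the paper for the $\Gamma$-convergence part (recovery sequence via pointwise convergence with the constant sequence; liminf inequality via lower semicontinuity), but there are two points of genuine divergence, one in your favour and one where you have a gap.

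The liminf step is actually cleaner in your version: rather than re-running the Fatou/reverse-Fatou splitting of \Cref{proof:free_energy} with a varying $\fudge$, you observe the one-line inequality $\cal{E}^\fudge_\lambda\ge\cal{E}_\lambda$ and then invoke l.s.c.\ of $\cal{E}_\lambda$, which is more economical and arguably the argument the paper should have given explicitly. Likewise, your direct $\varepsilon$-argument for the corollary, exploiting the monotonicity of $\fudge\mapsto\inf_{\cal{M}}\cal{E}^\fudge_\lambda$ together with the pointwise limit at a near-minimiser, is elementary and avoids invoking the fundamental theorem of $\Gamma$-convergence together with equi-coercivity (which is what the paper does, citing Dal Maso's Theorem 7.8 and Proposition 7.7). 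Both routes are valid; yours buys self-containedness at the price of not illustrating the standard $\Gamma$-convergence machinery.

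The gap is in the pointwise-convergence lemma, which you correctly identify as the only genuinely analytic step. Your layer-cake bound $\fudge\,|\{q\ge\fudge\}|\le\int_0^\fudge|\{q>s\}|\,\mathrm{d}s\to0$ is fine, as is the first tail $\int_{\{q<\fudge\}}q\,\mathrm{d}x\to0$. The issue is the second tail, $\int_{\{q<\fudge\}}q\log(1/q)\,\mathrm{d}x$: to conclude it vanishes you need $\int_{\{q<1\}}q\log(1/q)\,\mathrm{d}x<\infty$, i.e.\ finiteness of the negative part of the differential entropy of $q=q_{\theta,r}$, and this does \emph{not} follow from boundedness of $k$ together with $\bb{E}_q|\log(q/p)|<\infty$. (Take $q=p$ with $p$ heavy-tailed so that $\int p\log(1/p)=+\infty$: then $\bb{E}_q|\log(q/p)|=0$ but the entropy tail diverges.) Boundedness of $k$ controls $q$ from \emph{above}, hence controls $\int_{\{q\ge1\}}q\log q$, but says nothing about the low-density region that drives $\int_{\{q<1\}}q\log(1/q)$. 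The cure is a mild additional moment/tail assumption on $q_{\theta,r}$ (e.g.\ a finite second moment of $q_{\theta,r}$ together with a Gaussian-type lower bound on $p$, or simply positing $\int q_{\theta,r}|\log q_{\theta,r}|<\infty$). Worth noting that the paper does not close this gap either --- it asserts pointwise convergence without proof --- so you have not missed an ingredient the paper supplies; you have merely surfaced a missing hypothesis that both treatments tacitly assume.
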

The proof uses techniques from $\Gamma$-convergence theory (introduced by De Giorgi, see e.g. \citet{degiorgi1975convergence}; see \citet{dal2012introduction} for a good modern introduction) and can be found in \Cref{proof:g_convergence}. The gradient flow of $\cal{E}^\fudge_\lambda$, which we term $\fudge$-{PVI}, is given by
\newcommand{\fudgepvixdrift}[2]{- \nabla_z\delta_r\cal{E}^\fudge[\theta, r] + \lambda_r \nabla_z \log \refbase}
\begin{align}
\label{eq:fudge_particle_gradient_flow}
\rm{d}\theta^\fudge_t = -\nabla_\theta\cal{E}^\fudge_{\lambda}(\theta^\fudge_t, r^\fudge_t)\,\rm{d}t, \enskip
	\rm{d}Z^\fudge_t = b^\fudge (\theta^\fudge_t, r^\fudge_t, Z^\fudge_t)\, \rm{d}t + \sqrt{2\lambda_r}\,\rm{d}W_t, 
\end{align}
where $r^\fudge_t = \rm{Law}(Z^\fudge_t)$ and $b^\fudge(\theta, r, \cdot) = \fudgepvixdrift{\theta}{r}$.
The derivation follows similarly to that in \Cref{sec:gradient_flow}, and is omitted for brevity. The use of $\fudge > 0$ is crucial for establishing key regularity conditions in our analysis.
We proceed by stating our assumptions.
\begin{assumption}[Regularity of the target $p$, reference distribution $\refbase$, and $\sf{R}_\theta$]
\label{ass:p_bounded_n_lip_model}
We assume that $\log p(y)$ is \textit{bounded}; and $p$, $\refbase$ and $\sf{R}_\theta$ have {Lipschitz} gradients with constants $K_p, K_{\refbase}, K_{\sf{R}_\theta}$ respectively: there exists some $B_p \in \mathbb{R}_{>0}$ such that $\log p(y) \le B_p$;  and for any given $y$
there exists a $K_p \in \bb{R}_{>0}$ such that
$
\|\nabla_x \log p(x, y) - \nabla_x \log p (x', y)\| \le K_p \|x-x'\|
$ for all $x, x' \in \x$ (similarly for $\refbase$ and $\sf{R}_\theta$).
\end{assumption}

\begin{assumption}[Regularity of $\kernel$]
\label{ass:q_l_b}
We assume that the kernel $k$ and its gradient is \textit{bounded} and has $K_k$-{Lipschitz gradient}; i.e., there exist constants $B_\kernel, K_\kernel \in \bb{R}_{>0}$ such that
$
|\q{x}{z}{_\theta}| + \|\nabla_{(\theta,x, z)} \q{x}{z}{_\theta}\|\le  B_\kernel
$, and
$
\|\nabla_{x} \q {x}{z}{_\theta} - \nabla_{x} \q{x'}{z'}{_{\theta'}}\| \le K_\kernel (\|(\theta,x , z)-(\theta', x', z')\|)
$ hold for all $\theta, \theta' \in \Theta$,  $z,z'\in \z$, and $x, x' \in \x$.
\end{assumption}

\begin{assumption}[Regularity of $\coup$ and $\baseK$.]
\label{ass:coupling_and_base} We assume that $\coup$ has Lipschitz gradient and bounded gradient. In other words, there is $a_\coup\in \r_{\ge 0},b_\coup\in \r_{>0}$ such that $\coup$ satisfies
$
 \|\nabla_{(\theta, z)} \coup(z, \epsilon) - \nabla_{(\theta, z)} \coup_{\theta'}(z', \epsilon)\|_F \le (a_\coup\|\epsilon\| + b_\coup) (\|(\theta, z) - (\theta', z')\|)
$
%\adam{Has $|\cdot|_F$---presumably Frobenious norm---been defined anywhere?}
and $
\|\nabla_{(\theta,z)} \coup_\theta (z,\epsilon)\|_F \le  (a_\coup\|\epsilon\| + b_\coup)
$
for all $(\theta, z),(\theta', z') \in \Theta \times \z, \epsilon \in \x$, where $\|\cdot\|_F$ denotes the Frobenius norm. We also assume that $\baseK$ has finite second moments.
\end{assumption}

\Cref{ass:q_l_b,ass:coupling_and_base} are intimately connected; under some regularity conditions, one may imply the other but we shall abstain from this digression for the sake of clarity. \Cref{ass:q_l_b,ass:coupling_and_base} are quite mild and hold for popular kernels such as $\q{x}{z}{_\theta} = \cal{N}(x; \mu_\theta(z), \Sigma)$ under some regularity assumptions on $\mu_\theta$ and $\Sigma$ (which we show in \Cref{appen:on_assumptions}). These assumptions are key to establishing that the drift in \Cref{eq:fudge_particle_gradient_flow} is Lipschitz continuous (see  \Cref{prop:lip_drift} in the App.), from which, we establish the existence and uniqueness of the solutions of \Cref{eq:fudge_particle_gradient_flow}.
\begin{proposition}[Existence and Uniqueness]
    \label{prop:exist_unique}
    Under \Cref{ass:p_bounded_n_lip_model,ass:q_l_b,ass:coupling_and_base}, if $\fudge >0$ and $\bb{E}_{\baseK(\epsilon)} \|s^\fudge_{\theta,r}(z, \epsilon) - s_p(z,\epsilon)\|$ is bounded (with $s^\fudge_{\theta,r}: (z,\epsilon) \mapsto \nabla_x \log (q_{\theta,r}\circ \phi_\theta(z,\epsilon)+\fudge)$); then, given $(\theta_0, r_0) \in \cal{M}$, the solutions to \Cref{eq:fudge_particle_gradient_flow} exists and is unique.
\end{proposition}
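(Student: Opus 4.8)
The plan is to read \Cref{eq:fudge_particle_gradient_flow} as a coupled ODE--McKean--Vlasov SDE and to solve it by the classical Banach fixed-point argument for McKean--Vlasov equations, the only nontrivial ingredient being a global Lipschitz bound on the drift. Write the drift as $D^\fudge(\theta, r, z) := \big(-\nabla_\theta\cal{E}^\fudge_\lambda(\theta, r),\, b^\fudge(\theta, r, z)\big)$. First I would invoke \Cref{prop:lip_drift} (the appendix result referenced just before the statement): under \Cref{ass:p_bounded_n_lip_model,ass:q_l_b,ass:coupling_and_base}, with $\fudge>0$ and $\bb{E}_{\baseK(\epsilon)}\|s^\fudge_{\theta,r}(z,\epsilon) - s_p(z,\epsilon)\|$ bounded, there is a constant $L$ such that $\|D^\fudge(\theta, r, z) - D^\fudge(\theta', r', z')\| \le L(\|\theta-\theta'\| + W_2(r,r') + \|z-z'\|)$ for all arguments. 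I would also record that, because $\fudge>0$, the relevant score $\nabla_x\log(q_{\theta,r}(\cdot)+\fudge) = \nabla_x q_{\theta,r}(\cdot)/(q_{\theta,r}(\cdot)+\fudge)$ is bounded (numerator bounded by \Cref{ass:q_l_b}, denominator $\ge \fudge$), so $D^\fudge$ is moreover bounded; hence linear growth holds automatically and no localisation/non-explosion step is needed.

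Fix $T>0$ and a stochastic basis carrying a $d_z$-Wiener process $W$ and an initial condition $Z_0 \sim r_0$ independent of $W$ (legitimate since $r_0 \in \Ps{\z}$ has finite second moment). On the complete metric space $\cal{C}_T := C([0,T]; \Ps{\z})$ with $\Ps{\z}$ metrised by $W_2$, I would define the map $\Psi$ as follows: given $\mu_\cdot \in \cal{C}_T$, let $\theta_\cdot[\mu]$ be the unique solution of the non-autonomous ODE $\dot\theta_t = -\nabla_\theta\cal{E}^\fudge_\lambda(\theta_t, \mu_t)$ with the prescribed $\theta_0$ (well-posed by Cauchy--Lipschitz, the right-hand side being continuous in $t$ and uniformly Lipschitz in $\theta$); let $Z_\cdot[\mu]$ be the unique strong solution of $\rm{d}Z_t = b^\fudge(\theta_t[\mu], \mu_t, Z_t)\,\rm{d}t + \sqrt{2\lambda_r}\,\rm{d}W_t$ with $Z_0$ as above (well-posed by the standard Lipschitz SDE theory, the drift being measurable in $t$ and uniformly Lipschitz and bounded in $z$); and set $\Psi(\mu)_t := \rm{Law}(Z_t[\mu])$, which lies in $\Ps{\z}$ by boundedness of the drift and Gronwall. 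A fixed point $\mu^\star = \Psi(\mu^\star)$ yields, together with $\theta^\fudge_\cdot := \theta_\cdot[\mu^\star]$ and $Z^\fudge_\cdot := Z_\cdot[\mu^\star]$, a solution of \Cref{eq:fudge_particle_gradient_flow}; conversely any solution produces such a fixed point, so existence and uniqueness for the flow reduce to those of the fixed point.

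It remains to show $\Psi$ is a contraction, for which I would use the weighted metric $d_\beta(\mu^1,\mu^2) := \sup_{t\le T} e^{-\beta t} W_2(\mu^1_t,\mu^2_t)$ with $\beta$ large (equivalently, a short-interval estimate followed by concatenation). For two inputs $\mu^1, \mu^2$, Gronwall applied to $\|\theta^1_t-\theta^2_t\| \le L\int_0^t(\|\theta^1_s-\theta^2_s\| + W_2(\mu^1_s,\mu^2_s))\,\rm{d}s$ (with $\theta^i := \theta[\mu^i]$) bounds $\sup_{t\le T}\|\theta^1_t-\theta^2_t\|$ by $C_T\sup_{t\le T}W_2(\mu^1_t,\mu^2_t)$. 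For the law component I would couple $Z^1 := Z[\mu^1]$ and $Z^2 := Z[\mu^2]$ through the \emph{same} Brownian motion and the \emph{same} $Z_0$, so that the stochastic integrals cancel and $Z^1_t-Z^2_t$ satisfies a pathwise ODE; then $\tfrac{\rm{d}}{\rm{d}t}\bb{E}\|Z^1_t-Z^2_t\|^2 \le 2L\,\bb{E}\big[\|Z^1_t-Z^2_t\|\,(\|\theta^1_t-\theta^2_t\| + W_2(\mu^1_t,\mu^2_t) + \|Z^1_t-Z^2_t\|)\big]$, and since $(Z^1_t,Z^2_t)$ is a coupling of $(\Psi(\mu^1)_t,\Psi(\mu^2)_t)$ we have $W_2(\Psi(\mu^1)_t,\Psi(\mu^2)_t)^2 \le \bb{E}\|Z^1_t-Z^2_t\|^2$; inserting the ODE bound for $\|\theta^1-\theta^2\|$ and applying Gronwall once more gives $d_\beta(\Psi(\mu^1),\Psi(\mu^2)) \le \tfrac{1}{2}\, d_\beta(\mu^1,\mu^2)$ for $\beta$ sufficiently large (resp.\ $T$ sufficiently small). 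Banach's fixed-point theorem then gives a unique solution on $[0,T]$, and, because all constants are uniform in the initial datum, concatenating over $[kT,(k+1)T]$ for $k\in\bb{N}$ extends it uniquely to $\r_{\ge 0}$. The one genuine obstacle is the global Lipschitz bound on $D^\fudge$, namely \Cref{prop:lip_drift}: this is exactly where $\fudge>0$, the boundedness of $\bb{E}_{\baseK(\epsilon)}\|s^\fudge_{\theta,r}-s_p\|$, and the regularity of $\coup$, $\baseK$, $\kernel$, $p$ and $\refbase$ are all used; granting it, the remainder is the textbook McKean--Vlasov fixed-point scheme.
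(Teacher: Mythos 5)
Your proposal is correct and rests on the same pillar as the paper's proof: the global Lipschitz bound on the drift supplied by \Cref{prop:lip_drift}, from which a Banach fixed-point argument for the McKean--Vlasov system is routine. The two write-ups differ only in the technical packaging. The paper (following \citet{lim2023momentum}, and citing \citet{carmona2016lectures} for the frozen-coefficient SDE) runs the fixed point over $C([0,T], \Theta\times\Ps{\z})$, freezing \emph{both} the parameter curve and the flow of laws, and obtains the contraction by iterating $F_T$ $k$ times to exploit the $\frac{(tC(t))^k}{k!}$ decay; you instead run the fixed point over $C([0,T],\Ps{\z})$ alone, solving the $\theta$-ODE internally given a candidate law flow, and obtain the contraction via a Bielecki-type weighted sup metric (or short interval plus concatenation). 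Both of these are standard, interchangeable devices; your version is slightly more modular in isolating the ODE sub-problem, whereas the paper's version avoids introducing the weight. You also add a helpful observation the paper leaves implicit, namely that $\fudge>0$ and \Cref{ass:q_l_b} make $s^\fudge_{\theta,r}$ bounded, so the drift is bounded and no localisation/non-explosion argument is required. No gap.
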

The proof can be found in \Cref{proof:exist_unique}. Under the same assumptions, we can establish an asymptotic propagation of chaos result that justifies the usage of a particle approximation in place of $r^\fudge_t$ in \cref{eq:fudge_particle_gradient_flow}.
\begin{proposition}[Propagation of chaos] Under the same assumptions as \Cref{prop:exist_unique}; we have for any fixed $T$:
\label{prop:chaos}
\begin{equation*}
    \lim_{M\rightarrow \infty}\bb{E}\sup_{t \in [0,T]}\left \|\theta^\gamma_t - \theta_t^{\gamma, M}\right\|^2 + \sf{W}^2_2 \left ((r^\gamma_t)^{\otimes M}, q^{\gamma,M}_t\right ) = 0,
\end{equation*}
where ${(r^\fudge_t)}^{\otimes M} = \prod_{i=1}^M(r^\fudge_t)$; $q^{\gamma,M}_t = \rm{Law}(\{Z^{\gamma,M}_{t,m}\}_{m=1}^M)$;  $\theta_t^{\gamma, M}$ and $Z^{\gamma,M}_{t,m}$ are solutions to
\begin{align*}
\rm{d}\theta^{\fudge,M}_t &= -\nabla_\theta\cal{E}^{\fudge}_{\lambda}(\theta^{\fudge,M}_t, r^{\fudge,M}_t)\,\rm{d}t, \enskip \text{where } r^{\fudge,M}_t= \frac{1}{M} \sum_{m=1}^M\delta_{Z^{\fudge,M}_{t,m}} \\
	\forall m \in [M]:\rm{d}Z^{\fudge,M}_{t,m} &= b^\fudge (\theta^{\fudge,M}_t, r^{\fudge,M}_t, Z^{\fudge,M}_{t,m})\, \rm{d}t + \sqrt{2\lambda_r}\,\rm{d}W_{t,m}.
\end{align*}
\end{proposition}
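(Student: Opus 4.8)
The plan is to run a synchronous-coupling argument closed by a Gr\"onwall estimate, with the statistical fluctuation of the reference particle ensemble handled as a separate law-of-large-numbers bound. Throughout I would read $\sf{W}_2$ on the product space $(\z)^M$ with respect to the averaged metric $\tfrac1M\sum_m\|\cdot\|^2$ (the natural normalisation for empirical measures; without it the limit is not zero) and assume the $Z^{\fudge,M}_{0,m}$ are i.i.d.\ from $r_0$. First I would introduce the coupling: let $\bar Z_{\cdot,m}$, $m\in[M]$, be i.i.d.\ copies of the McKean--Vlasov SDE \Cref{eq:fudge_particle_gradient_flow}, each driven by the \emph{same} Wiener process $W_{\cdot,m}$ as the $m$-th particle and started from $\bar Z_{0,m}=Z^{\fudge,M}_{0,m}$; these exist and are unique by \Cref{prop:exist_unique}, and since $\rm{Law}(\bar Z_{t,m})=r^\fudge_t$ for each $m$ with the copies independent, $(\bar Z_{t,1},\dots,\bar Z_{t,M})\sim(r^\fudge_t)^{\otimes M}$. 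Consequently $\sf{W}_2^2((r^\fudge_t)^{\otimes M},q^{\fudge,M}_t)\le\bb{E}\,\tfrac1M\sum_m\|\bar Z_{t,m}-Z^{\fudge,M}_{t,m}\|^2$, so (as $\sf{W}_2^2$ is non-random) it suffices to bound $\bb{E}\,\tilde V_T$, where $\tilde V_t:=\sup_{u\le t}\|\theta^\fudge_u-\theta^{\fudge,M}_u\|^2+\sup_{u\le t}\tfrac1M\sum_m\|\bar Z_{u,m}-Z^{\fudge,M}_{u,m}\|^2$.

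Next I would assemble the a priori ingredients. Under \Cref{ass:p_bounded_n_lip_model,ass:q_l_b,ass:coupling_and_base} the maps $\nabla_\theta\cal{E}^\fudge_\lambda$ and $b^\fudge$ are globally Lipschitz in $(\theta,r,z)$ for the mixed Euclidean/$\sf{W}_2$ metric --- this is \Cref{prop:lip_drift}, which rests on the pathwise formulae of \Cref{prop:pathwise_estimators} together with the boundedness of $\bb{E}_{\baseK(\epsilon)}\|s^\fudge_{\theta,r}-s_p\|$ posited in \Cref{prop:exist_unique} --- and in particular have at most linear growth in $z$; with the finite second moments of $\baseK$ and $r_0$, a standard Gr\"onwall estimate then yields $\sup_{t\le T}\|\theta^\fudge_t\|<\infty$ and second-moment bounds for $\bar Z_{t,m}$ and for the particles, uniform in $M$ on $[0,T]$. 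Writing $\bar r^M_t:=\tfrac1M\sum_m\delta_{\bar Z_{t,m}}$ and $r^{\fudge,M}_t$ for the particle empirical measure, the triangle inequality together with the index-matching coupling gives $\sf{W}_2^2(r^\fudge_t,r^{\fudge,M}_t)\le2\sf{W}_2^2(r^\fudge_t,\bar r^M_t)+\tfrac2M\sum_m\|\bar Z_{t,m}-Z^{\fudge,M}_{t,m}\|^2$. I would then subtract the two $\theta$-ODEs and the two $Z$-SDEs; crucially the Brownian integrals cancel, since both noise and initial data agree, so no stochastic-integral term survives in the difference. Applying Lipschitzness and Cauchy--Schwarz, summing over $m$ and dividing by $M$, inserting the displayed triangle bound, adding the $\theta$- and $Z$-inequalities, and taking $\sup_{u\le t}$ then $\bb{E}$, I arrive at $\bb{E}\tilde V_t\le C\int_0^t\bb{E}\tilde V_s\,\rm{d}s+CT\,\bb{E}\sup_{u\le T}\sf{W}_2^2(r^\fudge_u,\bar r^M_u)$, so Gr\"onwall gives $\bb{E}\tilde V_T\le CTe^{CT}\,\bb{E}\sup_{u\le T}\sf{W}_2^2(r^\fudge_u,\bar r^M_u)$.

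It then remains to show $\bb{E}\sup_{t\le T}\sf{W}_2^2(r^\fudge_t,\bar r^M_t)\to0$. For fixed $t$, $\bar r^M_t$ is the empirical measure of $M$ i.i.d.\ draws from $r^\fudge_t$, which has finite second moment, so $\sf{W}_2^2(r^\fudge_t,\bar r^M_t)\to0$ almost surely and in $L^1$ by the Wasserstein law of large numbers combined with uniform integrability. To upgrade this to uniformity in $t$, I would use that $t\mapsto r^\fudge_t$ and $t\mapsto\bar r^M_t$ are $\tfrac12$-H\"older in $\sf{W}_2$ on $[0,T]$ with a (random) constant whose expectation is bounded uniformly in $M$ --- this follows from the linear growth of $b^\fudge$ and a Burkholder--Davis--Gundy estimate on the noise, using the moment bounds of the previous step --- and then close the gap with a finite-net argument over $[0,T]$. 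Combined with the Gr\"onwall bound this gives $\bb{E}\tilde V_T\to0$, which dominates (under the above metric convention) the quantity in the statement.

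The step I expect to be the main obstacle is the last one: the pointwise Wasserstein law of large numbers is classical, but promoting it to a bound on the time-supremum, as the statement demands, requires the time-equicontinuity estimates above together with uniform-integrability control, and this --- rather than the coupling or Gr\"onwall mechanics --- is where care is genuinely needed. A secondary subtlety is the bookkeeping: the product Wasserstein distance must be scaled by $1/M$ for the limit to be non-trivial, and the shared $\theta$-discrepancy must be tracked with the right weight since it enters every particle equation. The whole argument rests on the Lipschitz regularity of $b^\fudge$, which is precisely why it is carried out at $\fudge>0$.
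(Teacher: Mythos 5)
Your proposal follows essentially the same route as the paper: a synchronous coupling of the particle system against $M$ i.i.d.\ copies of the McKean--Vlasov process driven by the same Brownian motions (with $\rm{Law}$ equal to $(r^\fudge_t)^{\otimes M}$), the Lipschitz estimate of \Cref{prop:lip_drift}, the triangle-inequality splitting of $\sf{W}_2^2(r^{\fudge,M}_s, r^\fudge_s)$ into an index-matched coupling term plus the fluctuation $\sf{W}_2^2(r^\fudge_s,\hat r^\fudge_s)$ of the i.i.d.\ empirical measure, and a Gr\"onwall closure. Your observation that $\sf{W}_2$ on $(\z)^M$ must be read with the $\tfrac1M\sum_m$ normalisation is also exactly what the paper uses implicitly when it rephrases the claim as $\bb{E}\sup_t\tfrac1M\sum_m\|Z^\fudge_{t,m}-Z^{\fudge,M}_{t,m}\|^2 = o(1)$.

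The one place where you diverge is the treatment of the fluctuation term, and there you have actually set yourself a harder problem than necessary. Because the Brownian integrals cancel under synchronous coupling, the $\sup_{t\le T}$ applied to the difference processes can be absorbed into an integral over $[0,T]$ before any expectation is taken, so after Gr\"onwall the residual error involves only $\int_0^T\bb{E}\,\sf{W}_2^2(r^\fudge_s,\hat r^\fudge_s)\,\rm{d}s$, i.e.\ the \emph{time-integral of the pointwise expectation}. This vanishes as $M\to\infty$ directly from the non-asymptotic rate of \citet{fournier2015rate} applied at each $s$, together with a uniform-in-$s$ second-moment bound on $r^\fudge_s$ over $[0,T]$ (available from the a priori estimates used for \Cref{prop:exist_unique}). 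You instead bound everything by the stronger quantity $\bb{E}\sup_{t\le T}\sf{W}_2^2(r^\fudge_t,\bar r^M_t)$, which genuinely does require the H\"older-in-time equicontinuity, Burkholder--Davis--Gundy moment estimate, and finite-net argument you flag as the ``main obstacle''. None of that machinery is needed: the paper sidesteps it entirely by leaving the fluctuation term under the time integral. Your argument is not wrong, just heavier than required, and the ``main obstacle'' you anticipate is an artefact of that choice rather than intrinsic to the result.

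Two minor bookkeeping points. First, when you write the Gr\"onwall inequality with $\bb{E}\tilde V_s$ under the integral, make sure the supremum inside $\tilde V_s$ runs over $[0,s]$ rather than $[0,T]$, otherwise the inequality is self-referential rather than a Gr\"onwall hypothesis; the paper's displayed bound has the same slip, so worth fixing in both readings. Second, your coupling is stated with $\bar Z_{0,m}=Z^{\fudge,M}_{0,m}$; this matches the intent but should be stated as an assumption on initialisation (i.i.d.\ draws from $r_0$), since the proposition itself does not spell it out.
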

The proof can be found in \Cref{proof:chaos}. Having established the existence and uniqueness of the $\fudge$-PVI flow, as well as an asymptotic justification for using particles, we now provide a numerical evaluation to demonstrate the efficacy of our proposal.

\section{Experiments}
\label{sec:experiments}
In this section, we compare PVI against other semi-implicit VI methods. As described in the \Cref{sec:related_work},
these include unbiased semi-implicit variational inference (UVI) of \citet{titsias2019unbiased}, semi-implicit
variational inference (SVI) of \citet{yin2018semi}, and the score matching approach (SM) of \citet{yu2023semiimplicit}. Through experiments,
we show the benefits of optimizing the mixing distribution; we compare the effectiveness of
PVI against other SIVI methods on a density estimation problem on toy examples; and, we compare against other SIVI methods on posterior estimation tasks for (Bayesian) logistic regression and (Bayesian) neural network.
 The details for reproducing experiments as well as computation information can be found in \Cref{app:exp_details}. The code is available at \url{https://github.com/jenninglim/pvi}.

\subsection{Impact of the mixing distribution}
\label{sec:exp_mixing}
\begin{figure}[h]
    \centering
    \includegraphics[width=0.8\linewidth]{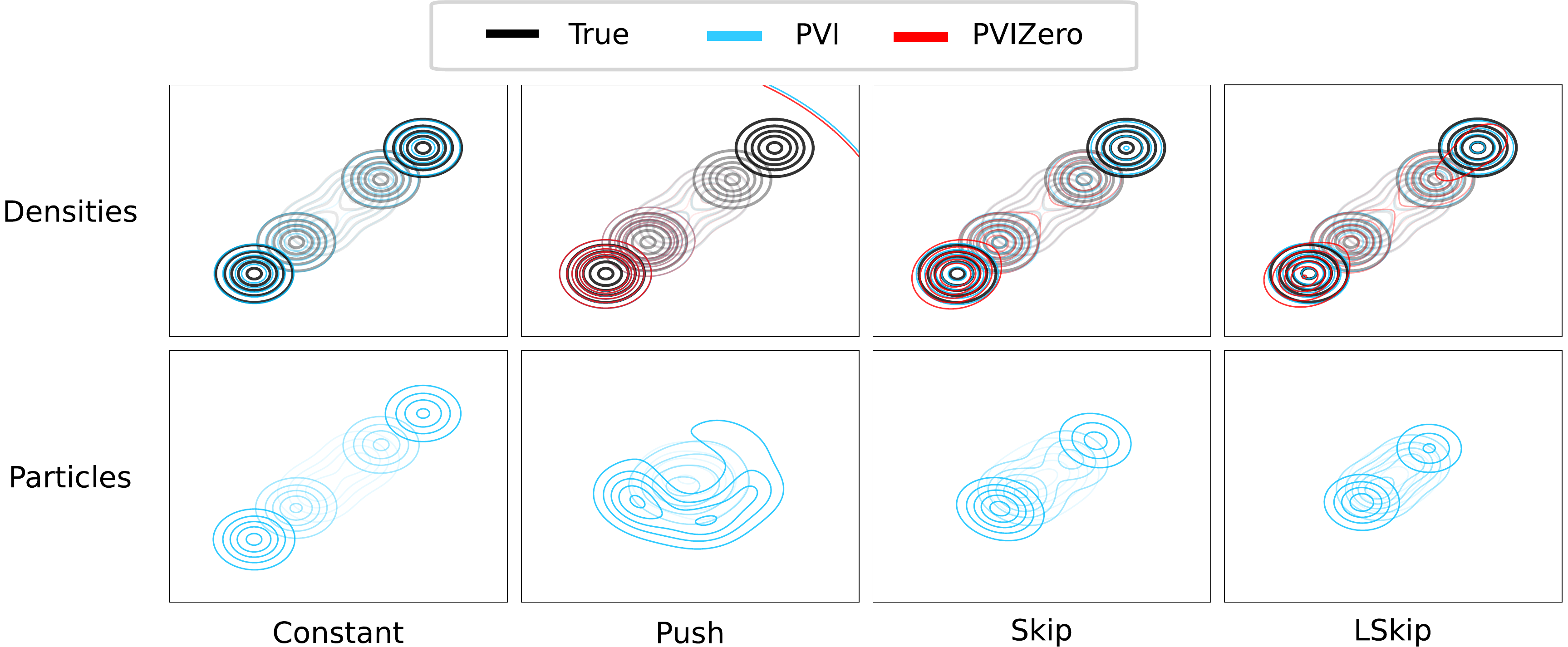}
    \caption{Comparison of PVI and PVIZero on a bimodal mixture of Gaussians for various kernels. The plot shows the density $q_{\theta,r}$ from PVI and PVIZero as well as the KDE plot of $r$ from PVI described by $100$ particles.}
    \label{fig:pid_different_kernels}
    % \vspace{-7mm} TODO: RE-ADD
\end{figure}
From \Cref{prop:summary_fomrulations}, it can be said that ultimately current SIVI methods utilize (directly or indirectly) a fixed mixing distribution whilst PVI does not. We are interested in establishing whether there is any benefit to optimizing the mixing distribution. Intuitively, the mixing distribution can be utilized to express complex properties, such as multimodality, which the neural network kernel $k_\theta$ can then exploit. If the mixing distribution is fixed, this means that the neural network must learn to express these complex properties directly---which can be difficult \citep{salmona2022can}. This intuition turns out to hold, but for the kernel to exploit an expressive mixing distribution, it must be designed well. We illustrate this using the distributions $\frac{1}{2} \cal{N}(\mu, I) + \frac{1}{2} \cal{N}(-\mu, I)$ for $\mu=\{1, 2, 4\}$ and following kernels: the ``Constant'' kernel $\cal{N}(z, I_2)$; ``Push'' kernel $\cal{N}(f_\theta(z), \sigma_\theta^2I_2)$; ``Skip'' kernel $\cal{N}(z + f_\theta(z), \sigma_\theta^2I_2)$; and ``LSkip'' kernel $\cal{N}(Wz + f_\theta(z), \sigma_\theta^2I_2)$ where $W\in\r^{2\times 2}$;. We compare the results from PVI and PVIZero (PVI with $h_r=0$ to result in a fixed $r\approx \cal{N}(0,I_2)$) to emulate PVI with a fixed mixing distribution. As $\mu$ gets larger, the complexity of the kernel (or the mixing distribution) must grow to express this (e.g., see \cite[Corollary 2]{salmona2022can}). 

\Cref{fig:pid_different_kernels} shows the resulting densities and the learnt mixing distribution of PVI and PVIZero for different kernels and various $\mu$. For the constant kernel, PVI can solve this problem by learning a complex mixing distribution to express the multimodality. However, for the push kernel, it can be seen that as $\mu$ gets larger PVI and PVIZero suffer from mode collapse which we suspect is due to the mode-seeking behaviour of using reverse KL and why prior SIVI methods utilized annealing methods (see \citet[Section 4.1]{yu2023semiimplicit}). As a remedy, we utilize a Skip kernel which can be seen to improve both PVI and PVIZero. In particular, both PVI and PVIZero were able to successfully express the bimodality in $\mu=2$; however, PVIZero falls short when $\mu=4$ while PVI can express the multimodality by learning a bimodal mixing distribution. Since Skip requires $d_z=d_x$, we show that LSkip (which removes the requirement) exhibits a similar behaviour to Skip.
\subsection{Density estimation}
\label{sec:exp_density}
\begin{figure*}
    \centering
    % \subfloat[Banana]{\includegraphics[width=0.32\linewidth]{}}
    % \subfloat[Multimodal]{\includegraphics[width=0.32\linewidth]{}}
    % \subfloat[X-shape]{\includegraphics[width=0.32\linewidth]{}}
    \includegraphics[width=0.95\linewidth]{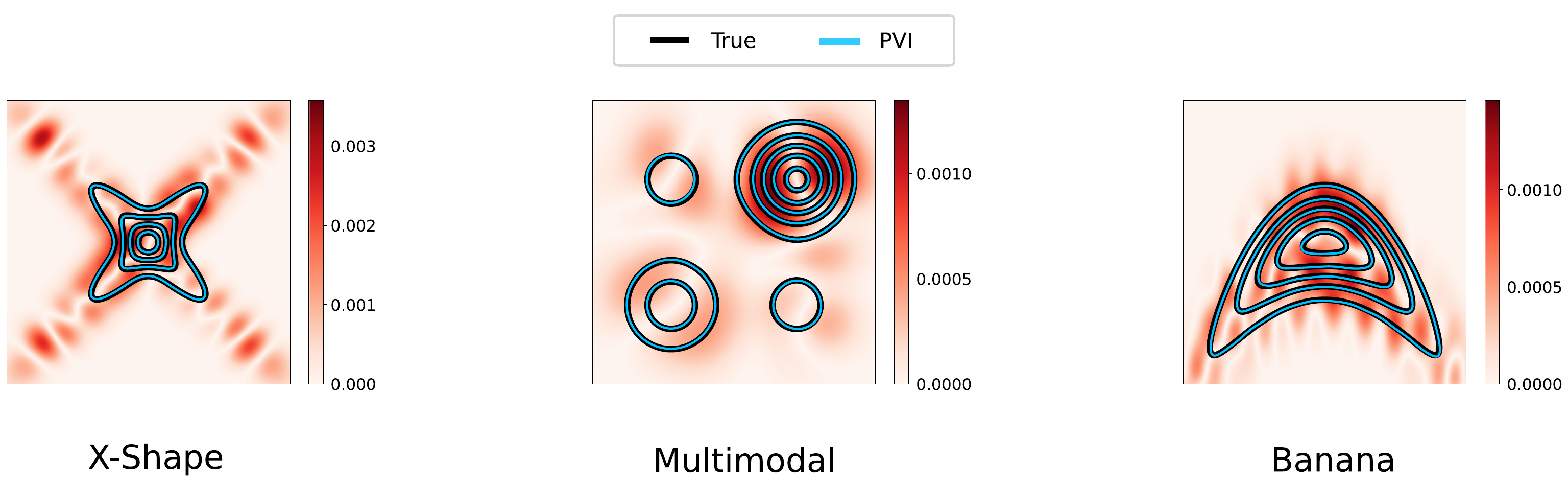}
    \caption{Contour plots of the densities $q_{\theta,r}$ (in \textcolor{deepskyblue}{blue}) against the true densities (in black) for various toy density estimation problems. We also plot the absolute difference in the density of $q_{\theta,r}$ and the true density, i.e.,  $|q_{\theta,r} - p|$.}
    \label{fig:toy}
    \vspace{-5mm}
\end{figure*}
We follow prior works (e.g., \citet{yin2018semi}) and consider three toy examples whose densities are shown in \Cref{fig:toy} (they are given explicitly in \Cref{app:toy}). In this setting, we use the kernel $k_\theta(x|z) = \cal{N}(x; z+ f_\theta(z), \sigma_\theta^2I)$ with $d_z = d_x = 2$ where $f_\theta(z)$ is a neural network whose architecture can be found in \Cref{app:toy}. As a qualitative measure of performance, \Cref{fig:toy} shows the resulting approximating distribution of PVI which can be seen to be a close match to the desired distribution. To compare methods quantitatively, we report the (sliced) Wasserstein distance (computed by POT \citep{flamary2021pot}) and the rejection power of a state-of-the-art two-sample kernel test \citep{biggs2024mmd} between the approximating and true distribution in \Cref{tab:toy-sw-d}. The results reported are the average and standard deviation (from ten independent trials of the respective SIVI algorithms). In each trial, the rejection rate $p$ is computed from $100$ tests and the sliced Wasserstein distance is computed from $10000$ samples with $100$ projections. If the variational approximation matches the distribution, the rejection rate will be at the nominal level of $0.05$. It can be seen that PVI consistently performs better than SIVI across all problems. PVI can achieve a rejection rate near nominal levels across all problems whilst other algorithms can achieve good performances on one but not the other. The details regarding how the Wasserstein distance is calculated and the hyperparameters used can be found in \Cref{app:toy}.
\begin{table*}[ht]
\centering
\begin{tabular}{@{}lcccc@{}}
\toprule
\multicolumn{1}{c}{Problem}    & PVI                                  & UVI                               & SVI                           & SM                          \\ \midrule
\multicolumn{1}{c}{Banana}     & $\bm{0.06}_{0.02}/ {0.17}_{0.01}$   & $\bm{0.07}_{0.02}/\bm{0.11}_{0.03}$ & $0.13_{0.05}/0.31_{0.02}$ & $0.39_{0.24}/0.24_{0.12}$ \\
\multicolumn{1}{c}{Multimodal} & $\bm{0.05}_{0.01}/\bm{0.05}_{0.01}$ & $0.65_{0.23}/0.16_{0.07}$      & $0.13_{0.06}/0.08_{0.02}$   & $0.14_{0.05}/0.10_{0.02}$ \\
\multicolumn{1}{c}{X-Shape}    & $\bm{0.06}_{0.03}/\bm{0.07}_{0.01}$  & $0.23_{0.16}/0.10_{0.04}$      & $0.11_{0.04}/0.12_{0.01}$  & $0.15_{0.11}/0.11_{0.03}$ \\ \bottomrule
\end{tabular}
\caption{This table shows the rejection rate $p$ and average (sliced) Wasserstein distance $w$ for toy density estimation problems. It is written in the format $p/w$ (\textit{lower} is better) with the subscripts showing the standard deviation estimated from $10$ independent runs. We indicate in \textbf{bold} when the rejection rate minus the standard deviation is lower than the nominal level $0.05$, and the algorithm that achieves the lowest Wasserstein score.}
\label{tab:toy-sw-d}
\vspace{-7mm}
\end{table*}
\subsection{Bayesian logistic regression}
\label{sec:lr}
As with others \citep{yin2018semi}, we consider a Bayesian logistic regression problem on the  \textit{waveform} dataset \citep{waveform_database_generator}. The model is expressed as $y \,|\, x, \bm{w} \sim \mathrm{Bernoulli}(\mathrm{Sigmoid}(\langle x, \overline{\bm{w}}\rangle))$ with prior $x \sim \mathcal{N}(0, 0.01^{-1} \times I_{22})$ where $(y, \bm{w}) \in \{0, 1\} \times \mathbb{R}^{21}$ is the response and covariates, and $\overline{\bm{w}}:=[1, \bm{w}]$ is the covariates with appended one for the intercept. The ``ground truth'' is composed of posterior samples generated from running Markov chain Monte Carlo (MCMC) samples in \citet{yin2018semi}. We use the kernel $k_\theta (x|z) = \mathcal{N}(Wz+ f_\theta(z), \exp(\frac{1}{2}[M_\theta +M_\theta^\top]))$ where $\exp$ denotes the matrix exponential which ensures positive definiteness. In \Cref{fig:lr}, we visually compare certain statistics of the distribution obtained from MCMC and distribution obtained from SIVI methods. \Cref{fig:lr_pairs} shows the pair-wise marginal posterior distributions for three weights $x_1, x_2, x_3$ chosen at random from MCMC and SIVI approximations; and in \Cref{fig:lr_corr} we compare the correlation coefficients obtained from MCMC against SIVI methods. It can be seen that PVI obtains an approximation close to MCMC with most other SIVI methods obtaining similar performance levels (with the exception of SM). See \Cref{app:lr_details} for all the implementation details.

\begin{figure*}
    \centering
    % \subfloat[Banana]{\includegraphics[width=0.32\linewidth]{figures/toy/banana.pdf}}
    \subfloat[]{\includegraphics[width=0.95\linewidth]{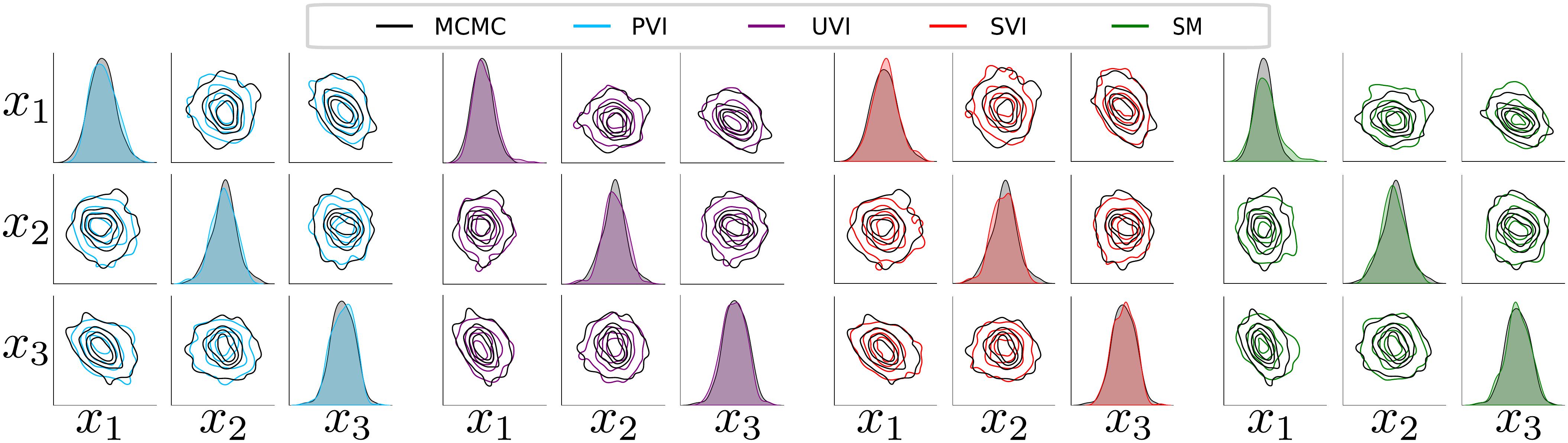}\label{fig:lr_pairs}}\\
    \subfloat[]{\includegraphics[width=1.\linewidth]{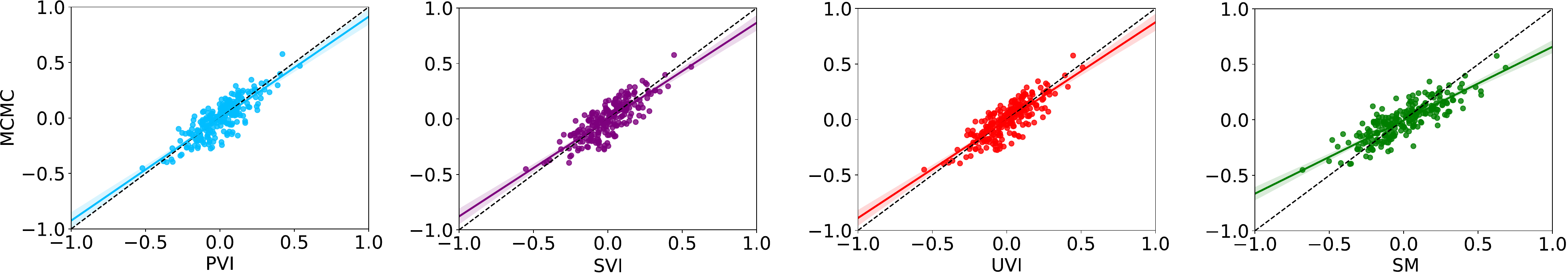}\label{fig:lr_corr}}
    \caption{Comparison between SIVI methods and MCMC on Bayesian logistic regression problem. (a) shows the marginal and pairwise approximations of posterior of the weights $x_1, x_2, x_3$, and (b) shows the scatter plot of the correlation coefficient of MCMC ($y$-axis) vs PVI ($x$-axis).}
    \label{fig:lr}
    \vspace{-5mm}
\end{figure*}
\subsection{Bayesian neural networks}
\label{sec:exp_bnn}
Following prior works (e.g., \citet{yu2023semiimplicit}), we compare our methods with other baselines on sampling the posterior of the Bayesian neural network for regression problems on a range of real-world datasets. We utilize the LSkip kernel $k_\theta(x|z) =\cal{N}(x;W z+f_\theta(z), \sigma^2_\theta(z)I_{d_x})$.  In \Cref{tab:bnn}, we show the root mean squared error on the test set. It can be seen that PVI performs well, or at least comparable, with other SIVI methods across all datasets. The details regarding the model and other parameters can be found \Cref{app:bnn_details}.
\begin{table}[htb]
\centering
\begin{tabular}{@{}lllll@{}}
\toprule
Dataset                                                                           & PVI                & UVI           & SVI           & SM            \\ \midrule
Concrete \citep{misc_concrete_compressive_strength_165}                           & $\bm{0.43}_{0.03}$ & $0.50_{0.03}$ & $0.50_{0.04}$ & $0.92_{0.06}$ \\
Protein \citep{misc_physicochemical_properties_of_protein_tertiary_structure_265} & $\bm{0.87}_{0.05}$ & $0.92_{0.04}$ & $0.92_{0.04}$ & $1.02_{0.03}$ \\
Yacht \citep{misc_yacht_hydrodynamics_243}                                        & $\bm{0.13}_{0.02}$ & $0.18_{0.02}$ & $0.17_{0.02}$ & $0.98_{0.16}$ \\ \bottomrule
\end{tabular}
\caption{Root mean square error  (\textit{lower} is better)  for Bayesian neural networks on the test set for various datasets. Here, we write the results in the form $\mu_{\sigma}$ where $\mu$ is the average RMS and $\sigma$ is its standard error computed over $10$ independent trials. We indicate in \textbf{bold} the lowest score.}
\label{tab:bnn}
% \vspace{-8mm}
\end{table}
\section{Conclusion, Limitations, and Future Work}
\label{sec:ending}
In this work, we frame SIVI as a minimization problem of $\cal{E}_\lambda$, and then,
as a solution, we study its gradient flow. Through discretization, we propose a novel algorithm
called Particle Variational Inference (PVI). Our experiments found that PVI can outperform current
SIVI methods. At a marginal increase in computation cost (see \Cref{app:exp_details}) compared with
prior methods, PVI can consistently perform better (or at least comparably in the worst cases considered) which we attribute
to not imposing a particular form on the mixing distribution. This is a key advantage of PVI compared
to prior methods: by not relying upon push-forward mixing distributions and instead using particles, the mixing
distribution can express arbitrary distributions when the number of particles is sufficiently large. Furthermore, it is not necessary to tune the family of mixing distributions to obtain good results in particular problems.  Theoretically,
we study a related gradient flow of $\cal{E}_\lambda^\fudge$ and establish desirable properties such as
the existence and uniqueness of solutions and propagation of chaos results.

The main limitation
of our work is that the theoretical results only apply to the case where $\fudge>0$;
yet, our experiments were performed with $\gamma=0$ as this is when $\cal{E}_\lambda$
corresponds to the (regularized) evidence lower bound. In future work, one can address these limitations by reducing this gap. Furthermore, we found that certain kernels were
more amenable than others when exploiting an expressive mixing distribution (e.g., the skip kernel). The question of designing these kernels for PVI (or SIVI more generally) is important for future work.
\section*{Acknowledgments}
JNL gratefully acknowledges the funding of the Feuer International Scholarship in Artificial Intelligence.
AMJ acknowledges financial support from the United Kingdom Engineering and Physical Sciences Research Council (EPSRC; grants EP/R034710/1 and EP/T004134/1) and by United Kingdom Research and Innovation
(UKRI) via grant EP/Y014650/1, as part of the ERC Synergy project OCEAN.
\FloatBarrier

\bibliographystyle{apalike}
\bibliography{ref}

%%%%%%%%%%%%%%%%%%%%%%%%%%%%%%%%%%%%%%%%%%%%%%%%%%%%%%%%%%%%%%%%%%%%%%%%%%%%%%%
%%%%%%%%%%%%%%%%%%%%%%%%%%%%%%%%%%%%%%%%%%%%%%%%%%%%%%%%%%%%%%%%%%%%%%%%%%%%%%%
% APPENDIX
%%%%%%%%%%%%%%%%%%%%%%%%%%%%%%%%%%%%%%%%%%%%%%%%%%%%%%%%%%%%%%%%%%%%%%%%%%%%%%%
%%%%%%%%%%%%%%%%%%%%%%%%%%%%%%%%%%%%%%%%%%%%%%%%%%%%%%%%%%%%%%%%%%%%%%%%%%%%%%%
\newpage
\appendix
\onecolumn

\section{Related work}
\label{sec:related_work}
In this section, we outline four areas of related work: semi-implicit variational inference; Euclidean-Wasserstein gradient flows and Wasserstein-gradient flows in VI; mixture models in VI; and finally, the link between SIVI and solving Fredholm equations of the first kind.

At the time of writing, there are three algorithms for SIVI proposed: SVI \citet{yin2018semi}, UVI \citet{titsias2019unbiased}, SM \citet{yu2023semiimplicit}. Concurrently, \citet{cheng2024kernel} extended the SM variant by solving the inner minimax objective and simplified the optimization problem. Each had their parameterization of SID (as discussed in \Cref{sec:on_sivi}), and their proposed optimization method. SVI relies on optimizing a bound of the ELBO which is asymptotically tight. UVI, like our approach, optimizes the ELBO by using gradients-based approaches. However, one of its terms is the score $\nabla_x \log q_{\theta,r}(x)$ which is intractable. The authors proposed using expensive MCMC chains to estimate it; in contrast to PVI, this term is readily available to us. For SM, they propose to optimize the Fisher divergence, however, to deal with the intractabilities the resulting objective is a minimax optimization problem which is difficult to optimize compared to standard minimization problems.

PVI utilizes the Euclidean--Wasserstein geometry. This geometry and associated gradient flows are initially explored in the context of (marginal) maximum likelihood estimation by \citet{kuntz2023particle} and their convergence properties are investigated by \citet{caprio2024error}. In \citet{lim2023momentum}, the authors investigated accelerated gradient variants of the aforementioned gradient flow in Euclidean--Wasserstein geometry. The Wasserstein geometry for gradient flows on probability space has received much attention with many works exploring different functionals (for examples, see \cite{arbel2019maximum,korba2021kernel,li2023sampling}). In the context of variational inference \citet{lambert2022variational} analyzed VI as a Bures--Wasserstein Gradient flow on the space of Gaussian measures.

PVI is reminiscent of mixture distributions which is a consequence of the particle discretization. Mixture models have been studied in prior works as variational distributions \citep{graves2016stochastic,morningstar2021automatic}. In \citet{graves2016stochastic}, the authors extended the parameterization trick to mixture distributions; and \cite{morningstar2021automatic} proposed to utilize mixture models as variational distributions in the framework of \cite{kucukelbir2017automatic}. Although similar, the mixing distribution assists the kernel in expressing complex properties of the true distribution at hand (see \cref{sec:exp_mixing}) which is an interpretation that mixture distribution lacks.

There is an obvious similarity between SIVI and solving Fredholm equations of the first kind. There is considerable literature on solving such problems; see \citet{crucinio2022solving}, which is closest in spirit to the approach of the present paper, and references therein. In fact, writing
 $p(\cdot|y) = \int \tilde{k} (\cdot|z,\theta)r(z) \rm{d}z$.
with $\tilde{k}(\cdot|z,\theta)\equiv k_\theta(\cdot|z)$ makes the connection more explicit: essentially, one seeks to solve a nonstandard Fredholm equation, with the LHS known only up to a normalizing constant, constraining the solution to be in $\mathcal{P}(\mathcal{Z}) \times \{\delta_\theta : \theta \in \Theta\}$. While \citet{crucinio2022solving} develop and analyse a simple Wasserstein gradient flow to address a regularised Fredholm equation, neither the method nor analysis can be applied to the SIVI problem because of this non-trivial constraint. In \citet{yan2024learning}, the authors also solve a Fredholm-type equation but instead using the Wasserstein--Fisher--Rao geometry \citep{kondratyev2016optimal,gallouët2017splitting,chizat2018interpolating,liero2018optimal}.

\section{$\Gamma$-convergence}
The following is one of many essentially equivalent definitions of $\Gamma$-convergence (see \citet{dal2012introduction,braides2002gamma} for comprehensive summaries of $\Gamma$-convergence). We take as definition the following (see \citet[Proposition 8.1]{dal2012introduction}, \citet[Definition 1.5]{ braides2002gamma}):
\begin{definition}[$\Gamma$-convergence]
\label{def:gamma_convergence}
Assume that $\cal{M}$ is a topological space that satisfies the first axiom of countability. Then a sequence $\cal{F}_\fudge : \cal{M} \rightarrow \mathbb{R}$ is said to $\Gamma$-converge to $\cal{F}$ if:
\begin{itemize}
    \item (lim-inf inequality) for every sequence $(\theta_\fudge, r_\fudge)\in \cal{M}$ converging to $(\theta, r) \in \cal{M}$, we have
    \begin{equation*}
    \liminf_{\fudge \rightarrow 0} \cal{F}_\fudge (\theta_\fudge, r_\fudge ) \ge \cal{F}(\theta, r ).
    \end{equation*}
    \item (lim-sup inequality) for any $(\theta, r) \in \cal{M}$, there exists a sequence $(\theta_\fudge, r_\fudge) \in \cal{M}$, known as a recovery sequence, converging to $(\theta, r)$ which satisfies
    \begin{equation*}
    \limsup_{\fudge \rightarrow 0} \cal{F}_\fudge (\theta_\fudge, r_\fudge ) \le \cal{F}(\theta, r).
    \end{equation*}
\end{itemize}
$\Gamma$-convergence corresponds, roughly speaking, to the convergence of the lower semicontinuous envelope of a sequence of functionals and, under mild further regularity conditions such as equicoercivity, is sufficient to ensure the convergence of the sets of minimisers of those functionals to the set of minimisers of the limit functional.
\end{definition}

%\subfile{appendix/g-convergence}

\section{On \Cref{ass:coupling_and_base,ass:q_l_b}}
\label{appen:on_assumptions}

We consider the Gaussian kernel $k_\theta(x|z) = \cal{N}(x;\mu_\theta(z), \Sigma)$, i.e.,
\begin{equation*}
    k_\theta(x|z) = (2\pi)^{-d_x/2} \rm{det}(\Sigma)^{-0.5}\exp \left( -\frac{1}{2} (x - \mu_\theta(z))^T \Sigma^{-1}(x - \mu_\theta(z)) \right),
\end{equation*}
where $\mu_\theta : \r^{d_z} \mapsto \r^{d_x}$; and $\Sigma \in \r^{d_x\times d_x}$ and is positive definite. In this section, we show that \Cref{ass:q_l_b,ass:coupling_and_base} are implied by \Cref{ass:mu_siga_bounded,ass:lip_mu}.
\begin{assumption}
\label{ass:mu_siga_bounded}
 $\mu_\theta$ is bounded and $\Sigma$ is positive definite: there exists $B_\mu \in \bb{R}_{>0}$ such that the following holds for all $(\theta, z) \in \Theta \times \z$:
	\begin{align*}
		\|\nabla_{(\theta, z)}\mu_\theta(z)\|_F &\le B_\mu,
	\end{align*}
and for any $x \in \bb{R}^{d_x} \setminus 0$, $x^T \Sigma x > 0$.
\end{assumption}
\begin{assumption}
\label{ass:lip_mu} $\mu_\theta$ is Lipschitz and has Lipschitz gradient, i.e., there exist constants $K_\mu \in \bb{R}_{>0}$ such that for all $(\theta, z),(\theta', z') \in \Theta \times \z$ the following hold:
	\begin{align*}
		\|\mu_\theta(z) - \mu_{\theta'}(z')\| &\le K_\mu \|(\theta, z) -  (\theta', z')\|, \\
		\|\nabla_{(\theta,z)} \mu_\theta(z) - \nabla_{(\theta,z)} \mu_{\theta'}(z')\|_F &\le K_\mu \|(\theta, z) -  (\theta', z')\|.
	\end{align*}
\end{assumption}
\subsection{$k_\theta$ satisfies \Cref{ass:q_l_b}}
In this section, we show that $k_\theta$ satisfies \Cref{ass:q_l_b}. We first show the boundedness property then the Lipschitz property.

\textbf{Boundedness.} First, we shall show that $k_\theta$ is bounded. Clearly, we have $k_\theta(x|z) \in \left [0, (2\pi)^{-d_x/2}\rm{det}(\Sigma)^{-0.5} \right ]$ hence  $|k_\theta|$ is bounded as a consequence of \Cref{ass:mu_siga_bounded}. Now to show that $||\nabla_{(\theta, x, z)} k_\theta(x|z)||$ is bounded, we have the following
\begin{align*}
\nabla_x k_\theta(x|z) &= - k_\theta(x|z)  \Sigma^{-1}(x- \mu_\theta(z)), \\
\nabla_z k_\theta(x|z) &= \left. \nabla_z  \mu_\theta (z)\nabla_\mu  \cal{N}(x;\mu, \sigma^2I_{d_x})\right.\vert_{\mu_\theta(z)},  \\
\nabla_\theta k_\theta(x|z) &= \left. \nabla_\theta  \mu_\theta (z)\nabla_\mu  \cal{N}(x;\mu, \sigma^2I_{d_x})\right\vert_{\mu_\theta(z)}. 
\end{align*}
Hence, we have
$
\|\nabla_{(x,\mu, \sigma)}  \cal{N}(x;\mu_\theta(z), \sigma^2I_{d_x})\| < \infty,
$
from \Cref{ass:mu_siga_bounded} and using the fact the gradient of a Gaussian density of given covariance w.r.t.\ $\mu$ is uniformly bounded. Thus, we have shown that $k_\theta$ satisfies the boundedness property in \Cref{ass:q_l_b}.

\textbf{Lipschitz.} For $k_\theta$, one choice of coupling function and noise distribution is $\phi_\theta (z, \epsilon) = \Sigma^{\frac{1}{2}} \epsilon + \mu_\theta(z)$ and $\baseK = \cal{N}(0,I_{d_x})$ where $\Sigma^{\frac{1}{2}}$ be the unique symmetric and positive definite matrix with $(\Sigma^{\frac{1}{2}})^2 = \Sigma$ \citep[Theorem 7.2.6]{horn2012matrix}; and the inverse map is $\phi_\theta^{-1} (z, x) = \Sigma^{-\frac{1}{2}} (x - \mu_\theta(z))$. Thus, from the change-of-variables formula, we have
\begin{align*}
	\nabla_{x} k_\theta(x|z) &= \nabla_x [\baseK(\phi^{-1}_\theta(z, x))\rm{det}(\nabla_{x} \phi^{-1}_\theta(z, x))]\\ &= \rm{det}(\nabla_{x} \phi^{-1}_\theta(z, x)) \nabla_x [\baseK\left ( \phi^{-1}_\theta(z, x)\right )]\\ 
	&= \rm{det}(\Sigma^{-1/2}) \Sigma^{-{1/2}} \nabla_x \baseK(\phi^{-1}_\theta(z, x))\\
	&=\tilde{\Sigma}^{-\frac{1}{2}}\nabla_x \baseK(\phi^{-1}_\theta(z, x))
\end{align*}
where $\tilde{\Sigma}^{-\frac{1}{2}} := \rm{det}(\Sigma^{-1/2}) \Sigma^{-{1/2}}
$. Thus, we have
\begin{align*}
	&\|\nabla_{x} k_\theta(x|z) - \nabla_{x} k_{\theta'}(x'|z') \|\\
	&\le \|\tilde{\Sigma}^{-\frac{1}{2}}\nabla_x \baseK(\phi^{-1}_\theta(z, x)) - \tilde{\Sigma}^{-\frac{1}{2}} \nabla_x\baseK(\phi^{-1}_{\theta'}(z', x'))\| \\
	&\le \|\tilde{\Sigma}^{-\frac{1}{2}}\|_F \|\nabla_x\baseK(\phi^{-1}_{\theta}(z, x)) - \nabla_x\baseK(\phi^{-1}_{\theta'}(z', x'))\| \\
	&\le C\|\phi^{-1}_{\theta}(z, x) - \phi^{-1}_{\theta'}(z', x')\|,
\end{align*}
where $C$ is a constant and we use the following facts: $\|\tilde{\Sigma}^{-\frac{1}{2}}\|_F \le |\rm{det}(\Sigma^{-1/2}) |\|\Sigma^{-{1/2}}\|_F < \infty$ following from the fact $\Sigma^{-1/2}$ is positive definite; $\baseK$ is a standard Gaussian density function with Lipschitz gradients; and that the inverse map $\coup^{-1}$ is Lipschitz from \Cref{ass:mu_siga_bounded,ass:lip_mu}:
\begin{align*}
	\|\phi^{-1}_{\theta}(z, x) - \phi^{-1}_{\theta'}(z', x')\| \le \|\Sigma^{-\frac{1}{2}} \|_F\|(x,\mu_\theta(z))-(x',\mu_{\theta'}(z'))\|\le C'\|(x,\theta, z) - (x',\theta', z')\|.
\end{align*}
Hence, we have shown that $k_\theta$ satisfies the Lipschitz property of \Cref{ass:q_l_b}, and so \Cref{ass:q_l_b} holds for $k_\theta$.

\subsection{$k_\theta$ satisfies \Cref{ass:coupling_and_base}}
One can compute the gradient as
$$
\nabla_{(\theta, z)}\phi_\theta (z, \epsilon  ) =  \nabla_{(\theta,z)} \mu_\theta(z),
$$
and hence $\|\nabla_{(\theta, z)}\phi_\theta (z, \epsilon  )\|_F$ is bounded from \Cref{ass:mu_siga_bounded}.
The Lipschitz gradient property is immediate from \Cref{ass:lip_mu}.

$\baseK$ has finite second moments since it is a Gaussian with positive definite covariance matrix.

\section{Proofs in \Cref{sec:on_sivi}}

\begin{proof}[Proof of \Cref{prop:summary_fomrulations}]
\label{proof:summary_fomrulations}
We start by showing $\cal{Q}_{\mtt{YuZ}} = \cal{Q}_{\mtt{TR}}$. To this end, we begin by showing the inclusion $\cal{Q}_{\mtt{YuZ}} \subseteq \cal{Q}_{\mtt{TR}}$, i.e., $\cal{Q}(\cal{K}_{\cal{F}; \coup, \baseK},\cal{R}_{\cal{G}, \baseR}) \subseteq \cal{Q}(\cal{K}_{\cal{F} \circ \cal{G}; \coup, \baseK}, \{\baseR\})$. Let $q \in \cal{Q}(\cal{K}_{\cal{F}; \coup, \baseK},\cal{R}_{\cal{G}, \baseR})$, then there is some $f\in \cal{F}$ and $g\in \cal{G}$ such that $q = q_{k_{f; \phi, \baseK}, g_\#\baseR}$. From straight-forward computation, we have
\begin{equation*}
    q_{k_{f; \phi, \baseK}, g_\#\baseR} = \mathbb{E}_{z \sim g_\#\baseR} [k_{f;\phi, \baseK}(\cdot| z)] \overset{(a)}{=} \mathbb{E}_{z \sim \baseR} [k_{f; \phi, \baseK}(\cdot| g(z))] \in \cal{Q}(\cal{K}_{\cal{F} \circ \cal{G}; \coup, \baseK}, \{\baseR\}),
\end{equation*}
where (a) follows the law of the unconscious statistician, and the last element-of follows from the fact that $k_{f; \phi, \baseK}(\cdot |g(\epsilon))) = \phi(f\circ g(\epsilon), \cdot )_\#\baseK \in \cal{K}_{\cal{F} \circ \cal{G}; \phi, \baseK}$. We can follow the argument above in reverse to obtain the reverse inclusion. Hence, we have obtained as desired.

That $\cal{Q}_{\mtt{YuZ}}= \cal{Q}_{\mtt{YiZ}}$, follows in a similar manner, which we shall outline for completeness: let $q \in \cal{Q}(\cal{K}_{\cal{F}; \coup, \baseK},\cal{R}_{\cal{G}, \baseR})$,
then
\begin{align*}
    q = q_{k_{f; \phi, \baseK}, g_\#\baseR} = \mathbb{E}_{z \sim g_\#\baseR} [k_{f;\phi, \baseK}(\cdot| z)] =\mathbb{E}_{z \sim f\circ g_\# \baseR} [k_{ \phi, \baseK}(\cdot| z)] \in \cal{Q}_{\mtt{YiZ}}.
\end{align*}
One can conclude by applying the same logic in the reverse direction.
\end{proof}

\section{Proofs in \Cref{sec:pvi}}

\subsection{Proof of \Cref{prop:free_energy}}

\begin{proof}[{Proof of \Cref{prop:free_energy}}]
\label{proof:free_energy}
	($\cal{E}$ is lower bounded). Clearly, we have 
	\begin{align*}
		\cal{E}(\theta, r) = \sf{KL}(q_{\theta, r}, p(\cdot|y)) - \log p(y) \ge -\log p(y),
	\end{align*}
	Hence, we have $\cal{E}(\theta, r) \in [-\log p(y), \infty)$ which is lower bounded by our assumption.
	
	($\cal{E}$ is lower semi-continuous). Let $(\theta_n, r_n)_{n \in \mathbb{N}}$ be such that $\lim_{n \rightarrow \infty}r_n = r$ and $\lim_{n \rightarrow \infty}\theta_n = \theta$.
	
	We can split the domain of integration, and write $\cal{E}$ equivalently as 
	\begin{align}
		\cal{E}(\theta,r) &= \int \underbrace{\mathds{1}_{[1,\infty )} \left (\frac{p(x,y)}{q_{\theta, r}(x)} \right ) \log \left (\frac{q_{\theta, r}(x)}{p(x,y)} \right )}_{\le 0}q_{\theta, r}(x) \,\mathrm{d}x \label{eq:lsc_t1}\\
		&+ \int \underbrace{\mathds{1}_{[0, 1 )} \left (\frac{p(x,y)}{q_{\theta, r}(x)} \right ) \log \left (\frac{q_{\theta, r}(x)}{p(x,y)} \right )}_{\ge 0 }q_{\theta, r}(x) \,\mathrm{d}x \label{eq:lsc_t2}
	\end{align}

        We shall focus on the RHS of \eqref{eq:lsc_t1}. 

        %and hence that the associated measures converge weakly by invoking Scheffe{\'e}'s lemma,
        Note that we have the following bound
        \begin{align*}
            &\left |- \mathds{1}_{[1,\infty )}\left (\frac{p(x,y)}{q_{\theta_n, r_n}(x)} \right ) \log \left (\frac{q_{\theta_n, r_n}(x)}{p(x,y)} \right )q_{\theta_n, r_n}(x) \right | \\
            \le  & \max \left \{0, \log \left (\frac{p(x,y)} {q_{\theta_n, r_n}(x)} \right )q_{\theta_n, r_n}(x)\right \} \le \max \{0, C\},
        \end{align*}
        where $C$ is some constant. The last inequality follows from the fact that the evidence is bounded from above and the kernel is bounded.
        We can apply Reverse Fatou's Lemma to obtain
	\begin{align*}
		\limsup_{n\rightarrow \infty} - \int \mathds{1}_{[1,\infty )} \left (\frac{p(x,y)}{q_{\theta_n, r_n}(x)} \right ) \log \left (\frac{q_{\theta_n, r_n}(x)}{p(x,y)} \right )q_{\theta_n, r_n}(x) \,\mathrm{d}x \\
		\le  \int \limsup_{n\rightarrow \infty} \left ( - \mathds{1}_{[1,\infty )} \left ( \frac{p(x,y)}{q_{\theta_n, r_n}(x)} \right ) \log \left (\frac{q_{\theta_n, r_n}(x)}{p(x,y)} \right ) q_{\theta_n, r_n}(x)\right )  \,\mathrm{d}x.
	\end{align*}
Since we have the following relationships
	\begin{align*}
		\limsup_{n\rightarrow \infty }  \mathds{1}_{[1,\infty )} \left (\frac{p(x,y)}{q_{\theta_n, r_n}(x)} \right ) &\le \mathds{1}_{[1,\infty )} \left (\frac{p(x,y)}{q_{\theta, r}(x)} \right ), \\
		 \lim_{n\rightarrow \infty} - \log \left (\frac{p(x,y)}{q_{\theta_n, r_n}(x)} \right )
		&= - \log \left (\frac{p(x,y)}{q_{\theta, r}(x)} \right ),
   \\
 \lim_{n\rightarrow \infty}q_{\theta_n, r_n} &= q_{\theta,r} \text{ pointwise},
	\end{align*}
 where the first line is from u.s.c. of $\mathds{1}_{[1,\infty )}$; the second line from the continuity of $\log$; the final line follows from the bounded kernel $\kernel$ assumption and dominated convergence theorem.
 
	Thus, we have that 
	\begin{align*}
		&\limsup_{n\rightarrow \infty} - \int \mathds{1}_{[1,\infty )} \left (\frac{p(x,y)}{q_{\theta_n, r_n}(x)} \right )  \log \left (\frac{q_{\theta_n, r_n}(x)}{p(x,y)} \right )q_{\theta_n, r_n}(x)\,\mathrm{d}x \\
		\le &- \int \mathds{1}_{[1,\infty )} \left (\frac{p(x,y)}{q_{\theta, r}(x)}\right )  \log \left (\frac{q_{\theta, r}(x)}{p(x,y)} \right )q_{\theta, r}(x)\,\mathrm{d}x,
	\end{align*}
	Using the fact that $\limsup_{n\rightarrow\infty}-x_n= - \liminf_{n\rightarrow\infty}x_n$, we have shown that
	\begin{align}
		&- \liminf_{n\rightarrow \infty} \int \mathds{1}_{[1,\infty )} \left (\frac{p(x,y)}{q_{\theta_n, r_n}(x)} \right )  \log \left (\frac{q_{\theta_n, r_n}(x)}{p(x,y)} \right )q_{\theta_n, r_n}(x)\,\mathrm{d}x \nonumber\\
		\le &- \int \mathds{1}_{[1,\infty )} \left (\frac{p(x,y)}{q_{\theta, r}(x)} \right )  \log \left (\frac{q_{\theta, r}(x)}{p(x,y)} \right )q_{\theta, r}(x)\,\mathrm{d}x.\label{eq:lsc_1}
	\end{align}
	
	Similarly, for the RHS of \eqref{eq:lsc_t2}, using Fatou's Lemma (with varying measure and the set-wise convergence of $q_{\theta_n, r_n}$) and using the l.s.c. of $\mathds{1}_{[0,1)}$, we obtain that
	\begin{align}
		&\liminf_{n\rightarrow \infty} \int \mathds{1}_{[0, 1 )} \left (\frac{p(x,y)}{q_{\theta_n, r_n}(x)} \right ) \log \left (\frac{q_{\theta_n, r_n}(x)}{p(x,y)} \right ) q_{\theta_n, r_n}(x)\,\mathrm{d}x \nonumber \\
		\ge &\int \mathds{1}_{[0, 1 )} \left (\frac{p(x,y)}{q_{\theta, r}(x)} \right ) \log \left (\frac{q_{\theta, r}(x)}{p(x,y)} \right ) q_{\theta, r}(x)\,\mathrm{d}x.\label{eq:lsc_2}
	\end{align}
	Hence, combining the bounds  \eqref{eq:lsc_1} and \eqref{eq:lsc_2}, we have that shown that
	\begin{align*}
		&\liminf_{n\rightarrow \infty}\cal{E}(\theta_n, r_n) = \liminf_{n\rightarrow \infty} \int \log \left (\frac{q_{\theta_n, r_n}(x)}{p(x,y)} \right ) q_{\theta_n, r_n}(x)\,\mathrm{d}x  \\
        \ge &\int \log \left (\frac{q_{\theta, r}(x)}{p(x,y)} \right ) q_{\theta, r}(x)\,\mathrm{d}x \ge \cal{E}(\theta, r).
	\end{align*}
	In other words, $\cal{E}$ is lower semi-continuous.

(Non-Coercivity) To show non-coercivity, we will show that there exists some level set $\{(\theta, r): \cal{E}(\theta, r) \le \beta \}$ that is not compact. We do this by finding a sequence contained in the level set that does not contain a (weakly) converging subsequence.
	
Consider the sequence $\Pi := (\theta_n, r_n)_{n\in \mathbb{N}}$ where $\theta_n=\theta_0$; $\|\theta_0\| <\infty$; $r_n = \delta_{n}$; $\q{x}{z}{_\theta} = \mathcal{N}(x;\theta,I_{d_x})$; and $p(x|y) = \cal{N}(x; 0, I_{d_x})$. Clearly, we have $q_{\theta, r}(x)= \cal{N}(x;\theta, I_{d_x})$ and so
$
\sf{KL}(q_{\theta, r} , p(\cdot|y)) = \frac{1}{2} \|\theta\|^2.
$
Hence,  there is a $\beta < \infty$ such that
\begin{equation*}
    \cal{E}(\theta_n, r_n) = \sf{KL}(q_{\theta_n, r_n} ,p(\cdot|y)) - \log p(y) \le  \frac{1}{2}\|\theta_0\|^2 - \log p(y) \le \beta.
\end{equation*}
 Thus, we have shown that $\Pi \subset \{(\theta, r): \cal{E}(\theta, r) \le \beta\}$. However, since the support of the elements of $\{r_n \in \cal{P}(\z)\}_n$ eventually lies outside a ball of radius $R$ for any $R < \infty$ and hence of any compact set, $\Pi$ is not tight. Hence, Prokhorov's theorem \cite[p. 318]{Shiryaev96} tells us that, as $\Pi$ is not tight, it is not relatively compact. We conclude that, as the level set is not relatively compact, the functional is not-coercive.
%\jl{is this better?}
%\adaml{looks good; I've tweaked it slightly because a non-convergent sequence can contain a convergent subsequence (e.g. odd elements are $\delta_{1/n}$ even elements are $\delta_n$, then the odd subsequence converges to weakly to $\delta_0$ but the sequence does not converge).}
\end{proof}
\subsection{Proof of \Cref{prop:free_energy_regularized}}
\begin{proof}[Proof of \Cref{prop:free_energy_regularized}]
\label{sec:proof_free_energy_regularized}
(Coercivity)
Consider the level set $\{(\theta, r) : \cal{E}_\lambda(\theta,r) \le \beta \}$, which is contained in a relatively compact set. To see this, first note that
\begin{align*}
        \{(\theta, r) : \cal{E}_\lambda(\theta,r) \le \beta \} &\subseteq \{(\theta, r) : -\log p(y) + \sf{R}_\lambda (\theta, r) \le  \beta \} \\
        &\subseteq \{(\theta, r) : \sf{R}_\lambda (\theta, r) \le  \beta + \log p(y)\}
\end{align*} 
By coercivity of $\sf{R}_\lambda$, i.e., the above level set is relatively compact hence $\cal{E}_\lambda$ is coercive.

(Lower semi-continuity) Lower semi-continuity (l.s.c.) follows immediately from the l.s.c.\ of $\cal{E}$ and $\sf{R}_\lambda$.

(Existence of a minimizer) The existence of a minimizer follows from  \citet[Theorem 1.15]{dal2012introduction} utilizing coercivity and l.s.c.\ of $\cal{E}_\lambda$.
\end{proof}

% \begin{align*}
%         \{(\theta, r) : \cal{E}_\lambda(\theta,r) \le \beta \} &\subseteq \{(\theta, r) : -\log p(y) + \lambda_r \sf{KL}(r|\refbase) + \lambda_\theta \|\theta \|_2^2 \le  \beta \} \\
%         &\subseteq \left \{\theta : \|\theta \|^2_2 \le \frac{\beta - \log p(y)}{\lambda_\theta}  \right \} \times 
%         \left \{r :   \sf{KL}(r|\refbase) \le \frac{\beta - \log p(y)}{\lambda_r}
%         \right \}.
% \end{align*}

% By coercivity of $r \mapsto \sf{KL}(r | \refbase)$ \citep[Lemma 1.4.3]{dupuis2011weak} and $\theta \mapsto \|\theta \|_2^2$, their level sets are relatively compact and so their product is also relatively compact. Since the $\{(\theta, r) : \cal{E}_\lambda(\theta,r) \le \beta \}$ is a subset of a relatively compact set, it is also a relatively compact set. Hence, we have shown coercivity of $\cal{E}_\lambda$.
% Lower semi-continuity of $\cal{E}_\lambda$ follows from l.s.c.\ of $\cal{E}$, $r \mapsto \sf{KL}(r | \pi_0)$ \citep[Lemma 1.4.3]{dupuis2011weak} and $\theta \mapsto \|\theta \|_2^2$.

\subsection{Proof of \Cref{prop:fv}}
\label{proof:fv}
Recall from \citet[Definition 7.12]{santambrogio2015optimal},
\begin{definition}[First Variation]
If $p$ is regular for $F$, the first variation of $F:\Ps{\z} \rightarrow \bb{R}$, if it exists, is the element that satisfies
\begin{equation*}
    \lim_{\epsilon \rightarrow 0} \frac{F(p +\epsilon\chi) - F(p)}{\epsilon} = \int \delta_r F[r](z) \chi (\rm{d}z),
\end{equation*}
for any perturbation $\chi = \tilde{p} - p$ with $\tilde{p} \in \Ps{\z} \cap L_c^\infty (\z)$ (see \citet[Notation]{santambrogio2015optimal}). 
\end{definition}
One can decompose the first variation of $\cal{E}_\lambda^\fudge$ as:
\begin{equation*}
    \delta_r \cal{E}^\fudge_\lambda [\theta, r] = \delta_r \cal{E}^\fudge [\theta, r] + \delta_r \sf{R}_\lambda^{\rm{E}} [\theta, r].
\end{equation*}
where $\cal{E}^\fudge: (\theta, r) \mapsto \int  \log \left ( \frac{q_{\theta, r}(x) + \fudge }{p(x,y)} \right ) q_{\theta, r}(\rm{d}x)$. Since $\delta_r \sf{R}_\lambda^{\rm{E}} [\theta, r] = \lambda_r\delta_r\rm{KL}(r|\refbase)$, its first variation follows immediately from standard calculations \citep{ambrosio2005gradient,santambrogio2015optimal}. As for $\delta_r \cal{E}^\fudge$, we have the following proposition:
\newcommand{\condition}[3]{\mathbb{E}_{\q{X}{#3}{_\theta}} \left | \log \left ( \frac{q_{#1, #2}(X) + \fudge }{{p(X, y)}} \right ) \right |}
\newcommand{\fvcalefmod}[3]{\mathbb{E}_{\q{X}{#3}{_\theta}} \left | \log \left ( \frac{q_{#1, #2}(X) + \fudge }{{p(X, y)}} \right ) + \frac{q_{\theta,r}(X)}{q_{\theta,r}(X) + \fudge }\right |}

\begin{proposition}[First Variation of $\cal{E}^\fudge$]
\label{prop:fv_fudge}
Assume that for all $(\theta, r, z) \in \cal{M} \times \z$,
\begin{equation*}
    \condition{\theta}{r}{z} < \infty,
\end{equation*}
then we obtain
\begin{equation*}
    \delta_r \cal{E}^\fudge [\theta, r](z) = \fvcalef{\theta}{r}{z}.
\end{equation*}
\end{proposition}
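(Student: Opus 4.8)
The plan is to compute the first variation directly from the definition, differentiating under the integral sign in the parameter $\epsilon$ that appears in the perturbation $r + \epsilon \chi$, with $\chi = \tilde r - r$ and $\tilde r \in \Ps{\z} \cap L^\infty_c(\z)$. First I would write out $\cal{E}^\fudge(\theta, r + \epsilon\chi)$ using the definition $q_{\theta, r+\epsilon\chi} = q_{\theta,r} + \epsilon q_{\theta,\chi}$ where $q_{\theta,\chi}(x) := \int k_\theta(x|z)\,\chi(\rm{d}z)$ (linearity of $r \mapsto q_{\theta,r}$ in the mixing argument), obtaining
\begin{equation*}
\cal{E}^\fudge(\theta, r+\epsilon\chi) = \int (q_{\theta,r}(x) + \epsilon q_{\theta,\chi}(x)) \log\frac{q_{\theta,r}(x) + \epsilon q_{\theta,\chi}(x) + \fudge}{p(x,y)}\,\rm{d}x.
\end{equation*}
Then I would differentiate the integrand with respect to $\epsilon$ at $\epsilon = 0$: the derivative of $(g + \epsilon h)\log\frac{g+\epsilon h + \fudge}{p}$ at $\epsilon=0$ is $h\log\frac{g+\fudge}{p} + g\cdot\frac{h}{g+\fudge}$, with $g = q_{\theta,r}$, $h = q_{\theta,\chi}$. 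This yields the pointwise-in-$x$ expression $q_{\theta,\chi}(x)\big[\log\frac{q_{\theta,r}(x)+\fudge}{p(x,y)} + \frac{q_{\theta,r}(x)}{q_{\theta,r}(x)+\fudge}\big]$.

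Next I would justify the interchange of limit and integral — i.e., that $\lim_{\epsilon\to 0}\frac{\cal{E}^\fudge(\theta,r+\epsilon\chi) - \cal{E}^\fudge(\theta,r)}{\epsilon}$ equals the integral of the pointwise derivative — via a dominated convergence argument on the difference quotients. Here the hypothesis $\condition{\theta}{r}{z} < \infty$ (for all $z$, hence after integrating against $\chi$ which has bounded compact support, a uniform bound) provides integrable control on the $\log\frac{q+\fudge}{p}$ term; the term $\frac{q_{\theta,r}}{q_{\theta,r}+\fudge}$ is bounded in $[0,1)$ since $\fudge > 0$, so it contributes a bounded-times-$q_{\theta,\chi}$ term which is integrable because $q_{\theta,\chi}$ is a difference of two probability densities in $x$. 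One also needs to control the $O(\epsilon)$ remainder terms uniformly for small $\epsilon$; the key point is that $\fudge > 0$ keeps $\log(g + \epsilon h + \fudge)$ away from the singularity at $0$, so the difference quotient is bounded by an integrable function uniformly in $\epsilon$ near $0$ (using $|q_{\theta,\chi}| \le$ sum of two densities and, for the log, concavity/mean-value estimates together with the finiteness hypothesis).

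Finally, having shown
\begin{equation*}
\lim_{\epsilon\to 0}\frac{\cal{E}^\fudge(\theta, r+\epsilon\chi) - \cal{E}^\fudge(\theta,r)}{\epsilon} = \int q_{\theta,\chi}(x)\left[\log\frac{q_{\theta,r}(x)+\fudge}{p(x,y)} + \frac{q_{\theta,r}(x)}{q_{\theta,r}(x)+\fudge}\right]\rm{d}x,
\end{equation*}
I would use Fubini to move the $z$-integral (hidden in $q_{\theta,\chi}(x) = \int k_\theta(x|z)\chi(\rm{d}z)$) outside, writing the right-hand side as $\int \big(\int k_\theta(x|z)[\cdots]\,\rm{d}x\big)\chi(\rm{d}z) = \int \fvcalef{\theta}{r}{z}\,\chi(\rm{d}z)$, which by the definition of first variation identifies $\delta_r\cal{E}^\fudge[\theta,r](z) = \fvcalef{\theta}{r}{z}$ as claimed. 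The Fubini step is legitimate by the finiteness hypothesis combined with $\fudge>0$ (boundedness of the second term) and the bounded-compact-support of $\chi$. I expect the main obstacle to be the dominated-convergence/uniform-integrability justification for exchanging the $\epsilon$-limit with the $x$-integral — specifically constructing an explicit integrable dominating function for the difference quotients valid for all small $\epsilon$ — whereas the formal computation of the derivative and the Fubini rearrangement are routine.
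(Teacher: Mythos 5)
Your proposal follows essentially the same route as the paper: use linearity $q_{\theta,r+\epsilon\chi}=q_{\theta,r}+\epsilon q_{\theta,\chi}$, compute the pointwise derivative of $(g+\epsilon h)\log\tfrac{g+\epsilon h+\fudge}{p}$ at $\epsilon=0$ (the paper phrases this as a Taylor expansion with an $o(\epsilon)$ remainder, arriving at the same expression), and then invoke Fubini to pull the $z$-integral outside, justified by the finiteness hypothesis. If anything, you are slightly more explicit than the paper about the dominated-convergence argument needed to pass the $\epsilon$-limit through the $x$-integral, which the paper leaves implicit in the $o(\epsilon)$ notation.
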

\begin{proof}
Since $
q_{\theta,r+\epsilon\chi} = \int \q{\cdot}{z}{_\theta}(r+\epsilon\chi)(z)\,\mathrm{d}z  = q_{\theta,r}+\epsilon q_{\theta,\chi},
$ we have
\begin{align*}
\cal{E}^\fudge(\theta, r+\epsilon\chi)=&\int_\mathcal{X} q_{\theta,r+\epsilon\chi}(x)\log \left ( \frac{q_{\theta,r+\epsilon\chi}(x) + \fudge}{p(y,x)}\right ) \,\mathrm{d}x\nonumber \\
=&\int_\mathcal{X} [q_{\theta,r}+\epsilon q_{\theta,\chi}](x)\log ([q_{\theta,r}+\epsilon q_{\theta,\chi} ](x)  + \fudge ) \,\mathrm{d}x\\
&- \int_\mathcal{X}  [q_{\theta,r}+\epsilon q_{\theta,\chi}](x) \log {p(y,x)} \,\mathrm{d}x.
\end{align*}
Applying Taylor's expansion, we obtain $(x+\epsilon y) \log (x+ \epsilon y + \fudge)=x\log (x + \fudge)  + \epsilon y \left (\log (x + \fudge)  + \frac{x}{x+\fudge}\right) + o(\epsilon)$,  we obtain
\begin{align*}
\cal{E}^\fudge (\theta, r+\epsilon\chi) =& \int_{\cal{X}} q_{\theta,r}(x)\log \frac{q_{\theta,r}(x) + \fudge}{p(y,x)}\,\mathrm{d}x\\
&+\epsilon \int_{\cal{X}} q_{\theta,\chi}(x)\left [\log \left ( \frac{q_{\theta,r}(x) + \fudge }{p(y,x)} \right ) + \frac{q_{\theta,r}(x)}{q_{\theta,r}(x) + \fudge } \right ] \,\mathrm{d}x
+ o(\epsilon).
\end{align*}
Hence, we obtain 
\begin{align*}
    \lim_{\epsilon \rightarrow } \frac{\cal{E}^\fudge (\theta, r+\epsilon \chi) - \cal{E}^\fudge(\theta, r)}{\epsilon } &\overset{\phantom{(a)}}{=} \int_{\cal{X}} q_{\theta,\chi}(x) \left [ \log \frac{q_{\theta,r}(x) + \fudge }{p(y,x)} + \frac{q_{\theta,r}(x)}{q_{\theta,r}(x) + \fudge } \right ] \,\mathrm{d}x \\
    &\overset{\phantom{(a)}}{=}\int_{\cal{X}}  \left [\int_\cal{Z} \q{x}{z}{_\theta}\chi(\mathrm{d}z)\right ] \left [\log \left ( \frac{q_{\theta,r}(x) + \fudge }{{p(y,x)}} \right )  + \frac{q_{\theta,r}(x)}{q_{\theta,r}(x) + \fudge } \right ]\,\mathrm{d}x \\
    &\overset{(a)}{=}\int_\mathcal{Z} \left(\int_\mathcal{X} \q{x}{z}{_\theta}\left [\log \left (\frac{q_{\theta,r}(x) + \fudge }{p(y,x)} \right ) + \frac{q_{\theta,r}(x)}{q_{\theta,r}(x) + \fudge } \right ] \, \mathrm{d}x \right)\chi(z)\, \mathrm{d}z.
\end{align*}
One can then identify the desired result. In (a), we appeal to Fubini's theorem for the interchange of integrals whose conditions
\begin{align}
    \label{eq:fv_condition}
    \int_\cal{Z} \int_{\cal{X}}   \left |\q{x}{z}{_\theta} \left [\log \left ( \frac{q_{\theta,r}(x) + \fudge }{{p(y,x)}} \right )  + \frac{q_{\theta,r}(x)}{q_{\theta,r}(x) + \fudge } \right ]\,\chi(z)\right | \mathrm{d}x\mathrm{d}z < \infty,
\end{align}
are satisfied by our assumptions. This can be seen from
\begin{align*}
	\text{LHS \Cref{eq:fv_condition}} \le \int_\cal{Z} \fvcalefmod{\theta}{r}{z}  |\chi(z)|\mathrm{d}z
	\le 0,
\end{align*}
where we use our assumption and the fact that $\chi$ is absolutely integrable.
\end{proof}

\begin{proof}[Proof of \Cref{prop:nonincreasing}]
\label{proof:nonincreasing}
The result can be obtained from direct computation. We begin
\begin{align*}
    \frac{\rm{d}}{\rm{d}t}\cal{E}_\lambda(\theta_t, r_t) = \iprod{\nabla_\theta \cal{E}_\lambda (\theta_t, r_t)}{\dot{\theta}_t} + \int \delta_r \cal{E}_\lambda [\theta_t, r_t]\,\partial_t r_t \,\rm{d}z
\end{align*}
The second term can be simplified
\begin{align*}
    \int \delta_r \cal{E}_\lambda [\theta_t, r_t]\,\partial_t r_t \,\rm{d}z&= \int \delta_r \cal{E}_\lambda [\theta_t, r_t](z)  \nabla_z \cdot (r_t (z)\nabla \delta_r \cal{E}_\lambda[\theta_t, r_t](z)) \,\rm{d}z\\
    &= - \int r_t \|{\nabla_z \delta_r \cal{E}_\lambda}[\theta_t, r_t] (z)\|^2\, \rm{d}z
\end{align*}
where the last inequality follows from integration by parts.
%  \adam{Would it be too unwieldy to put the $z$ arguments into the integrand here, do you think? While its sort of obvious what the $dz$ is integrating it's not very explicit.}.
Hence, the claim holds. If the log-Sobolev inequality holds, then we have
\begin{equation*}
\frac{\rm{d}}{\rm{d}t}\left [ \cal{E}_\lambda(\theta_t, r_t) - \cal{E}_\lambda^* \right ] = -\|\nabla_\cal{M} \cal{E}_\lambda[\theta_t, r_t] \|\le - \frac{1}{\tau}\left [ \cal{E}_\lambda(\theta_t, r_t) - \cal{E}_\lambda^* \right ].
\end{equation*}
From Gr\"{o}nwalls inequality, we obtain the desired result.
\end{proof}

\section{Proofs in \Cref{sec:theoretical_analysis}}

\subsection{Proof of \Cref{prop:gamma_convergence}}\label{app:gamma_convergence_proofs}

\begin{proof}
\label{proof:g_convergence}
We first begin by proving $\Gamma$-convergence directly via its definition, i.e.,
demonstrating that the liminf inequality holds and establishing the existence of a recovery sequence. The latter follows from pointwise convergence:
\begin{equation*}
    \lim_{\gamma\rightarrow 0 }\cal{E}^\fudge_\lambda(\theta, r) = \cal{E}_\lambda(\theta, r),
\end{equation*}
    upon taking $(\theta_\gamma,r_\gamma) = (\theta,r)$ for all $\gamma$.
    
    The liminf inequality can be seen to follow similarly from the l.s.c. argument in \Cref{proof:free_energy}.

    To arrive at the convergence of minima, we invoke \citet[Theorem 7.8]{dal2012introduction} by using the fact that $\cal{E}_\lambda^\fudge$ is equi-coercive in the sense of \citet[Definition 7.6]{dal2012introduction}. To see that $\cal{E}_\lambda^\fudge$ is equi-coercive, note that we have $\cal{E}_\lambda^\fudge \ge \cal{E}_\lambda$ and $\cal{E}_\lambda$ is l.s.c. (from \Cref{prop:free_energy_regularized}), then applying \citet[Proposition 7.7]{dal2012introduction}.
\end{proof}

\subsection{Proof of \Cref{prop:exist_unique}}

\begin{proof}[Proof of \Cref{prop:exist_unique}]
\label{proof:exist_unique}
We can equivalently write the $\fudge$-PVI flow in \Cref{eq:fudge_particle_gradient_flow} as follows
\begin{align}
    \label{eq:fudge_gradient_flow_alternative}
    \rm{d}(\theta_t, Z_t) = \tilde{b}^\fudge (\theta_t, \rm{Law}(Z_t), Z_t) \,\rm{d}t + \sigma \,\rm{d}W_t,
\end{align}
where $\sigma = \begin{bmatrix}
    0 & 0 \\
    0 & \sqrt{2\lambda_r}I_{d_z}
\end{bmatrix}$,
%\adam{This seems to add diffusive noise to the $\theta$ coordinate; do we not need an appropriate matrix to avoid that? Something like $[0 ; \sigma I_{d_z}]$ if $W_t$ lives in $R^{d_z}$?}
and
\begin{align*}
    \tilde{b}^\fudge &: \mathbb{R}^{d_\theta} \times \cal{P(Z)} \times \mathbb{R}^{d_z}  \rightarrow \mathbb{R}^{d_\theta + d_z} :(\theta, r, Z) \mapsto  \begin{bmatrix*}
     -\nabla_\theta\cal{E}_{\lambda}^\fudge(\theta, r)\\
     b^\fudge (\theta, r,  Z)
\end{bmatrix*}.
\end{align*}
In \Cref{app:lipschitz_drift}, we show that under our assumptions the drift $\tilde{b}^\fudge$ is Lipschitz. And under Lipschitz regularity conditions, the proof follows similarly to \citet{lim2023momentum} which we shall outline for completeness.

We begin endowing the space $\Theta \times \Ps{\z}$ with the metric
\begin{equation*}
\sf{d}((\theta,r), (\theta',r')) = \sqrt{\|\theta - \theta'\|^2 + \sf{W}_2^2(q,q')}.
\end{equation*}

Let $\Upsilon \in C([0,T], \Theta \times \Ps{\z})$ and denote $\Upsilon_t = (\vartheta^\Upsilon_t, \nu^\Upsilon_t)$ for it's respective components. Consider the process that substitutes $\Upsilon$ into \eqref{eq:fudge_gradient_flow_alternative}, in place of the $\rm{Law}(Z_t)$ and $\theta_t$,
\begin{align*}
    \rm{d}(\theta^\Upsilon_t, Z^\Upsilon_t) = \tilde{b}^\fudge (\vartheta^\Upsilon_t, \nu^\Upsilon_t, Z^\Upsilon_t) \,\rm{d}t + \sigma \,\rm{d}W_t.
\end{align*}
whose existence and uniqueness of strong solutions are given by \citet{carmona2016lectures}[Thereom 1.2].

Define the operator
\begin{equation*}
F_T:  C([0,T], \Theta \times \Ps{\z}) \rightarrow C([0,T], \Theta \times \Ps{\z}): \Upsilon \rightarrow (t \mapsto (\theta^\Upsilon_t, \rm{Law}(Z_t^\Upsilon)).
\end{equation*}
Let $(\theta_t, Z_t)$ denote a process that is a solution to \eqref{eq:fudge_gradient_flow_alternative} then the function $t \mapsto (\theta_t, \rm{Law}(Z_t))$ is a fixed point of the operator $F_T$. The converse also holds. Thus, it is sufficient to establish the existence and uniqueness of the fixed point of the operator $F_T$. For $\Upsilon = (\vartheta, \nu)$ and $\Upsilon' = (\vartheta', \nu')$
\begin{align*}
    \|\theta^\Upsilon_t - \theta^{\Upsilon'}_t\|^2 + \mathbb{E}[\|Z_t^\Upsilon - Z_t^{\Upsilon'}\|]^2
    &= \mathbb{E}  \left \|\int_0^t \tilde{b}^\gamma(\vartheta_s, \nu_s, Z^\Upsilon_s) - \tilde{b}^\gamma(\vartheta'_s, \nu'_s, Z^{\Upsilon'}_s)\,\rm{d}s \right \|^2 \\
    &\le tC \int_0^t \left [ \bb{E}\|Z^\Upsilon_s - Z^{\Upsilon'}_s \|^2 + \|\vartheta_s - \vartheta'_s\|^2 + \sf{W}^2_1 (\nu_s, \nu'_s)\right ]\, \rm{d}s \\
    &\le C(t) \int_0^t [\sf{W}^2_2(\nu_s, \nu'_s) + \|\vartheta_s - \vartheta'_s\|^2]\,\rm{d}s,
\end{align*}
where we apply Jensen's inequality; $C_r$-inequality; Lipschitz drift of $\tilde{b}^\gamma$; and Gr\"{o}nwall's inequality. The constant $C := 3K_{\tilde{b}}^2$ and $C(t) := tC\exp\left (\frac{1}{2}t^2C\right )$. Thus, we have 
\begin{equation*}
\sf{d}^2(F_T(\Upsilon)_t, F_T(\Upsilon')_t) \le C(t)\int_0^t\sf{d}^2(\Upsilon_s, \Upsilon'_s)\,\rm{d}s.
\end{equation*}
Then, for $F^k_T$ denoting $k$ successive composition of $F_T$, one can inductively show that it satisfies
\begin{equation*}
\sf{d}^2(F^k_T(\Upsilon)_t, F^k_T(\Upsilon')_t) \le \frac{(tC(t))^k}{k!} \sup_{s\in [0, T]} \sf{d}^2(\Upsilon_s,\Upsilon'_s).
\end{equation*}
Taking the supremum, we have
\begin{equation*}
    \sup_{s\in[0,T]}\sf{d}^2(F^k_T(\Upsilon)_s, F^k_T(\Upsilon')_s)\le \frac{(TC(T))^k}{k!}\sup_{s\in[0,T]}\sf{d}^2(\Upsilon_s, \Upsilon'_s).
\end{equation*}
Thus, for a large enough $k$, we have shown that $F^k_T$ is a contraction and from Banach Fixed Point Theorem and the completeness of the space $(C([0,T], \Theta \times \Ps{\z}), \sup \sf{d})$, we have existence and uniqueness.
\end{proof}

\subsection{Proof of \Cref{prop:chaos}}
\label{proof:chaos}
Recall, the process defined in \Cref{prop:chaos}:
\begin{align*}
\rm{d}\theta^{\fudge,M}_t &= -\nabla_\theta\cal{E}^{\fudge}_{\lambda}(\theta^{\fudge,M}_t, r^{\fudge,M}_t)\,\rm{d}t, \enskip \text{where } r^{\fudge,M}_t= \frac{1}{M} \sum_{m=1}^M\delta_{Z^{\fudge,M}_{t,m}} \\
	\forall m \in [M]:\rm{d}Z^{\fudge,M}_{t,m} &= b^\fudge (\theta^{\fudge,M}_t, r^{\fudge,M}_t, Z^{\fudge,M}_{t,m})\, \rm{d}t + \sqrt{2\lambda_r}\,\rm{d}W_{t,m}.
\end{align*}
and $\fudge$-PVI 
(defined in \Cref{eq:fudge_particle_gradient_flow}) augmented with extra particles (in the sense that there are $M$ independent copies of the $Z$-process) to facilitate a synchronous coupling argument
\begin{align*}
\rm{d}\theta^\fudge_t &= -\nabla_\theta\cal{E}^\fudge_{\lambda}(\theta^\fudge_t, \rm{Law}(Z^\fudge_{t,1}))\,\rm{d}t, \enskip \\
	\forall m \in [M]: \rm{d}Z^\fudge_{t, m} &= b^\fudge (\theta^\fudge_{t}, \rm{Law}(Z^\fudge_{t,1}), Z^\fudge_{t,m})\, \rm{d}t + \sqrt{2\lambda_r}\,\rm{d}W_{t,m}.
\end{align*}
\begin{proof}[Proof of \Cref{prop:chaos}]

This is equivalent to proving that
\begin{align}
    \label{eq:chaos_desired}
    \underbrace{\bb{E} \sup_{t\in[0,T]} \|\theta^\fudge_t - \theta^{\fudge,M}_t\|^2}_{(a)} +\underbrace{\bb{E} \sup_{t\in[0,T]} \left \{\frac{1}{M} \sum_{m=1}^M \|Z^\fudge_{t,m} - Z^{\fudge,M}_{t,m}\|^2\right \}}_{(b)} = o(1).
\end{align}
We shall treat the two terms individually. We begin with (a) in \eqref{eq:chaos_desired}, where Jensen's inequality gives:
\begin{align}
    \text{(a) in }\eqref{eq:chaos_desired} &= \mathbb{E} \sup_{t \in [0,T]}\left \| \int_0^t \left [ \nabla_\theta\cal{E}^{\fudge}_{\lambda}(\theta^{\fudge,M}_s, r^{\fudge,M}_s) - \nabla_\theta\cal{E}^{\fudge}_{\lambda}(\theta^{\fudge}_s, r^{\fudge}_s)\right ] \,\rm{d}s\right \|^2 \nonumber \\
    &\le T\mathbb{E}\int_0^T \left \| \nabla_\theta\cal{E}^{\fudge}_{\lambda}(\theta^{\fudge,M}_t, r^{\fudge,M}_s) - \nabla_\theta\cal{E}^{\fudge}_{\lambda}(\theta^{\fudge}_s, r^{\fudge}_s)\right \|^2 \,\rm{d}t \nonumber\\
    &\le C_\theta \int_0^T\bb{E}\|\theta^\fudge_s - \theta^{\fudge,M}_s\|^2 + \bb{E}\sf{W}_2^2(r^{\fudge,M}_s,r^{\fudge}_s)\, \rm{d}t. \label{eq:chaos_bound_desired}
\end{align}
where $C_\theta:= 2 T K_{\cal{E}^\fudge_\lambda}^2$, we apply Cauchy--Schwarz; and the $C_r$ inequality with the Lipschitz continuity of $\nabla_\theta\cal{E}_\lambda^\fudge$ from \Cref{prop:lip_theta_grad_energy}. Using the $C_r$ inequality again, together with the triangle inequality:
\begin{align}
   	\bb{E}\sf{W}_2^2(r^{\fudge,M}_s,r^{\fudge}_s) &\le 2 \bb{E}\sf{W}_2^2\left (r^{\fudge}_s, \hat{r}_{s}^\fudge \right ) + 2\bb{E}\sf{W}_2^2(r^{\fudge,M}_s,\hat{r}^\fudge_{s}) \nonumber \\
   &\le o(1) + \frac{2}{M}\sum_{m=1}^M \bb{E}\|Z^\gamma_{s,m} - Z^{\gamma,M}_{s,m}\|^2, \label{eq:chaos_w2_bound}
\end{align}
where $\hat{r}^\fudge_{s} = \frac{1}{M}\sum_{m=1}^M \delta_{Z^\gamma_{s,m}}$ and we use \citet{fournier2015rate}. Note that we also have
\begin{align}
   \|\theta^\fudge_s - \theta^{\fudge,M}_s\|^2 &\le \sup_{s'\in [0,T]}\|\theta^\fudge_{s'} - \theta^{\fudge,M}_{s'}\|^2, \label{eq:chaos_diff_theta_sup_bound}\\
   \frac{1}{M}\sum_{m=1}^M \|Z^\gamma_{s,m} - Z^{\gamma,M}_{s,m}\|^2 &\le \sup_{s' \in [0,T]}  \frac{1}{M}\sum_{m=1}^M \|Z^\gamma_{s',m} - Z^{\gamma,M}_{s',m}\|^2. \label{eq:chaos_diff_z_sup_bound}
\end{align}
Applying \Cref{eq:chaos_w2_bound} in \Cref{eq:chaos_bound_desired} then \Cref{eq:chaos_diff_theta_sup_bound,eq:chaos_diff_z_sup_bound}, we obtain
\begin{equation}
\label{eq:chaos_t1}
    (a) \le 2C_\theta \int_0^T \bb{E}\sup_{s\in [0,T]}\|\theta^\fudge_s - \theta^{\fudge,M}_s\|^2 + \bb{E}\sup_{s \in [0,T]} \frac{1}{M}\sum_{m=1}^M \|Z^\gamma_{s,m} - Z^{\gamma,M}_{s,m}\|^2 \,\rm{d}s + o(1).
\end{equation}

Similarly, for (b) in \eqref{eq:chaos_desired}, we have
\begin{align*}
(b) &= \bb{E}  \sup_{t \in [0,T]} \frac{1}{M}\sum_{m=1}^M\left \|\int_0^t b^\fudge (\theta^{\fudge,M}_s, r^{\fudge,M}_s, Z^{\fudge,M}_{s,m}) - b^\fudge (\theta^\fudge_{s}, \rm{Law}(Z^\fudge_{s,1}), Z^\fudge_{s,m}) \,\rm{d}s\right \|^2  \\
&\le C_z \bb{E}\int_0^T \|\theta^{\fudge,M}_s - \theta^\fudge_{s}\|^2 + \sf{W}_2^2(r^{\fudge,M}_s, \rm{Law}(Z^\fudge_{s,1})) + \frac{1}{M}\sum_{m=1}^M \|Z^\fudge_{s,m} - Z^{\fudge,M}_{s,m}\|^2\,\rm{d}s,
\end{align*}
where $C_z := 3K_{b^\fudge}^2$and, as before, we apply Cauchy--Schwarz, Lipschitz and $C_r$ inequalities. Then from \Cref{eq:chaos_w2_bound,eq:chaos_diff_theta_sup_bound,eq:chaos_diff_z_sup_bound},  we obtain
\begin{align}
(b) \le C \bb{E}\int_0^T \sup_{s \in [0,T]}\|\theta^{\fudge,M}_s - \theta^\fudge_{s}\|^2 + \sup_{s \in [0,T]} \frac{1}{M} \sum_{m=1}^M \|Z^\fudge_{s,m} - Z^{\fudge,M}_{s,m}\|^2\, + o(1) \rm{d}s. \label{eq:chaos_t2}
\end{align}
Combining \Cref{eq:chaos_t1,eq:chaos_t2} and applying Gr\"{o}nwall's inequality, we obtain
\begin{equation*}
\bb{E} \sup_{t\in[0,T]} \|\theta^\fudge_t - \theta^{\fudge,M}_t\|^2 +\bb{E} \sup_{t\in[0,T]} \left \{\frac{1}{M} \sum_{m=1}^M \|Z^\fudge_{t,m} - Z^{\fudge,M}_{t,m}\|^2\right \} = o(1).
\end{equation*}
Taking the limit, we have the desired result.
\end{proof}

 \subsection{The drift in \Cref{eq:fudge_particle_gradient_flow} is Lipschitz}

In this section, we show that the drift in the $\gamma$-{PVI} flow in \Cref{eq:fudge_particle_gradient_flow} is Lipschitz. 

\label{app:lipschitz_drift}
\begin{proposition}
\label{prop:lip_drift}
Under the same assumptions as \Cref{prop:exist_unique}; the drift $\tilde{b}(A, r)$ is Lipschitz, i.e., there exists a constant $K_{\tilde{b}} \in \mathbb{R}_{>0}$ such that:
\begin{equation*}
    \|\tilde{b}^\gamma (\theta, r, z) - \tilde{b}^\gamma(\theta', r', z') \| \le K_{\tilde{b}} (\left \|(\theta, z) -(\theta', z')\right  \| + \sf{W}_2(r, r')), \enskip  \forall \theta,\theta' \in \Theta, z,z' \in \cal{Z}, r,r' \in \cal{P(Z)}.
\end{equation*}
\end{proposition}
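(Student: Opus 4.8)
The plan is to establish Lipschitz continuity block‑by‑block. Write $\tilde b^\fudge=(-\nabla_\theta\cal{E}^\fudge_\lambda,\,b^\fudge)$ with $\cal{E}^\fudge_\lambda=\cal{E}^\fudge+\reg^{\rm{E}}_\lambda$, so that $\nabla_\theta\cal{E}^\fudge_\lambda=\nabla_\theta\cal{E}^\fudge+\lambda_\theta\nabla_\theta\sf{R}_\theta$ and $b^\fudge=-\nabla_z\delta_r\cal{E}^\fudge[\theta,r]+\lambda_r\nabla_z\log\refbase$. Two of these four terms are immediate: $\lambda_\theta\nabla_\theta\sf{R}_\theta$ is $\lambda_\theta K_{\sf{R}_\theta}$‑Lipschitz and $\lambda_r\nabla_z\log\refbase$ is $\lambda_r K_{\refbase}$‑Lipschitz, both by \Cref{ass:p_bounded_n_lip_model}. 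Since the right‑hand side of the claimed bound controls each argument, it suffices to show $(\theta,r)\mapsto\nabla_\theta\cal{E}^\fudge(\theta,r)$ and $(\theta,r,z)\mapsto\nabla_z\delta_r\cal{E}^\fudge[\theta,r](z)$ are Lipschitz (jointly, with $\Ps{\z}$ carrying $\sf{W}_2$) and then add the four moduli to obtain $K_{\tilde b}$.

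The tool is the pathwise representation (the $\fudge$‑analogue of \Cref{prop:pathwise_estimators}): both quantities are $\baseK(\epsilon)$‑expectations — with an additional $z\sim r$ average in the $\theta$‑gradient — of an integrand of the form
\[
I_{\theta,r}(z,\epsilon)=\big(\nabla_{\!\bullet}\coup_\theta\cdot[\,s^\fudge_{\theta,r}-s_p+\Gamma^\fudge_{\theta,r}\,]\big)(z,\epsilon),\qquad \Gamma^\fudge_{\theta,r}(x):=\frac{\fudge\,\nabla_x q_{\theta,r}(x)}{(q_{\theta,r}(x)+\fudge)^2},
\]
where $\nabla_{\!\bullet}\coup_\theta$ is $\nabla_\theta\coup_\theta$ or $\nabla_z\coup_\theta$ as appropriate, $s^\fudge_{\theta,r}(z,\epsilon)=\nabla_x\log(q_{\theta,r}\circ\coup_\theta(z,\epsilon)+\fudge)$, and $s_p(z,\epsilon)=\nabla_x\log p(\coup_\theta(z,\epsilon),y)$. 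The heart of the proof is the lemma: for each fixed $\epsilon$, the map $(\theta,r,z)\mapsto I_{\theta,r}(z,\epsilon)$ is Lipschitz (w.r.t.\ $\|(\theta,z)-(\theta',z')\|+\sf{W}_2(r,r')$) with modulus $\le C(1+\|\epsilon\|^2)$ and is bounded by $C(1+\|\epsilon\|)$, for a constant $C$ depending only on $\fudge$ and the constants in \Cref{ass:p_bounded_n_lip_model,ass:q_l_b,ass:coupling_and_base}; this envelope is $\baseK$‑integrable because $\baseK$ has finite second moments.

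To prove the lemma I would assemble four ingredients and combine them via $\|Mv-M'v'\|\le\|M\|\,\|v-v'\|+\|M-M'\|\,\|v'\|$. (i) \Cref{ass:coupling_and_base} gives that $\nabla_\theta\coup_\theta,\nabla_z\coup_\theta$ are bounded by $a_\coup\|\epsilon\|+b_\coup$ and Lipschitz in $(\theta,z)$ with the same modulus. (ii) $s_p$ is Lipschitz in $(\theta,z)$ with modulus $K_p(a_\coup\|\epsilon\|+b_\coup)$, by composing the $K_p$‑Lipschitz $\nabla_x\log p(\cdot,y)$ with $\coup_\theta$. (iii) The terms $s^\fudge_{\theta,r}$ and $\Gamma^\fudge_{\theta,r}$ are where $\fudge>0$ is indispensable: writing $s^\fudge_{\theta,r}(z,\epsilon)=h_{\theta,r}(\coup_\theta(z,\epsilon))$ with $h_{\theta,r}=\nabla_x q_{\theta,r}/(q_{\theta,r}+\fudge)$ and $q_{\theta,r}=\int k_\theta(\cdot|z)r(\mathrm{d}z)$, \Cref{ass:q_l_b} yields $|q_{\theta,r}|,\|\nabla_x q_{\theta,r}\|\le B_\kernel$, so the denominator is bounded below by $\fudge$ and $\|h_{\theta,r}\|\le B_\kernel/\fudge$ uniformly; moreover $q_{\theta,r}$ and $\nabla_x q_{\theta,r}$ are Lipschitz in $x$ (constants $B_\kernel,K_\kernel$), in $\theta$ (same constants, after integrating in $z$), and in $r$ w.r.t.\ $\sf{W}_1\le\sf{W}_2$ (since $z\mapsto k_\theta(x|z),\nabla_x k_\theta(x|z)$ are $B_\kernel$‑, $K_\kernel$‑Lipschitz — use an optimal coupling of $r,r'$); a quotient estimate exploiting the lower bound $\fudge$ then makes $h_{\theta,r}$, hence $s^\fudge_{\theta,r}$ and (identically) $\Gamma^\fudge_{\theta,r}$, Lipschitz with a constant depending only on $B_\kernel,K_\kernel,\fudge$, picking up a factor $a_\coup\|\epsilon\|+b_\coup+1$ after composition with $\coup_\theta$. (iv) Boundedness of $v:=s^\fudge_{\theta,r}-s_p+\Gamma^\fudge_{\theta,r}$: the $s^\fudge$ and $\Gamma^\fudge$ parts are bounded by $2B_\kernel/\fudge$, and $\|s_p\|$ is controlled via the affine growth implied by the Lipschitz gradient of $p$; together with the hypothesis of \Cref{prop:exist_unique} that $\bb{E}_{\baseK(\epsilon)}\|s^\fudge_{\theta,r}(z,\epsilon)-s_p(z,\epsilon)\|$ is bounded, this furnishes the needed $\baseK$‑integrable envelope on $\|v\|$. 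Plugging these into the product estimate (with $M=\nabla_{\!\bullet}\coup_\theta$) gives the lemma.

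Finally, to pass from the lemma to the statement: for $\nabla_z\delta_r\cal{E}^\fudge[\theta,r](z)$ one takes $\baseK$‑expectations of the pointwise Lipschitz bound, with finite constant $C\,\bb{E}_{\baseK}[1+\|\epsilon\|^2]$. For $\nabla_\theta\cal{E}^\fudge(\theta,r)$ one additionally couples $r$ and $r'$ by an optimal $\sf{W}_2$‑plan $\pi$, writes the difference as $\bb{E}_{\pi(Z,Z')}\bb{E}_{\baseK(\epsilon)}[I_{\theta,r}(Z,\epsilon)-I_{\theta',r'}(Z',\epsilon)]$, applies the lemma inside, and uses $\bb{E}_\pi\|Z-Z'\|\le\sf{W}_2(r,r')$; because the $\baseK$‑integrated modulus is independent of $Z$, the resulting constant is uniform over $\Ps{\z}$. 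Summing the four moduli gives $K_{\tilde b}$. The main obstacle is exactly ingredients (iii)–(iv): controlling the score term $s^\fudge_{\theta,r}-s_p$ uniformly in $(\theta,r)$ — at $\fudge=0$ the quantity $\nabla_x\log q_{\theta,r}$ can blow up in the tails, and it is precisely the lower bound $q_{\theta,r}+\fudge\ge\fudge$ afforded by $\fudge>0$ that turns this delicate estimate into a routine quotient bound, which is why the theory is stated for $\fudge>0$.
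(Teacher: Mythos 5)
Your proposal is correct and follows essentially the same route as the paper's proof: decompose $\tilde b^\fudge$ into $\nabla_\theta\cal{E}^\fudge+\lambda_\theta\nabla_\theta\sf{R}_\theta$ and $-\nabla_z\delta_r\cal{E}^\fudge+\lambda_r\nabla_z\log\refbase$, dispatch the regularizer pieces by \Cref{ass:p_bounded_n_lip_model}, and reduce the hard terms to Lipschitz estimates on the pathwise integrand $\nabla_{\!\bullet}\coup_\theta\cdot[s^\fudge_{\theta,r}-s_p+\Gamma^\fudge_{\theta,r}]$ using the product rule $\|Mv-M'v'\|\le\|M\|\,\|v-v'\|+\|M-M'\|\,\|v'\|$, the lower bound $q_{\theta,r}+\fudge\ge\fudge$ in the quotient estimates, the boundedness hypothesis on the score difference, and $\sf{W}_1\le\sf{W}_2$ together with $\baseK$'s finite second moments to integrate the $(a_\coup\|\epsilon\|+b_\coup)$-type envelopes. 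The only cosmetic difference is organizational: the paper spreads these ingredients over a chain of small lemmas (\Cref{prop:lip_theta_grad_energy}, \Cref{prop:lip_b}, \Cref{prop:lip_fudge}, \Cref{prop:d_lipschitz}, \Cref{prop:lip_s}, \Cref{prop:lip_grad_q}, \Cref{prop:lip_q}) and controls the $r$-dependence via the Kantorovich dual of $\sf{W}_1$ on Lipschitz test functions, whereas you package the same content into one master lemma on the integrand and couple $r,r'$ by an optimal plan — equivalent devices yielding the same constants.
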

\begin{proof}
\label{proof:lip_drift}
From the definition and using the concavity of $\sqrt{\cdot}$ (which ensures that for any $a, b \geq 0$, $\sqrt{a+ b} \le \sqrt{a} + \sqrt{b}$), we obtain 
    \begin{align*}
        \|\tilde{b}^\gamma(\theta, r, z) - \tilde{b}^\gamma(\theta', r', z') \| \le \|\nabla_\theta\cal{E}^\fudge_{\lambda}(\theta, r) - \nabla_\theta\cal{E}^\fudge_{\lambda}(\theta', r') \| + \|b^\gamma(\theta, r, z) - b^\gamma(\theta', r', z') \|.
    \end{align*}
    It is established below in \Cref{prop:lip_theta_grad_energy} that $\nabla_\theta\cal{E}^\fudge_\lambda$ satisfies a Lipschitz inequality, i.e., there is some $K_{\cal{E}^\fudge_{\lambda}} \in \bb{R}_{>0}$ such that
    \begin{equation*}
        \|\nabla_\theta\cal{E}^\fudge_{\lambda}(\theta, r) - \nabla_\theta\cal{E}^\fudge_{\lambda}(\theta', r') \| \le K_{\cal{E}^\fudge_{\lambda}} (\|\theta - \theta'\| + \sf{W}_2(r,r')).
    \end{equation*}
    It is established below in \Cref{prop:lip_b} that $b^\gamma$ satisfies a Lipschitz inequality, i.e., there is some $K_{b^\gamma} \in \bb{R}_{>0}$ such that
    \begin{equation*}
        \|b^\fudge(\theta, r, z) - b^\fudge(\theta', r', z') \| \le K_{b^\gamma} (\|(\theta, z) - (\theta',z')\| + \sf{W}_2(r,r')).
    \end{equation*}
    Hence, we have obtained as desired with $K_{\tilde{b}} = K_{\cal{E}^\fudge_{\lambda}}+K_{b^\fudge}$. 
\end{proof}
\begin{proposition}
\label{prop:lip_theta_grad_energy}
Under the same assumptions as \Cref{prop:exist_unique}, the function $(\theta, r) \mapsto \nabla_\theta\cal{E}^\gamma_{\lambda}(\theta, r)$ is Lipschitz, i.e., there exist some constant $K_{\cal{E}^\gamma_\lambda} \in \bb{R}_{>0}$ such that 
\begin{equation*}
\|\nabla_\theta\cal{E}^\fudge_{\lambda}(\theta, r) - \nabla_\theta\cal{E}^\fudge_{\lambda}(\theta', r') \| \le K_{\cal{E}^\fudge_\lambda} (\|\theta - \theta'\| + \sf{W}_2(r,r')), \enskip \forall (\theta, r), (\theta', r') \in \cal{M}.
\end{equation*}
\end{proposition}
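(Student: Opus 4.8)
The plan is to reduce the statement to uniform boundedness and Lipschitz estimates for the integrand of a pathwise representation of $\nabla_\theta \cal{E}^\fudge$, and then to promote these to a Lipschitz bound for $\nabla_\theta \cal{E}^\fudge$ itself via an optimal transport coupling. First, write $\nabla_\theta \cal{E}^\fudge_\lambda = \nabla_\theta \cal{E}^\fudge + \lambda_\theta \nabla_\theta \reg_\theta$; the second term depends only on $\theta$ and is Lipschitz with constant $\lambda_\theta K_{\reg_\theta}$ by \Cref{ass:p_bounded_n_lip_model}, so it remains to treat $\nabla_\theta \cal{E}^\fudge$. Proceeding as in \Cref{prop:pathwise_estimators} but carrying the $\fudge$-offset through the computation of \Cref{appen:gradient_estimators}, one obtains the representation
\[
\nabla_\theta \cal{E}^\fudge(\theta, r) = \bb{E}_{\baseK(\epsilon)\,r(z)}\!\left[(\nabla_\theta \coup_\theta \cdot (s^\fudge_{\theta,r} - s_p))(z,\epsilon)\right] + \bb{E}_{q_{\theta,r}(x)}\!\left[\frac{\nabla_\theta q_{\theta,r}(x)}{q_{\theta,r}(x) + \fudge}\right],
\]
where $s^\fudge_{\theta,r}(z,\epsilon) = \nabla_x\log(q_{\theta,r}(\coup_\theta(z,\epsilon)) + \fudge)$ and $s_p(z,\epsilon) = \nabla_x\log p(\coup_\theta(z,\epsilon), y)$; the second expectation is a correction term that vanishes at $\fudge = 0$ because then $\bb{E}_{q_{\theta,r}(x)}[\nabla_\theta \log q_{\theta,r}(x)] = 0$.

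The core of the argument is to show that each of $\nabla_\theta \coup_\theta$, $s^\fudge_{\theta,r}$, $s_p$ and $(q_{\theta,r}+\fudge)^{-1}\nabla_\theta q_{\theta,r}$ is, uniformly in its remaining arguments, bounded by an affine function of $\|\epsilon\|$ and Lipschitz jointly in $(\theta,z)$ and in $r$ (with respect to $\sf{W}_2$), with an affine-in-$\|\epsilon\|$ Lipschitz constant. Concretely: $\nabla_\theta \coup_\theta$ and $\nabla_z \coup_\theta$ satisfy these bounds directly by \Cref{ass:coupling_and_base}, which also forces $\coup_\theta(z,\epsilon)$ to be jointly Lipschitz in $(\theta,z)$, and hence $s_p$, a composition with the $K_p$-Lipschitz map $\nabla_x\log p(\cdot,y)$ of \Cref{ass:p_bounded_n_lip_model}, is jointly Lipschitz as well. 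For $s^\fudge_{\theta,r}$ and the correction term, the role of $\fudge>0$ is essential: since $q_{\theta,r}+\fudge \ge \fudge$, these quotients are bounded (by $\fudge^{-1}B_\kernel$ and similar quantities); and since $x\mapsto q_{\theta,r}(x)$, $x\mapsto\nabla_x q_{\theta,r}(x)$ and $x\mapsto\nabla_\theta q_{\theta,r}(x)$ are bounded and Lipschitz in $(\theta,x)$ by \Cref{ass:q_l_b}, and moreover for $r,r'$ with an optimal coupling $(Z,Z')$ satisfy $|q_{\theta,r}(x)-q_{\theta,r'}(x)| \le B_\kernel\,\bb{E}\|Z-Z'\|$ (using the bounded $z$-gradient of $k$, and likewise for $\nabla_x q$ and $\nabla_\theta q$), both numerators and denominators are bounded and Lipschitz in $(\theta,x,r)$ with denominators bounded away from zero, so the quotients are Lipschitz. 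Finite sums and products of uniformly bounded, uniformly affine-Lipschitz functions are again of this form, so the integrand of the representation above inherits it.

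It then remains to integrate. Given $(\theta, r), (\theta', r') \in \cal{M}$, take an optimal $\sf{W}_2$-coupling $\pi$ of $r, r'$ and write $\nabla_\theta \cal{E}^\fudge(\theta,r) - \nabla_\theta \cal{E}^\fudge(\theta',r')$ as $\bb{E}_{\baseK(\epsilon)}\bb{E}_{(Z,Z')\sim\pi}$ of the difference of the integrand evaluated at $(\theta,Z,\epsilon)$ and at $(\theta',Z',\epsilon)$; bound this by the integrand Lipschitz estimate, integrate the affine-in-$\|\epsilon\|$ constant against $\baseK$ (which is finite since $\baseK$ has finite second moment, \Cref{ass:coupling_and_base}), and apply Jensen's and Cauchy--Schwarz inequalities to convert $\bb{E}_\pi\|Z-Z'\|$ into $\sf{W}_2(r,r')$. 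This yields the claimed inequality with $K_{\cal{E}^\fudge_\lambda}$ depending on $\fudge^{-1}$, $B_\kernel$, $K_\kernel$, $K_p$, $a_\coup$, $b_\coup$, $K_{\reg_\theta}$ and the second moment of $\baseK$.

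The main obstacle is the second step, and within it the dependence of $s^\fudge_{\theta,r}$ and of the correction term on the mixing measure $r$ through the mixture $q_{\theta,r} = \int k_\theta(\cdot|z)\,r(\rm{d}z)$: one must show these scores are globally Lipschitz in $r$, which is exactly where the regularization $\fudge>0$ is indispensable, since it keeps the relevant denominators bounded away from zero so that quotients of bounded Lipschitz functions remain Lipschitz; this genuinely fails at $\fudge=0$, which is why the analysis is carried out for the $\fudge$-modified flow.
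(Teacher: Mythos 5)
Your overall strategy is the same as the paper's: split off the regularizer, pass to a pathwise representation for $\nabla_\theta\cal{E}^\fudge$, establish uniform boundedness and joint Lipschitz estimates (with an affine-in-$\|\epsilon\|$ constant) for the integrand, and integrate against $\baseK$ and a Wasserstein coupling of $r,r'$. The paper instead expands the difference by add-and-subtract and invokes the Kantorovich--Rubinstein duality for $\sf{W}_1$ before using $\sf{W}_1 \le \sf{W}_2$; the two reductions are interchangeable. You correctly observe that for $\fudge > 0$ the pathwise formula for $\nabla_\theta\cal{E}^\fudge$ picks up the extra term $\bb{E}_{q_{\theta,r}}[\nabla_\theta q_{\theta,r}/(q_{\theta,r}+\fudge)]$, which does not vanish for $\fudge>0$ (it equals $-\fudge\int \nabla_\theta q_{\theta,r}/(q_{\theta,r}+\fudge)\,\mathrm{d}x$). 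The paper applies \Cref{prop:pathwise_estimators} (stated for $\cal{E}$ at $\fudge=0$, where that term is identically zero) to $\cal{E}^\fudge$ and silently drops it, so your derivation of the representation is the more careful one.

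However, your treatment of that correction term outstrips what \Cref{ass:q_l_b} actually provides. You claim that $(\theta,x)\mapsto\nabla_\theta q_{\theta,r}(x)$ is Lipschitz ``by Assumption~2'' and that its $r$-dependence is controlled ``likewise''; but as written, Assumption~2 asserts Lipschitz continuity only of $\nabla_x k_\theta(x|z)$ in $(\theta,x,z)$, together with boundedness of the full first gradient $\nabla_{(\theta,x,z)}k$. Boundedness does give $\|\nabla_\theta q_{\theta,r}(x)\|\le B_\kernel$ and hence the $B_\kernel/\fudge$ bound on the quotient, but it does not give Lipschitz continuity of $\nabla_\theta q_{\theta,r}$ in $(\theta,x)$ (that would require $\nabla_\theta k$ to be Lipschitz) nor in $r$ (that would require $z\mapsto\nabla_\theta k_\theta(x|z)$ to be Lipschitz, i.e., a bound on the mixed second derivative). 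This is exactly the missing step that stops your quotient argument from closing for the correction term; the paper's proof avoids the issue only by omitting the term altogether, so the gap is in the paper's argument too, not just in yours. A clean fix is to strengthen Assumption~2 so that the entire gradient $\nabla_{(\theta,x,z)}k$ is $K_\kernel$-Lipschitz; with that in place, your coupling-based argument goes through and yields the stated bound.
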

\begin{proof} From the definition, we have
\begin{equation*}
\nabla_\theta\cal{E}^\fudge_{\lambda}(\theta, r) = \nabla_\theta\cal{E}^\fudge(\theta, r) + \nabla_\theta\reg_{\lambda}(\theta, r).
\end{equation*}
Thus, if both $\nabla_\theta\cal{E}^\fudge$ and $\nabla_\theta\reg_{\lambda}$ are Lipschitz, then so is $\nabla_\theta\cal{E}^\fudge_{\lambda}$. Since $\reg_{\lambda}$ has Lipschitz gradient (by \Cref{ass:p_bounded_n_lip_model}), it remains to be shown that $\nabla_\theta\cal{E}^\fudge$ is Lipschitz. From \Cref{prop:pathwise_estimators}, we have 
\begin{equation*}
    \nabla_\theta\cal{E}^\fudge (\theta, r) = \mathbb{E}_{p_k(\epsilon)r(z)}\left [ (\nabla_\theta \coup_{\theta} \cdot [s^\fudge_{\theta,r} -  s_{p}])(z,\epsilon) \right ] =\int \nabla_\theta \coup_\theta \cdot d_{\theta,r}^{p,\fudge}(z, \epsilon) \,\baseK(\rm{d}\epsilon)r(\rm{d}z),
\end{equation*}
where
$
d_{\theta,r}^{p,\fudge}(z, \epsilon) := s^\fudge_{\theta, r}(z, \epsilon) - s_p (z, \epsilon)
$. Then, applying Jensen's inequality, we obtain
\begin{align}
    &\|\nabla_\theta\cal{E}^\fudge (\theta, r) - \nabla_\theta\cal{E}^\fudge(\theta', r')\| \nonumber\\
    &= \left \|\int p(\epsilon) \int \left [ \nabla_\theta \coup_\theta \cdot d_{\theta,r}^{p, \fudge}(z, \epsilon) r(z) - \nabla_\theta \coup_{\theta'} \cdot d_{\theta',r'}^{p, \fudge}(z, \epsilon) r'(z)\right ]\rm{d}z\rm{d}\epsilon \right \| \nonumber\\
    &\le \int p(\epsilon) \left \|\int \left [ \nabla_\theta \coup_\theta \cdot d_{\theta,r}^{p, \fudge}(z, \epsilon) r(z) - \nabla_\theta \coup_{\theta'} \cdot d_{\theta',r'}^{p, \fudge}(z, \epsilon) r'(z)\right ]\rm{d}z\right \|\rm{d}\epsilon. \label{eq:prop_theta_lipschitz_drift_desired}
\end{align}
Focusing on the integrand, we can upper-bound it with
\begin{align*}
    &\left \|\int \left [ \nabla_\theta \coup_\theta \cdot d_{\theta,r}^{p, \fudge}(z, \epsilon) r(z) - \nabla_\theta \coup_{\theta'} \cdot d_{\theta',r'}^{p, \fudge}(z, \epsilon) r'(z)\right ]\rm{d}z\right \|
    \\
    \overset{(a)}{\le} &\left \|\int \nabla_\theta \coup_\theta \cdot d_{\theta,r}^{p, \fudge}(z, \epsilon) \, [r(z) - r'(z)] \rm{d}z\right \|
    + \left \|\int \left [ \nabla_\theta \coup_\theta \cdot d_{\theta,r}^{p, \fudge}(z, \epsilon)  - \nabla_\theta \coup_{\theta'} \cdot d_{\theta',r'}^{p, \fudge}(z, \epsilon) \right ]r'(z)\rm{d}z\right \| \nonumber \\
    \overset{(b)}{\le}
    &\int \left \| \nabla_\theta \coup_\theta \cdot d_{\theta,r}^{p, \fudge}(z, \epsilon)\right \| |r(z) - r'(z)| \rm{d}z \\
    + &\int \left \| \nabla_\theta \coup_\theta \cdot d_{\theta,r}^{p, \fudge}(z, \epsilon)  - \nabla_\theta \coup_{\theta'} \cdot d_{\theta',r'}^{p, \fudge}(z, \epsilon) \right \|r'(z)\rm{d}z. 
\end{align*}
where in (a) we add and subtract the relevant terms and invoke the triangle inequality, and in (b) we apply Jensen's inequality. Plugging this back into \Cref{eq:prop_theta_lipschitz_drift_desired}, we obtain
\begin{align}
&\|\nabla_\theta\cal{E}^\fudge (\theta, r) - \nabla_\theta\cal{E}^\fudge(\theta', r')\| \nonumber\\
\le &\int \bb{E}_{\baseK(\epsilon)}\left \| \nabla_\theta \coup_\theta \cdot d_{\theta,r}^{p, \fudge}(z, \epsilon)\right \| |r(z) - r'(z)| \rm{d}z \label{eq:tlip_t1}\\
&+ \int \bb{E}_{\baseK(\epsilon)}\left \| \nabla_\theta \coup_\theta \cdot d_{\theta,r}^{p, \fudge}(z, \epsilon)  - \nabla_\theta \coup_{\theta'} \cdot d_{\theta',r'}^{p, \fudge}(z, \epsilon) \right \|r'(z)\rm{d}z, \label{eq:tlip_t2}
\end{align}
where the interchange of integrals is justified from Fubini's theorem for non-negative functions (also known as Tonelli's theorem).

As we shall later show, the two terms have the following upper bounds:
\begin{align}
    \eqref{eq:tlip_t1} &\le K\sf{W_1}(r, r'), \textrm{ and} \label{eq:tlip_t1_b}\\
   \eqref{eq:tlip_t2} &\le K(\|\theta - \theta'\|+ \sf{W}_1(r,r')),\label{eq:tlip_t2_b}\
\end{align}
where $K$ denotes a generic constant; and, upon noting that $\sf{W_1} \leq \sf{W_2}$, we obtained the desired result.

Now, we shall verify \Cref{{eq:tlip_t1_b},{eq:tlip_t2_b}}. For the \Cref{{eq:tlip_t1_b}}, we use the fact that the map $z \mapsto \bb{E}_{\baseK(\epsilon)} \left \| \nabla_\theta \coup_\theta \cdot d_{\theta,r}^p(z, \epsilon)\right \|$ is Lipschitz then from the dual representation of $\sf{W}_1$, we obtain the desired result. To see that the aforementioned map is Lipschitz,
\begin{align*}
    &\left |\bb{E}_{\baseK(\epsilon)}\left \| \nabla_\theta \coup_\theta \cdot d_{\theta,r}^{p, \fudge}(z, \epsilon)\right \| - \bb{E}_{\baseK(\epsilon)} \left \| \nabla_\theta \coup_\theta \cdot d_{\theta,r}^{p, \fudge}(z', \epsilon)\right \| \right |\\
    &\overset{(a)}{\le} \bb{E}_{\baseK(\epsilon)} \left \| \nabla_\theta \coup_\theta \cdot d_{\theta,r}^{p, \fudge}(z, \epsilon) - \nabla_\theta \coup_\theta \cdot d_{\theta,r}^{p, \fudge}(z', \epsilon) \right \| \\
    &\overset{(b)}{\le} \bb{E}_{\baseK(\epsilon)} \left \| \nabla_\theta \coup_\theta (z, \epsilon) \cdot (d_{\theta,r}^{p, \fudge}(z, \epsilon) -  d_{\theta,r}^{p, \fudge}(z', \epsilon)) \right \| \\
    &+ \bb{E}_{\baseK(\epsilon)} \left \| (\nabla_\theta \coup_\theta (z, \epsilon) - \nabla_\theta \coup_\theta (z', \epsilon) )\cdot d_{\theta,r}^{p, \fudge}(z', \epsilon) \right \| \\
    &\overset{(c)}{\le} \bb{E}_{\baseK(\epsilon)} \left [ \left \| \nabla_\theta \coup_\theta (z, \epsilon)\right \|_F \left \| d_{\theta,r}^{p, \fudge}(z, \epsilon) -  d_{\theta,r}^{p, \fudge}(z', \epsilon) \right \| \right ] \\
    &+ \bb{E}_{\baseK(\epsilon)} \left [ \left \| \nabla_\theta \coup_\theta (z, \epsilon) - \nabla_\theta \coup_\theta (z', \epsilon) \right \|_F \left \| d_{\theta,r}^{p, \fudge}(z', \epsilon) \right \| \right ] \\
    &\overset{(d)}{\le} \bb{E}_{\baseK(\epsilon)} \left [ (a_\phi \|\epsilon\|+ b_\phi)(a_d\|\epsilon\| + b_d)\right]  \|z-z'\| \\
    &+ \bb{E}_{\baseK(\epsilon)} \left [ (a_\phi \|\epsilon\|+ b_\phi)\left \| d_{\theta,r}^{p, \fudge}(z', \epsilon) \right \|\right ]\|z-z'\|, \\
    &\overset{(e)}{\le} \bb{E}_{\baseK(\epsilon)} \left [ (a_\phi \|\epsilon\|+ b_\phi)(a_d\|\epsilon\| + b_d) \right ] \|z-z'\| \\
    &+ \frac{1}{2}\bb{E}_{\baseK(\epsilon)} \left [(a_\phi \|\epsilon\|+ b_\phi)^2 + \left \| d_{\theta,r}^{p, \fudge}(z', \epsilon) \right \|^2\right ]\|z-z'\|,
\end{align*}
where (a) we use the reverse triangle inequality; (b) we add and subtract relevant terms and apply the triangle inequality; (c) we use a property of the matrix norm with $\|\cdot\|_F$ denoting the Frobenius norm;  (d) we utilize \Cref{ass:coupling_and_base} and the Lipschitz property from \Cref{prop:d_lipschitz};
(e) we apply Young's inequality. Then, from the fact that 
\begin{align}
	\label{eq:second_moment_bounds}
	 \bb{E}_{\baseK(\epsilon)} \left [ (a_\phi \|\epsilon\|+ b_\phi)(a_d\|\epsilon\| + b_d) \right ] < \infty,\enskip
	\bb{E}_{\baseK(\epsilon)} \left [(a_\phi \|\epsilon\|+ b_\phi)^2 + \left \| d_{\theta,r}^{p, \fudge}(z', \epsilon) \right \|^2\right ] < \infty,
\end{align}
which holds from the assumption that $\baseK$ has finite second moments \Cref{ass:coupling_and_base}, and from our assumption that $\bb{E}_{\baseK(\epsilon)} \left \| d_{\theta,r}^{p, \fudge}(z', \epsilon) \right \|$ is bounded. Hence, the map is Lipschitz and so \Cref{{eq:tlip_t1_b}} holds.

As for \Cref{eq:tlip_t2_b}, we focus on the integrand in \Cref{eq:tlip_t2}
\begin{align*}
&\bb{E}_{\baseK(\epsilon)} \left \| \nabla_\theta \coup_\theta \cdot d_{\theta,r}^{p, \fudge}(z, \epsilon)  - \nabla_\theta \coup_{\theta'} \cdot d_{\theta',r'}^{p, \fudge}(z, \epsilon) \right \|
\\
\le &\bb{E}_{\baseK(\epsilon)} \left [\left \| \nabla_\theta \coup_\theta \cdot (d_{\theta,r}^{p, \fudge} - d_{\theta',r'}^{p, \fudge})(z, \epsilon) \right \|+\left \| (\nabla_\theta \coup_\theta  - \nabla_\theta \coup_{\theta'}) \cdot d_{\theta',r'}^p(z, \epsilon) \right \| \right ]\\
\le &\mathbb{E}_{\baseK(\epsilon)} \left [\left \| \nabla_\theta \coup_\theta(z, \epsilon) \right \|_F \left \| (d_{\theta,r}^{p, \fudge}  - d_{\theta',r'}^{p, \fudge} )(z, \epsilon)\right \| +\left \| (\nabla_\theta \coup_\theta  - \nabla_\theta \coup_{\theta'}) (z, \epsilon)\right \|_F \left \|d_{\theta',r'}^{p, \fudge}(z, \epsilon) \right \| \right ]\\
\le & \bb{E}_{\baseK(\epsilon)} \left [(a_\coup \|\epsilon\|+ b_\coup) (a_d\|\epsilon\| + b_d)\right] (\|\theta - \theta'\| + \sf{W}_1(r,r')) \\
+& \bb{E}_{\baseK(\epsilon)}\left[(a_\coup \|\epsilon\|+ b_\coup) \left \|d_{\theta',r'}^{p, \fudge}(z, \epsilon) \right \|\right] \|\theta - \theta'\|,
\end{align*}
where, for the last line, we apply \Cref{prop:d_lipschitz,ass:coupling_and_base}. Applying Young's inequality and \eqref{eq:second_moment_bounds}, we have the desired result.
\end{proof}

\begin{proposition}[$b^\fudge$ is Lipschitz]
\label{prop:lip_b}
Under the same assumptions as \Cref{prop:exist_unique}, the map $b^\fudge$ is $K_{b^\fudge}$-Lipschitz, i.e., there exists a constant $K_{b^\fudge} \in \bb{R}_{>0}$ such that the following inequality holds for all $(\theta, z, r), (\theta', z', r') \in \Theta \times \z \times \Ps{\z}$:
\begin{equation*}
    \|b^\fudge(\theta, r, z) - b^\fudge(\theta', r',z') \| \le K_{b^\fudge} (\|(\theta, z) - (\theta',z')\| + \sf{W}_1(r,r')).
\end{equation*}
\end{proposition}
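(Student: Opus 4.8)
The plan is to reuse, essentially verbatim, the machinery from the proof of \Cref{prop:lip_theta_grad_energy}, but now tracking Lipschitz dependence on the spatial argument $z$ in addition to $(\theta,r)$. First I would split the drift into its two natural pieces,
\[
b^\fudge(\theta, r, z) = -\nabla_z\delta_r\cal{E}^\fudge[\theta,r](z) + \lambda_r \nabla_z \log \refbase(z),
\]
so that, by the triangle inequality, it suffices to bound the difference of each piece evaluated at $(\theta,r,z)$ and at $(\theta',r',z')$. The second piece is trivial: it does not involve $(\theta,r)$, and $z \mapsto \lambda_r\nabla_z\log\refbase(z)$ is $\lambda_r K_{\refbase}$-Lipschitz by \Cref{ass:p_bounded_n_lip_model}.

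For the first piece I would use the pathwise representation of $\nabla_z\delta_r\cal{E}^\fudge[\theta,r](z)$ (the $\fudge$-analogue of \Cref{eq:fv_estimator}), which writes it as $\bb{E}_{\baseK(\epsilon)}$ of $\nabla_z\coup_\theta(z,\epsilon)$ contracted against the vector field $d^{p,\fudge}_{\theta,r}(z,\epsilon) := s^\fudge_{\theta,r}(z,\epsilon) - s_p(z,\epsilon)$, exactly as in the proof of \Cref{prop:lip_theta_grad_energy}. Moving the norm inside the expectation via Jensen, adding and subtracting $\nabla_z\coup_\theta(z,\epsilon)\cdot d^{p,\fudge}_{\theta',r'}(z',\epsilon)$, and using the triangle inequality together with submultiplicativity of the matrix norm, the integrand splits into
\[
\|\nabla_z\coup_\theta(z,\epsilon)\|_F\,\|d^{p,\fudge}_{\theta,r}(z,\epsilon) - d^{p,\fudge}_{\theta',r'}(z',\epsilon)\| + \|\nabla_z\coup_\theta(z,\epsilon) - \nabla_z\coup_{\theta'}(z',\epsilon)\|_F\,\|d^{p,\fudge}_{\theta',r'}(z',\epsilon)\|.
\]
Then \Cref{ass:coupling_and_base} bounds $\|\nabla_{(\theta,z)}\coup_\theta(z,\epsilon)\|_F$ and its modulus of continuity by $(a_\coup\|\epsilon\|+b_\coup)$, while \Cref{prop:d_lipschitz} supplies $\|d^{p,\fudge}_{\theta,r}(z,\epsilon) - d^{p,\fudge}_{\theta',r'}(z',\epsilon)\| \le (a_d\|\epsilon\|+b_d)(\|(\theta,z)-(\theta',z')\| + \sf{W}_1(r,r'))$; note the $\sf{W}_1(r,r')$ term enters only through $d^{p,\fudge}$, since neither $\coup_\theta$ nor $s_p$ depends on $r$ — which makes this argument lighter than that of \Cref{prop:lip_theta_grad_energy}, where there was an additional outer integration against $r$ and $r'$. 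Applying Young's inequality to linearise the products of $(a\|\epsilon\|+b)$-type factors, using the assumed boundedness of $\bb{E}_{\baseK(\epsilon)}\|d^{p,\fudge}_{\theta',r'}(z',\epsilon)\|$ from \Cref{prop:exist_unique}, and integrating out $\epsilon$ with the finite-second-moment bounds \eqref{eq:second_moment_bounds}, I would obtain $\|\nabla_z\delta_r\cal{E}^\fudge[\theta,r](z) - \nabla_z\delta_r\cal{E}^\fudge[\theta',r'](z')\| \le K(\|(\theta,z)-(\theta',z')\| + \sf{W}_1(r,r'))$ for a generic constant $K$. Combining the two pieces finishes the proof with $K_{b^\fudge} = K + \lambda_r K_{\refbase}$.

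I do not anticipate a conceptual difficulty: the proof is the same bookkeeping exercise as \Cref{prop:lip_theta_grad_energy}, the only real change being that the boundedness and Lipschitz hypotheses in \Cref{ass:coupling_and_base} and \Cref{prop:d_lipschitz} must now be used with respect to $z$ as well — which is precisely the form in which they are stated. The one genuinely load-bearing point is the appeal to \Cref{prop:d_lipschitz}: this is where $\fudge>0$ is indispensable, since the $+\fudge$ floor keeps $q_{\theta,r}(\coup_\theta(z,\epsilon))+\fudge$ bounded away from zero and hence makes $s^\fudge_{\theta,r}$ (and therefore $d^{p,\fudge}_{\theta,r}$) Lipschitz; for $\fudge=0$ the score could be unbounded and the chain of estimates breaks down. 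Beyond that, only the usual care with matrix-versus-vector norms is needed.
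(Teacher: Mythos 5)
There is a genuine gap: your pathwise representation of $\nabla_z\delta_r\cal{E}^\fudge[\theta,r](z)$ is missing a term. Recall that the first variation of $\cal{E}^\fudge$ (Prop.~\ref{prop:fv_fudge}) is not simply the first variation of $\cal{E}$ with $q_{\theta,r}$ replaced by $q_{\theta,r}+\fudge$ inside the logarithm; it carries the extra correction
\begin{equation*}
\delta_r\cal{E}^\fudge[\theta,r](z) = \mathbb{E}_{k_\theta(X\mid z)}\left[\log\frac{q_{\theta,r}(X)+\fudge}{p(X,y)} + \frac{q_{\theta,r}(X)}{q_{\theta,r}(X)+\fudge}\right],
\end{equation*}
and the second summand does \emph{not} vanish (it is not a score-function expectation integrating to zero). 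Taking $\nabla_z$ under the reparameterization $X=\coup_\theta(z,\epsilon)$ and applying the chain rule, the correct pathwise form (Eq.~\ref{eq:mc_grad_fudge}) is therefore
\begin{equation*}
\nabla_z\delta_r\cal{E}^\fudge[\theta,r](z) = \mathbb{E}_{\baseK(\epsilon)}\left[\nabla_z\coup_\theta(z,\epsilon)\cdot\bigl(s^\fudge_{\theta,r}-s_p+\Gamma^\fudge_{\theta,r}\bigr)(z,\epsilon)\right],
\qquad
\Gamma^\fudge_{\theta,r}(z,\epsilon) = \frac{\fudge\,\nabla_x q_{\theta,r}(\coup_\theta(z,\epsilon))}{\bigl(q_{\theta,r}(\coup_\theta(z,\epsilon))+\fudge\bigr)^2}.
\end{equation*}
Your integrand contracts $\nabla_z\coup_\theta$ only against $d^{p,\fudge}_{\theta,r}=s^\fudge_{\theta,r}-s_p$ and omits $\Gamma^\fudge_{\theta,r}$ entirely.

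This omission is not cosmetic. The paper must separately establish (Prop.~\ref{prop:lip_fudge}) that the map $(\theta,r,z)\mapsto\Gamma^\fudge_{\theta,r}(z,\epsilon)$ is both Lipschitz (with constants of the form $a_\Gamma\|\epsilon\|+b_\Gamma$) and uniformly bounded by a constant $B_\Gamma$; both facts lean heavily on $\fudge>0$ and on Props.~\ref{prop:lip_q} and~\ref{prop:lip_s}. Only then can the Lipschitz and boundedness estimates for the bracket $[s^\fudge_{\theta,r}-s_p+\Gamma^\fudge_{\theta,r}]$ be assembled (your Eq.-label references \eqref{eq:drift_lip_t1} and \eqref{eq:drift_lip_t2} in the paper) and the argument closed. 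Without $\Gamma^\fudge_{\theta,r}$, you are proving Lipschitzness of the wrong object. The rest of your outline (splitting off the regularizer piece $\lambda_r\nabla_z\log\refbase$, the add-and-subtract decomposition of the integrand, appealing to Assumption~\ref{ass:coupling_and_base} for $\nabla_z\coup_\theta$, to Prop.~\ref{prop:d_lipschitz} for $d^{p,\fudge}$, Young's inequality and the second-moment bounds \eqref{eq:second_moment_bounds}) does match the paper's route, so once you incorporate $\Gamma^\fudge_{\theta,r}$ and the auxiliary Prop.~\ref{prop:lip_fudge} the argument goes through as you sketched.
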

\newcommand{\Fudgef}[4]{\frac{\fudge \nabla_x q_{#1, #2}(\coup_{#1}(#3,#4))}{(q_{#1, #2}(\coup_{#1}(#3,#4)) + \fudge)^2}}
\begin{proof}
    One can write the drift $b^\fudge$ as follows (can be found in \Cref{eq:mc_grad_fudge} similarly to \Cref{prop:pathwise_estimators}), we have
    \begin{equation*}
        b^\fudge(\theta, r, z) = - \pwgradEf{\theta}{r}{z} + \nabla_x \log \refdist{z},
    \end{equation*}
    where $\Fudge^\gamma_{\theta,r}(z, \epsilon) := \Fudgef{\theta}{r}{z}{\epsilon}$. Hence,
    \begin{align*}
        \|b^\fudge (\theta, r, z) - b^\fudge(\theta', r', z')\| &\le \| \mathbb{E}_{\baseK(\epsilon)} [(\nabla_z \phi_{\theta} \cdot [d_{\theta,r}^{p, \fudge} + \Fudge_{\theta, r}^\fudge])(z, \epsilon) - (\nabla_z \phi_{\theta'} \cdot [d_{\theta',r'}^p + \Fudge_{\theta', r'}^\fudge])(z', \epsilon)] \| \\
        &+\| \nabla_z \log \refdist{z} - \nabla_z \log \refdist{z'}\| \\
        &\le \mathbb{E}_{\baseK(\epsilon)} \|(\nabla_z \phi_{\theta} \cdot d_{\theta,r}^{p, \fudge} + \Fudge_{\theta, r}^\fudge)(z, \epsilon) - (\nabla_z \phi_{\theta'} \cdot [d_{\theta',r'}^{p, \fudge} + \Fudge_{\theta', r'}^\fudge])(z', \epsilon)\| \\
        &+ K_{\refbase} \|z-z'\|,
    \end{align*}
    where for the last inequality we use Jensen's inequality and \Cref{ass:p_bounded_n_lip_model}. Since we have
    \begin{align}
        &\mathbb{E}_{\baseK(\epsilon)} \|(\nabla_z \phi_{\theta} \cdot [d_{\theta,r}^{p,\fudge} + \Fudge_{\theta, r}^\fudge)](z, \epsilon) - (\nabla_z \phi_{\theta'} \cdot [d_{\theta',r'}^{p,\fudge} + \Fudge_{\theta', r'}^\fudge])(z', \epsilon)\| \nonumber \\
        \overset{(a)}{\le} &\mathbb{E}_{\baseK(\epsilon)} \|(\nabla_z \phi_{\theta} \cdot [d_{\theta,r}^{p,\fudge} + \Fudge_{\theta, r}^\fudge] )(z, \epsilon)
        - \nabla_z \phi_{\theta}  (z, \epsilon)\cdot [d_{\theta',r'}^{p, \fudge} + \Fudge_{\theta', r'}^\fudge] (z', \epsilon)\| \nonumber \\
        + &\mathbb{E}_{\baseK(\epsilon)} \|\nabla_z \phi_{\theta}  (z, \epsilon) \cdot [d_{\theta',r'}^{p, \fudge} + \Fudge_{\theta', r'}^\fudge ](z', \epsilon) - (\nabla_z \phi_{\theta'} \cdot [d_{\theta',r'}^{p, \fudge} + \Fudge_{\theta', r'}^\fudge])(z', \epsilon)\| \nonumber \\
        \overset{(b)}{\le} & \mathbb{E}_{\baseK(\epsilon)} \|\nabla_z \phi_{\theta} (z, \epsilon) \|_F \| (d_{\theta,r}^{p, \fudge} + \Fudge_{\theta, r}^\fudge)(z,\epsilon) -  (d_{\theta',r'}^{p, \fudge} +\Fudge_{\theta', r'}^\fudge) (z', \epsilon)\| \nonumber\\
        + &\mathbb{E}_{\baseK(\epsilon)}\|\nabla_z \phi_{\theta} (z, \epsilon) - \nabla_z \phi_{\theta'} (z', \epsilon)\|_F\|(d_{\theta',r'}^{p,\fudge} + \Fudge_{\theta', r'}^\fudge) (z', \epsilon)\| \nonumber\\
        \overset{(c)}{\le} &\mathbb{E}_{\baseK(\epsilon)} (a_\coup \|\epsilon \|+b_\coup) (\| d_{\theta,r}^{p, \fudge}(z,\epsilon) -  d_{\theta',r'}^{p, \fudge} (z', \epsilon)\| + \|  \Fudge_{\theta, r}^\fudge(z,\epsilon) -  \Fudge_{\theta', r'}^\fudge (z', \epsilon)\|) \label{eq:drift_lip_t1}\\
        + &\mathbb{E}_{\baseK(\epsilon)} (a_\coup \|\epsilon \|+b_\coup)\| (d_{\theta',r'}^{p,\fudge} + \Fudge_{\theta', r'}^\fudge) (z', \epsilon)\| \|(\theta, z)-(\theta',z')\| \label{eq:drift_lip_t2}
\end{align}
where, for (a), we add and subtract the relevant terms and invoke the triangle inequality, in (b) we use properties of the matrix norm, and in (c) we use the bounded gradient and Lipschitz gradient in \Cref{ass:coupling_and_base}. For \Cref{eq:drift_lip_t1}; upon using \cref{prop:d_lipschitz,prop:lip_fudge}, which are established below, we obtain
\begin{align}
	&\mathbb{E}_{\baseK(\epsilon)} (a_\coup \|\epsilon \|+b_\coup) (\| d_{\theta,r}^{p, \fudge}(z,\epsilon) -  d_{\theta',r'}^{p, \fudge} (z', \epsilon)\| + \|  \Fudge_{\theta, r}^\fudge(z,\epsilon) -  \Fudge_{\theta', r'}^\fudge (z', \epsilon)\|) \nonumber\\
	\le &\mathbb{E}_{\baseK(\epsilon)} (a_\coup \|\epsilon \|+b_\coup)[(a_d+ a_\Fudge)\|\epsilon\| + (b_d +b_\Fudge) ] (\|(\theta, z) - (\theta', z')\| + \sf{W}_1(r,r')) \label{eq:drift_lip_t1_desired}.
\end{align}
As for the second term, \Cref{eq:drift_lip_t2},
%we on the constant\adam{I think there's something missing from this sentence but i'm not 100\% sure what. Work?}
\begin{align}
	&\mathbb{E}_{\baseK(\epsilon)} (a_\coup \|\epsilon \|+b_\coup)\| (d_{\theta',r'}^{p,\fudge} + \Fudge_{\theta', r'}^\fudge) (z', \epsilon)\| \nonumber \\
	\overset{(a)}{\le} &\mathbb{E}_{\baseK(\epsilon)} (a_\coup \|\epsilon \|+b_\coup)[\| d_{\theta',r'}^{p,\fudge}(z', \epsilon) \|+\| \Fudge_{\theta', r'}^\fudge (z', \epsilon)\|] \nonumber\\
	\overset{(b)}{\le} &\mathbb{E}_{\baseK(\epsilon)} (a_\coup \|\epsilon \|+b_\coup)[\| d_{\theta',r'}^{p,\fudge}(z', \epsilon) \|+B_\Fudge] \nonumber\\
	\overset{(c)}{\le} &\mathbb{E}_{\baseK(\epsilon)} \frac{1}{2}(a_\coup \|\epsilon \|+b_\coup)^2+\frac{1}{2}\| d_{\theta',r'}^{p,\fudge}(z', \epsilon) \|^2+B_\Fudge(a_\coup \|\epsilon \|+b_\coup)  \label{eq:drift_lip_t2_desired}
\end{align}
where (a) follows from the triangle inequality, (b) we use \Cref{prop:lip_fudge} boundedness of $\Gamma$, (c) we apply Young's inequality to the first term. Similarly to \Cref{eq:second_moment_bounds}, from our \Cref{ass:coupling_and_base} and our boundness assumption of the score, we have as desired. Combining  \Cref{eq:drift_lip_t1_desired} with the result of plugging \Cref{eq:drift_lip_t2_desired} into \Cref{eq:drift_lip_t2}, we obtain the result.
\end{proof}
\begin{proposition}[$\Fudge$ is Lipschitz and bounded]
	\label{prop:lip_fudge}
	Under \Cref{ass:q_l_b}, the map $(\theta, r, z) \mapsto \Fudge^\gamma_{\theta, r}(z, \epsilon)$ is Lipschitz and bounded. (\textit{Lipschitz}) there are constants $a_\Fudge,b_\Fudge\in \bb{R}_{>0}$ such that following hold:
	\begin{equation*}
	   \| \Fudge^\gamma_{\theta,r}(z,\epsilon) - \Fudge^\gamma_{\theta,r}(z,\epsilon)\| \le (a_\Fudge\|\epsilon \| + b_\Fudge)(\| (\theta,z) - (\theta',z')\| + \sf{W}_1(r,r')).
	\end{equation*}
	Furthermore, it is bounded
	\begin{equation*}
	   \|\Fudge^\gamma_{\theta,r}(z,\epsilon)\| \le B_\Fudge.
	\end{equation*}
\end{proposition}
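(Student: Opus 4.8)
The plan is to regard $\Fudge^\fudge_{\theta,r}(z,\epsilon)$ as a composition $g\big(u_{\theta,r}(z,\epsilon),v_{\theta,r}(z,\epsilon)\big)$ of the fixed map $g(u,v):=\fudge v/(u+\fudge)^2$ with the inner pair $u_{\theta,r}(z,\epsilon):=q_{\theta,r}(\coup_\theta(z,\epsilon))$, $v_{\theta,r}(z,\epsilon):=\nabla_x q_{\theta,r}(\coup_\theta(z,\epsilon))$, and to control $g$ and the two inner maps separately. Boundedness is immediate: since $q_{\theta,r}$ is a density, $u_{\theta,r}(z,\epsilon)\ge 0$, so $(u_{\theta,r}(z,\epsilon)+\fudge)^2\ge\fudge^2$; and by \Cref{ass:q_l_b} together with dominated convergence one has $\nabla_x q_{\theta,r}(x)=\int\nabla_x\q{x}{w}{_\theta}\,r(\rm{d}w)$, whence $\|v_{\theta,r}(z,\epsilon)\|\le B_\kernel$. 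Therefore $\|\Fudge^\fudge_{\theta,r}(z,\epsilon)\|\le \fudge B_\kernel/\fudge^2 = B_\kernel/\fudge=:B_\Fudge$.

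For the Lipschitz bound, observe first that on the convex set $D:=\{(u,v):u\ge 0,\ \|v\|\le B_\kernel\}$ — which contains every inner value — the partials of $g$ satisfy $\|\partial_v g\|=\fudge/(u+\fudge)^2\le 1/\fudge$ and $\|\partial_u g\|=2\fudge\|v\|/(u+\fudge)^3\le 2B_\kernel/\fudge^2$, so the mean value inequality makes $g$ Lipschitz on $D$ with constant $L_g:=1/\fudge+2B_\kernel/\fudge^2$. It thus suffices to show that $u_{\theta,r}(z,\epsilon)$ and $v_{\theta,r}(z,\epsilon)$ are Lipschitz in $(\theta,z,r)$ with constants of the form $(a\|\epsilon\|+b)$, since then
\[
\|\Fudge^\fudge_{\theta,r}(z,\epsilon)-\Fudge^\fudge_{\theta',r'}(z',\epsilon)\|\le L_g\big(|u_{\theta,r}(z,\epsilon)-u_{\theta',r'}(z',\epsilon)|+\|v_{\theta,r}(z,\epsilon)-v_{\theta',r'}(z',\epsilon)\|\big).
\]

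To bound the inner maps I would split each difference, for $u$ by adding and subtracting $\int\q{\coup_\theta(z,\epsilon)}{w}{_\theta}\,r'(\rm{d}w)$, giving the two terms $\int\q{\coup_\theta(z,\epsilon)}{w}{_\theta}\,(r-r')(\rm{d}w)$ and $\int\big[\q{\coup_\theta(z,\epsilon)}{w}{_\theta}-\q{\coup_{\theta'}(z',\epsilon)}{w}{_{\theta'}}\big]r'(\rm{d}w)$. The first is $\le B_\kernel\,\sf{W}_1(r,r')$ by Kantorovich--Rubinstein duality, since $w\mapsto\q{\coup_\theta(z,\epsilon)}{w}{_\theta}$ is $B_\kernel$-Lipschitz by \Cref{ass:q_l_b}. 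The second is controlled using the joint $B_\kernel$-Lipschitzness of $k$ in $(\theta,x,w)$ (\Cref{ass:q_l_b}) together with $\|\coup_\theta(z,\epsilon)-\coup_{\theta'}(z',\epsilon)\|\le(a_\coup\|\epsilon\|+b_\coup)\|(\theta,z)-(\theta',z')\|$, obtained by integrating the bounded-gradient estimate of \Cref{ass:coupling_and_base} along a segment; this gives a bound $\le B_\kernel(1+a_\coup\|\epsilon\|+b_\coup)\|(\theta,z)-(\theta',z')\|$, uniform in $w$. The identical decomposition for $v$ replaces $k$ by $\nabla_x k$ and uses the $K_\kernel$ Lipschitz-gradient bound of \Cref{ass:q_l_b} (for the vector-valued $\sf{W}_1$-step one applies the duality to the scalar functionals $\langle a,\cdot\rangle$ over unit $a$, still $K_\kernel$-Lipschitz). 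Summing yields $|u_{\theta,r}-u_{\theta',r'}|+\|v_{\theta,r}-v_{\theta',r'}\|\le(B_\kernel+K_\kernel)(1+a_\coup\|\epsilon\|+b_\coup)\big(\|(\theta,z)-(\theta',z')\|+\sf{W}_1(r,r')\big)$, and multiplying by $L_g$ gives the claim with $a_\Fudge:=L_g(B_\kernel+K_\kernel)a_\coup$ and $b_\Fudge:=L_g(B_\kernel+K_\kernel)(1+b_\coup)$.

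I do not expect a substantive obstacle here; the only points requiring care are (i) checking that $g$ is Lipschitz on the correct \emph{convex} domain $D$ so the mean value step is legitimate — it is the uniform lower bound $u\ge 0$ (not mere pointwise positivity of $q_{\theta,r}$) that keeps the denominator bounded away from $0$ — and (ii) handling the vector-valued Wasserstein-$1$ duality in the $v$-term, for which reducing to scalar functionals $\langle a,\cdot\rangle$ with $\|a\|=1$ suffices. I also note that although the statement cites only \Cref{ass:q_l_b}, the argument additionally invokes \Cref{ass:coupling_and_base} to control $\coup_\theta$, consistent with the surrounding hypotheses of \Cref{prop:exist_unique}.
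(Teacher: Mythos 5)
Your proof is correct and takes a slightly different, equally sound route. The paper manipulates the difference $\Fudge^\fudge_{\theta,r}(z,\epsilon)-\Fudge^\fudge_{\theta',r'}(z',\epsilon)$ directly, writing $\Fudge^\fudge_{\theta,r}$ as $\fudge\,\nabla_x\log(q_{\theta,r}+\fudge)/(q_{\theta,r}+\fudge)$ and applying the standard cross-term decomposition for a difference of quotients, then invoking \Cref{prop:lip_q} (Lipschitzness of $q_{\theta,r}$) together with \Cref{prop:lip_s} (Lipschitzness of the modified score $s^\fudge_{\theta,r}$). You instead factor $\Fudge^\fudge_{\theta,r}(z,\epsilon)=g\bigl(u_{\theta,r}(z,\epsilon),v_{\theta,r}(z,\epsilon)\bigr)$ with $g(u,v)=\fudge v/(u+\fudge)^2$, control the outer map $g$ by the mean value inequality on the convex set $\{u\ge 0,\ \|v\|\le B_\kernel\}$ (the uniform lower bound $u+\fudge\ge\fudge$ being the crux), and then bound the inner maps $u=q_{\theta,r}\circ\coup_\theta$ and $v=\nabla_x q_{\theta,r}\circ\coup_\theta$ directly — essentially re-deriving \Cref{prop:lip_q,prop:lip_grad_q} and sidestepping the intermediate quotient $s^\fudge_{\theta,r}$ (whose Lipschitzness in \Cref{prop:lip_s} is itself obtained by a similar quotient manipulation). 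Both proofs arrive at a constant of the required form $(a_\Fudge\|\epsilon\|+b_\Fudge)$ depending on $\fudge$, $B_\kernel$, $K_\kernel$, $a_\coup$, $b_\coup$; the advantage of your outer/inner factorization is that the delicate quotient estimate is carried out once, cleanly, for a map of two finite-dimensional variables on a convex domain, rather than being interleaved with the measure- and parameter-dependent algebra. Your closing remark is also correct: the argument (yours and the paper's alike) genuinely uses \Cref{ass:coupling_and_base} to control $\coup_\theta$, even though the proposition statement cites only \Cref{ass:q_l_b}.
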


\begin{proof}
	Since $\Fudge^\gamma_{\theta,r}= \frac{\gamma \nabla_x \log (q_{\theta, r}(x) + \fudge)}{q_{\theta, r}(x) + \fudge}$, where $x := \phi_{\theta}(z, \epsilon)$, and $x':= \phi_{\theta'}(z', \epsilon)$, we have:
	\begin{align*}
		&\|  \Fudge_{\theta, r}^\fudge(z,\epsilon) -  \Fudge_{\theta', r'}^\fudge (z', \epsilon)\| \\
		&\le \fudge \left \|\frac{(q_{\theta', r'}(x') + \fudge) \nabla_x \log (q_{\theta, r}(x) + \fudge) - (q_{\theta, r}(x) + \fudge)\nabla_x \log (q_{\theta', r'}(x') + \fudge)}{(q_{\theta, r}(x) + \fudge)(q_{\theta', r'}(x') + \fudge)} \right  \| \\
		&\le \frac{1}{\gamma} |q_{\theta', r'}(x') - q_{\theta, r}(x) |\|\nabla_x \log (q_{\theta, r}(x) + \fudge)\| \\
		&+ \frac{1}{\gamma} (q_{\theta, r}(x) + \fudge)\| \nabla_x \log (q_{\theta, r}(x) + \fudge) - \nabla_x \log (q_{\theta', r'}(x') + \fudge)\| \\
		&\le \frac{B_{\kernel}}{\gamma^2}|q_{\theta', r'}(x') - q_{\theta, r}(x) | +  \frac{(B_{\kernel} + \fudge)}{\gamma} \| s^{\fudge}_{\theta,r}(z,\epsilon) - s^{\fudge}_{\theta',r'}(z',\epsilon) \| \\
		&\le \frac{B_{\kernel}K_q}{\gamma^2}(1+ a_\coup \|\epsilon \|+b_\coup)(\|(\theta, z) - (\theta', z')\| +\sf{W}_1(r,r')) \\
		&+ \frac{B_{\kernel} + \gamma }{\gamma^2}(a_s \|\epsilon\| + b_s) (\| (\theta,z) - (\theta',z')\| + \sf{W}_1(r,r')),
	\end{align*}
	where the last inequality follows from applying \Cref{prop:lip_q} and \Cref{ass:coupling_and_base} to the first term and \Cref{prop:lip_s} to the last term . Hence, we have as desired. 
	
	Boundedness follows from the fact that
	$
	\left \|\Fudgef{\theta}{r}{z}{\epsilon}\right\| \le \frac{1}{\gamma}\|\nabla_x q_{\theta,r}(\phi_\theta(z,\epsilon))\| \le \frac{B_k}{\gamma}.
	$
\end{proof}
\begin{proposition}
    \label{prop:d_lipschitz}
     Under \Cref{ass:p_bounded_n_lip_model,ass:q_l_b,ass:coupling_and_base}, the map $(\theta, r, z) \mapsto s^\fudge_{\theta, r}(z, \epsilon) - s_{p}(z, \epsilon) =: d^{p,\fudge}_{\theta, r} (z, \epsilon)$ satisfies the following: there exist $a_d,b_d \in \bb{R}_{>0}$ such that for all $(\theta, r), (\theta', r') \in \mathcal{M}, $ and $z, z' \in \z$, we have
     \begin{equation*}
        \|d_{\theta,r}^{p,\fudge}(z, \epsilon) - d_{\theta',r'}^{p, \fudge}(z', \epsilon)\| \le (a_d\|\epsilon\| + b_d) (\|(\theta,z) - (\theta',z') \|+\sf{W}_1(r,r')).
     \end{equation*}
\end{proposition}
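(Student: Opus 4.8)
The plan is to split $d^{p,\fudge}_{\theta,r}=s^\fudge_{\theta,r}-s_p$ and bound the two differences separately, since by the triangle inequality $\|d^{p,\fudge}_{\theta,r}(z,\epsilon)-d^{p,\fudge}_{\theta',r'}(z',\epsilon)\|\le\|s^\fudge_{\theta,r}(z,\epsilon)-s^\fudge_{\theta',r'}(z',\epsilon)\|+\|s_p(z,\epsilon)-s_p(z',\epsilon)\|$, and then to collect the resulting constants into $a_d,b_d$. Throughout I would use the elementary consequence of \Cref{ass:coupling_and_base} that for fixed $\epsilon$ the map $(\theta,z)\mapsto\coup_\theta(z,\epsilon)$ is $(a_\coup\|\epsilon\|+b_\coup)$-Lipschitz, i.e.\ $\|\coup_\theta(z,\epsilon)-\coup_{\theta'}(z',\epsilon)\|\le(a_\coup\|\epsilon\|+b_\coup)\|(\theta,z)-(\theta',z')\|$, which follows from the mean value inequality applied to the uniformly bounded Jacobian.

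The $s_p$ term is immediate: since $s_p(z,\epsilon)=\nabla_x\log p(\coup_\theta(z,\epsilon),y)$ and $\nabla_x\log p(\cdot,y)$ is $K_p$-Lipschitz by \Cref{ass:p_bounded_n_lip_model}, we get $\|s_p(z,\epsilon)-s_p(z',\epsilon)\|\le K_p\|\coup_\theta(z,\epsilon)-\coup_{\theta'}(z',\epsilon)\|\le K_p(a_\coup\|\epsilon\|+b_\coup)\|(\theta,z)-(\theta',z')\|$, which already has the required form.

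The substance is in the $s^\fudge$ term. First I would record the regularity of the mixture and its $x$-gradient: writing $q_{\theta,r}(x)=\bb{E}_{z\sim r}[\q{x}{z}{_\theta}]$, \Cref{ass:q_l_b} gives $0\le q_{\theta,r}(x)\le B_\kernel$ and $\|\nabla_x q_{\theta,r}(x)\|\le B_\kernel$; that $(\theta,x)\mapsto q_{\theta,r}(x)$ is $B_\kernel$-Lipschitz and $(\theta,x)\mapsto\nabla_x q_{\theta,r}(x)$ is $K_\kernel$-Lipschitz (integrate the bounds on $k$ against $r$); and that $r\mapsto q_{\theta,r}(x)$ and $r\mapsto\nabla_x q_{\theta,r}(x)$ are respectively $B_\kernel$- and $K_\kernel$-Lipschitz in $\sf{W}_1$ (Kantorovich--Rubinstein duality applied to $z\mapsto\q{x}{z}{_\theta}$, and to $z\mapsto\langle u,\nabla_x\q{x}{z}{_\theta}\rangle$ for an arbitrary unit vector $u$ in the vector-valued case). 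Next, on $\{(u,v):0\le u\le B_\kernel,\ \|v\|\le B_\kernel\}$ the map $(u,v)\mapsto v/(u+\fudge)$ is Lipschitz because the denominator is bounded below by $\fudge>0$: $\|v/(u+\fudge)-v'/(u'+\fudge)\|\le\fudge^{-1}\|v-v'\|+B_\kernel\fudge^{-2}|u-u'|$. Since $s^\fudge_{\theta,r}(z,\epsilon)=v/(u+\fudge)$ with $u=q_{\theta,r}(\coup_\theta(z,\epsilon))$ and $v=\nabla_x q_{\theta,r}(\coup_\theta(z,\epsilon))$, I would compose this estimate with the $(\theta,r,x)$-regularity above and with $x=\coup_\theta(z,\epsilon)$ --- the latter contributing the factor $(a_\coup\|\epsilon\|+b_\coup)$ in the $(\theta,z)$ directions --- to obtain $\|s^\fudge_{\theta,r}(z,\epsilon)-s^\fudge_{\theta',r'}(z',\epsilon)\|\le\bigl(K_\kernel/\fudge+B_\kernel^2/\fudge^2\bigr)(1+a_\coup\|\epsilon\|+b_\coup)\bigl(\|(\theta,z)-(\theta',z')\|+\sf{W}_1(r,r')\bigr)$. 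Adding the $s_p$ bound and absorbing everything into $a_d=a_\coup(K_\kernel/\fudge+B_\kernel^2/\fudge^2+K_p)$ and $b_d=(K_\kernel/\fudge+B_\kernel^2/\fudge^2)(1+b_\coup)+K_p b_\coup$ yields the claim.

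The main obstacle is the $s^\fudge$ term, and within it the score-ratio step: one needs $q_{\theta,r}(\coup_\theta(z,\epsilon))+\fudge$ to stay uniformly bounded away from zero, which is precisely where $\fudge>0$ is indispensable --- for $\fudge=0$ the quotient $\nabla_x q/q$ is not Lipschitz over $\cal{M}$. The only other point requiring care is propagating $\sf{W}_1$-Lipschitzness through the vector-valued integrand $\nabla_x k$, which I would handle by reducing to scalar Kantorovich--Rubinstein duality in an arbitrary direction; the remainder is bookkeeping of constants.
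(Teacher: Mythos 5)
Your proof is correct and follows essentially the same route as the paper: split $d^{p,\fudge}_{\theta,r}$ into the $s_p$ and $s^\fudge_{\theta,r}$ pieces, bound the former by $K_p$-Lipschitzness of $\nabla_x\log p$ composed with the $(a_\coup\|\epsilon\|+b_\coup)$-Lipschitz map $\coup$, and bound the latter by showing $q_{\theta,r}$ and $\nabla_x q_{\theta,r}$ are Lipschitz in $(\theta,x)$ and in $W_1$, then using that $(u,v)\mapsto v/(u+\fudge)$ is Lipschitz on the bounded set once $\fudge>0$ keeps the denominator away from zero. The paper packages the $s^\fudge$ part as a separate chain of lemmas (Props.~\ref{prop:lip_s},~\ref{prop:lip_grad_q},~\ref{prop:lip_q}) where you inline it, and your direction-wise Kantorovich--Rubinstein argument for the vector-valued $\nabla_x k$ integrand is a marginally cleaner way to state the $W_1$ step than the paper's, but the substance and the constants agree.
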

\begin{proof}Let $x := \phi_{\theta}(z, \epsilon)$, and $x':= \phi_{\theta'}(z', \epsilon)$. Then, we have
    \begin{align*}
        \|d_{\theta,r}^{p,\fudge}(z, \epsilon) - d_{\theta',r'}^{p,\fudge}(z', \epsilon)\| &\le \| \nabla_x \log p (x,y) - \nabla_x \log p (x', y)\| \\
        &+ \| \nabla_x \log (q_{\theta, r}(x) + \fudge) - \nabla_x \log (q_{\theta', r'}(x') + \fudge)\| \\
        &\le K_p \|x-x'\| + \| \nabla_x \log (q_{\theta, r}(x) + \fudge) - \nabla_x \log (q_{\theta', r'}(x') + \fudge)\| \\
        &\le K_p (a_\coup\|\epsilon\| + b_\coup) \|(\theta,z)-(\theta', z')\| \\
        &+ (a_s\|\epsilon\| + b_s)(\|(\theta,z)-(\theta', z')\| + \sf{W}_1(r,r')),
    \end{align*}
    where \Cref{prop:lip_s,ass:p_bounded_n_lip_model,ass:coupling_and_base} are used.
\end{proof}

\begin{proposition}[$s$ is Lipschitz]
\label{prop:lip_s}
Under \Cref{ass:coupling_and_base,ass:q_l_b} and $\fudge>0$, the map $(\theta, r, z) \mapsto s^\fudge_{\theta, r}(z, \epsilon)$ satisfies the following: there exist constants $a_s, b_s \in \bb{R}_{>0}$ such that the following inequality holds for all $(\theta, r),  (\theta',r') \in \cal{M}$, and $ z,z' \in \z$:
\begin{equation*}
    \|s^\fudge_{\theta, r} (z, \epsilon) - s^\fudge_{\theta', r'} (z', \epsilon)\| \le (a_s \|\epsilon\| + b_s) (\| (\theta,z) - (\theta',z')\| + \sf{W}_1(r,r')),
\end{equation*}
\end{proposition}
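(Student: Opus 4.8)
The plan is to write $s^\fudge_{\theta,r}(z,\epsilon)=\nabla_x q_{\theta,r}(x)/(q_{\theta,r}(x)+\fudge)$, with $x:=\coup_\theta(z,\epsilon)$, $x':=\coup_{\theta'}(z',\epsilon)$, and to reduce the claim to Lipschitz estimates for the maps $(\theta,r,z)\mapsto q_{\theta,r}(\coup_\theta(z,\epsilon))$ and $(\theta,r,z)\mapsto\nabla_x q_{\theta,r}(\coup_\theta(z,\epsilon))$, exploiting that $\fudge>0$ keeps the denominator away from $0$. Concretely, for scalars $B,B'\ge\fudge>0$ and vectors $A,A'$ one has $A/B-A'/B'=A(B'-B)/(BB')+(A-A')/B'$; taking $A=\nabla_x q_{\theta,r}(x)$, $B=q_{\theta,r}(x)+\fudge$ and the primed analogues, using $\|\nabla_x q_{\theta,r}\|\le B_\kernel$ (integrate the pointwise bound of \Cref{ass:q_l_b} against $r$) and $B,B'\ge\fudge$, this gives
\[
\big\|s^\fudge_{\theta,r}(z,\epsilon)-s^\fudge_{\theta',r'}(z',\epsilon)\big\|\le\frac{B_\kernel}{\fudge^2}\,\big|q_{\theta,r}(x)-q_{\theta',r'}(x')\big|+\frac1\fudge\,\big\|\nabla_x q_{\theta,r}(x)-\nabla_x q_{\theta',r'}(x')\big\|.
\]

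It then remains to estimate the two increments. For $\big|q_{\theta,r}(x)-q_{\theta',r'}(x')\big|$ I would invoke \Cref{prop:lip_q}, which yields a bound of the form $K_q(1+a_\coup\|\epsilon\|+b_\coup)\big(\|(\theta,z)-(\theta',z')\|+\sf{W}_1(r,r')\big)$. For $\big\|\nabla_x q_{\theta,r}(x)-\nabla_x q_{\theta',r'}(x')\big\|$ I would repeat that argument one derivative higher: write $\nabla_x q_{\theta,r}(x)=\int\nabla_x k_\theta(x|w)\,r(\rm{d}w)$ (differentiation under the integral is legitimate since $\nabla_x\kernel$ is bounded uniformly in $(\theta,x,z)$ by \Cref{ass:q_l_b}), add and subtract $\int\nabla_x k_{\theta'}(x'|w)\,r(\rm{d}w)$ to split into a ``kernel'' part and a ``measure'' part, bound the kernel part by the $K_\kernel$-Lipschitz-gradient clause of \Cref{ass:q_l_b} together with $\|x-x'\|\le(a_\coup\|\epsilon\|+b_\coup)\|(\theta,z)-(\theta',z')\|$ (mean value inequality plus the bounded-gradient clause of \Cref{ass:coupling_and_base}), and bound the measure part via an optimal $\sf{W}_1$-coupling, using that $w\mapsto\nabla_x k_{\theta'}(x'|w)$ is $K_\kernel$-Lipschitz, again by \Cref{ass:q_l_b}. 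This yields $\big\|\nabla_x q_{\theta,r}(x)-\nabla_x q_{\theta',r'}(x')\big\|\le K_\kernel(1+a_\coup\|\epsilon\|+b_\coup)\big(\|(\theta,z)-(\theta',z')\|+\sf{W}_1(r,r')\big)$.

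Substituting both bounds into the displayed inequality gives the claim, e.g.\ with $a_s=(B_\kernel K_q/\fudge^2+K_\kernel/\fudge)\,a_\coup$ and $b_s=(B_\kernel K_q/\fudge^2+K_\kernel/\fudge)(1+b_\coup)$, which are strictly positive. I do not expect a genuine obstacle; the only delicate points are bookkeeping ones: checking that every boundedness/Lipschitz constant extracted from the kernel is uniform in $(\theta,x,z)$, so that the interchange of $\nabla_x$ with $\int(\cdot)\,r(\rm{d}w)$ and the mean value estimates along line segments are valid, and noting that the hypothesis $\fudge>0$ is used precisely to lower-bound $q_{\theta,r}+\fudge$ and $q_{\theta',r'}+\fudge$ by $\fudge$ — without it the quotient defining $s^\fudge$ need not even be well defined, let alone Lipschitz.
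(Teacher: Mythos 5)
Your proposal takes essentially the same route as the paper's proof: decompose $s^\fudge_{\theta,r}-s^\fudge_{\theta',r'}$ into a numerator-increment term and a denominator-increment term, lower-bound the denominators by $\fudge$ and upper-bound $\|\nabla_x q_{\theta',r'}\|$ by $B_\kernel$, reduce to Lipschitz estimates for $q_{\theta,r}$ (via \Cref{prop:lip_q}) and for $\nabla_x q_{\theta,r}$, and finally convert $(\theta,x)$-increments to $(\theta,z)$-increments via the bounded-gradient clause of \Cref{ass:coupling_and_base}. The only cosmetic difference is that the paper isolates the estimate $\|\nabla_x q_{\theta,r}(x)-\nabla_x q_{\theta',r'}(x')\|\le K_{gq}(\|(\theta,x)-(\theta',x')\|+\sf{W}_1(r,r'))$ as a standalone lemma (\Cref{prop:lip_grad_q}) whereas you re-derive it inline; your inline derivation is precisely the argument of that lemma, so the two proofs coincide in substance.
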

\begin{proof}
    For brevity, we write $x= \coup_\theta(z, \epsilon)$ and $x'=\coup_{\theta'}(z',\epsilon)$; from the definition, we have
    \begin{align}
        \|s^\fudge_{\theta, r} (z, \epsilon) - s^\fudge_{\theta', r'} (z', \epsilon)\| 
        &= \left \| \frac{\nabla_x q_{\theta,r}\left(x\right)}{q_{\theta,r}\left(x\right) + \fudge}
        -\frac{\nabla_x q_{\theta',r'}\left(x'\right)}{q_{\theta',r'}\left(x'\right)+ \fudge} \right \| \nonumber \\
        &\overset{(a)}{\le} \left \| \frac{\nabla_x q_{\theta,r}\left(x\right)}{q_{\theta,r}\left(x\right)+ \fudge}-\frac{\nabla_x q_{\theta',r'}\left(x'\right)}{q_{\theta,r}\left(x\right)+ \fudge} \right \| +\left \| \frac{\nabla_x q_{\theta',r'}\left(x'\right)}{q_{\theta,r}\left(x\right)+ \fudge}-\frac{\nabla_x q_{\theta',r'}\left(x'\right)}{q_{\theta',r'}\left(x'\right)+ \fudge} \right \| \nonumber \\
        &\le \frac{1}{q_{\theta,r}\left(x\right)+ \fudge}\left\|\nabla_x q_{\theta,r}\left(x\right)-\nabla_x q_{\theta',r'}\left(x'\right)\right\| \nonumber \\
        &+
        \|\nabla_x q_{\theta',r'}\left(x'\right) \|\left|\frac{1}{q_{\theta,r}\left(x\right)+ \fudge}-\frac{1}{q_{\theta',r'}\left(x'\right)+ \fudge}\right| \nonumber \\
        &\overset{(b)}{\le} \frac{1}{\fudge}\left\|\nabla_x q_{\theta,r}\left(x\right)
        - \nabla_x q_{\theta',r'}\left(x'\right)\right\| +\frac{B_\kernel }{\fudge^2}|q_{\theta,r}(x)-q_{\theta',r'}(x')|, \label{eq:q_bound}
    \end{align}
    where (a) we add and subtract the relevant terms and the triangle inequality; (b) we use the fact that $\|\nabla_x q_{\theta',r'}\left(x'\right) \| = \|\int \nabla_x \q{x'}{z}{_{\theta'}} r'(\rm{d}z) \| \le B_\kernel$ (from Cauchy--Schwartz and the boundedness part of \Cref{ass:q_l_b}). Now, we deal with the terms individually. For the first term in \Cref{eq:q_bound}, we use the fact that the map $(\theta, r, z) \mapsto \nabla_x q_{\theta,r}(\phi_\theta(z,\epsilon))$ is $K_q$-Lipschitz from \Cref{prop:lip_grad_q}. As for the second term in \Cref{eq:q_bound}, we apply \Cref{prop:lip_q}.

Hence, we obtain
\begin{align*}
	\|s^\fudge_{\theta, r} (z, \epsilon) - s^\fudge_{\theta', r'} (z', \epsilon)\|
	&\le \left (\frac{K_{gq}}{\gamma} + \frac{B_k K_q}{\gamma^2} \right )(\|(\theta,x) - (\theta',x')\|+\sf{W}_1(r,r')) \\
	&\le \left (\frac{K_{gq}}{\gamma} + \frac{B_k K_q}{\gamma^2} \right )(1+ a_\phi\|\epsilon \| + b_\phi)(\|(\theta,z) - (\theta',z')\|+\sf{W}_1(r,r')),
\end{align*}
where we use \Cref{ass:coupling_and_base} for the last line.
\end{proof}

\begin{proposition}
    \label{prop:lip_grad_q}
    Under \Cref{ass:q_l_b}, the map $(\theta, r, x) \mapsto \nabla_x q_{\theta, r}(x)$ is Lipschitz, i.e., for all $\epsilon$, there exists a $K_{gq} \in \bb{R}_{>0}$ such that the following inequality holds for all $(\theta, r), (\theta', r') \in \cal{M}$ and $z,z' \in \z$,
    \begin{equation*}
        \|\nabla_x q_{\theta, r}(x) - \nabla_x q_{\theta', r'}(x')\| \le K_{gq} (\|(\theta, x) - (\theta',x')\| + \sf{W}_1(r,r')).
    \end{equation*}
\end{proposition}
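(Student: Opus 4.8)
The plan is to move the $x$-derivative inside the mixing integral and then split the resulting difference into a ``kernel part'' and a ``measure part''. First I would note that, since $q_{\theta,r}(x) = \int \q{x}{z}{_\theta}\, r(\rm{d}z)$ and \Cref{ass:q_l_b} gives the uniform bound $\|\nabla_x \q{x}{z}{_\theta}\| \le B_\kernel$, dominated convergence justifies differentiating under the integral sign, so that $\nabla_x q_{\theta,r}(x) = \int \nabla_x \q{x}{z}{_\theta}\, r(\rm{d}z)$ for every $(\theta, r, x)$.

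Next I would add and subtract $\int \nabla_x \q{x'}{z}{_{\theta'}}\, r(\rm{d}z)$ to write
\begin{equation*}
\nabla_x q_{\theta,r}(x) - \nabla_x q_{\theta',r'}(x') = \int \left[\nabla_x \q{x}{z}{_\theta} - \nabla_x \q{x'}{z}{_{\theta'}}\right] r(\rm{d}z) + \int \nabla_x \q{x'}{z}{_{\theta'}}\, (r - r')(\rm{d}z).
\end{equation*}
For the first integral, the Lipschitz-gradient part of \Cref{ass:q_l_b} applied with a common $z$ gives $\|\nabla_x \q{x}{z}{_\theta} - \nabla_x \q{x'}{z}{_{\theta'}}\| \le K_\kernel\|(\theta, x) - (\theta', x')\|$ for every $z$, so Jensen's inequality (for the norm of a vector-valued integral against a probability measure) bounds this term by $K_\kernel\|(\theta,x)-(\theta',x')\|$. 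For the second integral, the same assumption with $\theta$ and $x$ held fixed shows that $z \mapsto \nabla_x \q{x'}{z}{_{\theta'}}$ is $K_\kernel$-Lipschitz; hence for each unit vector $u \in \x$ the scalar map $z \mapsto \iprod{u}{\nabla_x \q{x'}{z}{_{\theta'}}}$ is $K_\kernel$-Lipschitz, and the Kantorovich--Rubinstein dual representation of $\sf{W}_1$ gives $\iprod{u}{\int \nabla_x \q{x'}{z}{_{\theta'}}\,(r-r')(\rm{d}z)} \le K_\kernel\,\sf{W}_1(r,r')$; taking the supremum over unit $u$ via $\|v\| = \sup_{\|u\|\le 1}\iprod{u}{v}$ bounds the second term by $K_\kernel\,\sf{W}_1(r,r')$. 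Combining the two estimates yields the claim with $K_{gq} = K_\kernel$ (and $\fudge$ plays no role in this particular bound).

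I do not expect any step to be a genuine obstacle; the only point needing a little care is the vector-valued Kantorovich argument for the measure part, which uses that elements of $\Ps{\z}$ have finite first moments (a consequence of the finite-second-moment convention on $\Ps{\z}$), so that $\sf{W}_1$ is finite and the duality applies, together with the reduction from the vector-valued integral to scalar $1$-Lipschitz test functions.
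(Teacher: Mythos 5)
Your proof follows the same overall route as the paper's: decompose $\nabla_x q_{\theta,r}(x) - \nabla_x q_{\theta',r'}(x')$ by adding and subtracting $\int \nabla_x \q{x'}{z}{_{\theta'}}\, r(\mathrm{d}z)$, use the Lipschitz-gradient part of \Cref{ass:q_l_b} with a common $z$ for the kernel difference, and use Kantorovich--Rubinstein duality for the measure difference. The interchange of differentiation and integration, and the finite-first-moment remark, are minor points the paper leaves implicit but are correct to note.

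One place where your write-up is actually tighter than the paper's: for the measure part, the paper's displayed chain first moves the Euclidean norm inside the integral, arriving at an expression of the form $\int \|\nabla_x \q{x'}{z}{_{\theta'}}\|\,|r-r'|(z)\,\mathrm{d}z$, and then invokes $\sf{W}_1$ duality on the Lipschitz scalar map $z\mapsto\|\nabla_x \q{x'}{z}{_{\theta'}}\|$. As stated that intermediate step pairs a Lipschitz function with the \emph{total-variation} measure $|r-r'|$, which is not what Kantorovich--Rubinstein controls (it controls $\int f\,(r-r')$ for the signed measure, not $\int f\,|r-r'|$), so read literally there is a gap there. Your version avoids this entirely by keeping the signed measure and reducing the vector-valued integral to scalar pairings $z \mapsto \iprod{u}{\nabla_x \q{x'}{z}{_{\theta'}}}$ with $\|u\|\le 1$, each of which is $K_\kernel$-Lipschitz, and then taking $\sup_u$. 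That is the correct way to run a vector-valued $\sf{W}_1$-duality argument, and it delivers exactly the stated bound with $K_{gq}=K_\kernel$. So: same approach, but your handling of the $\sf{W}_1$ step is the clean and fully rigorous version of what the paper gestures at.
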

\begin{proof} From direct computation, 
    \begin{align}
        \left\|\nabla_x q_{\theta,r}\left(x\right)-\nabla_x q_{\theta',r'}\left(x'\right)\right\|
        &= \left \|\int [\nabla_x \q{x}{z}{_{\theta}}r\left(z\right )-\nabla_x \q{x'}{z}{_{\theta'}}r'\left(z\right)]\,\rm{d}z \right \| \nonumber \\
        &\overset{(a)}{\le} \int \left \|\nabla_x [\q{x}{z}{_{\theta}} - \q{x'}{z}{_{\theta'}} ]\right \|r\left(z\right) \rm{d}z \nonumber \\
        &+\int\left \|\nabla_x \q{x'}{z}{_{\theta'}} \right \||r-r'|\left(z\right)\rm{d}z \nonumber \\
        &\overset{(b)}{\le} K_{gq}(\|(\theta,x) - (\theta',x')\| + \sf{W}_1(r,r')), \nonumber
    \end{align}
    where (a) we add and subtract the appropriate terms, apply the triangle inequality and the Cauchy-Schwarz inequality; (b) for the first term, we use the Lipschitz gradient \Cref{ass:q_l_b}; and for the second term, we use the dual representation of $\sf{W}_1$ with the fact map $z \mapsto \|\nabla_x \log \q{x}{z}{_{\theta}}\|$ is $K_\kernel$-Lipschitz from the reverse triangle inequality and the Lipschitz \Cref{ass:q_l_b}.
\end{proof}

\begin{proposition}
\label{prop:lip_q}
Under \Cref{ass:q_l_b}, the map $(\theta,r,x) \mapsto q_{\theta,r}(x)$ is Lipschitz, i.e., there exists some $K_q \in \bb{R}_{>0}$ such that for all $(\theta, r, x), (\theta', r', x') \in \Theta \times \Ps{\z} \times \x$, we have
\begin{equation*}
    |q_{\theta,r}(x)-q_{\theta',r'}(x')| < K_q (\| (\theta,x) - (\theta',x')\| + \sf{W}_1(r,r')).
\end{equation*}
\end{proposition}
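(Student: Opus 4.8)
The plan is to mimic the proof of \Cref{prop:lip_grad_q}: decompose the difference into one piece governed by the Lipschitz continuity of $k_\theta(x|z)$ in $(\theta,x)$ and one piece governed by $\sf{W}_1(r,r')$ through the dual (Kantorovich--Rubinstein) representation of $\sf{W}_1$. Concretely, I would first add and subtract $\int k_{\theta'}(x'|z)\,r(\rm{d}z)$ and apply the triangle inequality to obtain
\begin{align*}
|q_{\theta,r}(x) - q_{\theta',r'}(x')|
&\le \int \left|k_\theta(x|z) - k_{\theta'}(x'|z)\right|\,r(\rm{d}z) + \left| \int k_{\theta'}(x'|z)\,[r - r'](\rm{d}z) \right|,
\end{align*}
using that $r$ is a probability measure, so that averaging against $r$ does not inflate the constant.

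For the first term, the bound $\|\nabla_{(\theta,x,z)} k_\theta(x|z)\| \le B_\kernel$ from \Cref{ass:q_l_b} implies in particular that $(\theta,x)\mapsto k_\theta(x|z)$ is $B_\kernel$-Lipschitz uniformly in $z$; hence $\int |k_\theta(x|z) - k_{\theta'}(x'|z)|\,r(\rm{d}z) \le B_\kernel\|(\theta,x)-(\theta',x')\|$. For the second term, the same gradient bound gives that $z\mapsto k_{\theta'}(x'|z)$ is $B_\kernel$-Lipschitz, so the dual representation of $\sf{W}_1$ yields $\left|\int k_{\theta'}(x'|z)\,[r-r'](\rm{d}z)\right| \le B_\kernel\,\sf{W}_1(r,r')$. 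Combining the two estimates and taking $K_q := B_\kernel$ (any larger constant works equally well) gives the claim.

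There is no genuine obstacle here: the statement follows directly from \Cref{ass:q_l_b} together with Kantorovich--Rubinstein duality, and indeed this proposition serves as a building block for \Cref{prop:lip_s,prop:lip_fudge}. The only point worth stating carefully is that the single joint bound on $\nabla_{(\theta,x,z)} k_\theta(x|z)$ simultaneously supplies both the Lipschitz constant in $(\theta,x)$ (needed for the $r$-average term) and the Lipschitz constant in $z$ (needed for the $\sf{W}_1$ term), so a single constant $B_\kernel$ suffices throughout.
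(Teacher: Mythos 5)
Your proof is correct and follows essentially the same approach as the paper: an add-and-subtract decomposition into a term where the kernel varies (bounded via the uniform Lipschitz continuity of $k$ in $(\theta,x)$ supplied by \Cref{ass:q_l_b}) and a term where the mixing distribution varies (bounded via Kantorovich--Rubinstein duality using the Lipschitz continuity of $z\mapsto k_\theta(x|z)$ from the same gradient bound). The only cosmetic difference is which intermediate term you insert — you use $q_{\theta',r}(x')$ while the paper uses $q_{\theta,r'}(x)$ — which has no bearing on the argument.
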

\begin{proof}
From direct computation, we have
    \begin{align}
    |q_{\theta,r}(x)-q_{\theta',r'}(x')|
    &\le |q_{\theta,r}(x)-q_{\theta,r'}(x)| 
    + |q_{\theta,r'}(x)-q_{\theta',r'}(x')| \nonumber \\
    &{\le} \int |\q{x}{z}{_\theta}| |r-r'|(z) \rm{d}z +  \int |\q{x}{z}{_{\theta}} - \q{x'}{z}{_{\theta'}}|r(z)\rm{d}z \nonumber \\
    &\overset{(a)}{\le} K_q (\sf{W}_1(r,r')+ \|(\theta,x)-(\theta,x')\|) \nonumber \\
\end{align}
where (a) for the first term, we use the fact that the map $z \mapsto |\q{x}{z}{_\theta}|$ is $B_{\kernel}$-Lipschitz (from the bounded gradient of \Cref{ass:q_l_b}), and again the Lipschitz property of $\kernel$ from the same assumption.
\end{proof}

\section{Algorithmic details}
\subsection{Gradient estimator}
\label{appen:gradient_estimators}

\begin{proof}
    [Proof of \Cref{prop:pathwise_estimators}]
    We show the derivation of the estimators in \Cref{eq:fv_estimator}. \Cref{eq:theta_estimator} will follow similarly.
    We have
    \begin{align*}
        \nabla_z \delta_r \cal{E}[\theta, r](z) &= \nabla_z \mathbb{E}_{\q{x}{z}{_\theta}} \left [ \log \frac{q_{\theta,r}(x)}{{p(y,x)}} \right ] \\
        &= \nabla_z \mathbb{E}_{\epsilon \sim \baseK} \left [ \log \frac{q_{\theta,r}(\coup_\theta(z, \epsilon))}{{p(y,\coup_\theta (z, \epsilon))}} \right ].
    \end{align*}
    Assuming that $\coup_\theta$ and $p_k$ are sufficiently regular to justify the interchange of differentiation and integration, we obtain
    \begin{align*}
        \nabla_z \delta_r \cal{E}[\theta, r](z) &=  \mathbb{E}_{\epsilon \sim \baseK} \left [ \nabla_z \log \frac{q_{\theta,r}(\phi(z, \epsilon))}{{p(y,\phi(z, \epsilon))}} \right].
    \end{align*}
    To obtain as desired, one can apply the chain rule.
\end{proof}
% \adaml{The final two lines in the display look the same to me. Also, is the integral/derivative exchange easy to justify?}
% \jll{I guess I could assume continuous derivatives of $\log q/p$ w.r.t $z$ and apply dominated convergence to get us there?}
Similarly, one can derive a Monte Carlo gradient estimator for $\nabla_z \delta_r \cal{E}^\fudge$ as follows:
\begin{align}
    \nabla_z \delta_r \cal{E}^\fudge [\theta, r](z) &= \nabla_z \mathbb{E}_{\q{x}{z}{_\theta}} \left [ \log \frac{q_{\theta,r}(x) + \fudge }{{p(y,x)}} + \frac{q_{\theta, r}(x)}{q_{\theta, r}(x) + \fudge}\right ] \nonumber \\
    &= \nabla_z \mathbb{E}_{\epsilon \sim \baseK} \left [ \log \frac{q_{\theta,r}(\coup_\theta(z, \epsilon)) + \gamma}{{p(y,\coup_\theta(z, \epsilon))}} + \frac{q_{\theta, r}(\coup_\theta(z, \epsilon))}{q_{\theta, r}(\coup_\theta(z, \epsilon)) + \fudge}\right ] \nonumber.
\end{align}
As before, if $\phi_\theta$ and $\baseK$ are sufficiently regular, we obtain
\begin{align}
    \nabla_z \delta_r \cal{E}^\fudge [\theta, r](z) &=  \mathbb{E}_{\epsilon \sim \baseK} \left [ \nabla_z \log \frac{q_{\theta,r}(\phi(z, \epsilon)) + \gamma}{{p(y,\phi(z, \epsilon))}} + \frac{\gamma \nabla {q_{\theta, r}(\coup_\theta(z, \epsilon))}}{({q_{\theta, r}(\coup_\theta(z, \epsilon)) + \fudge})^2} \label{eq:mc_grad_fudge}\right ].
\end{align}
To obtain as desired, one can apply chain rule.
\subsection{Preconditioners}
\label{app:precon}
Recall that the preconditioned gradient flow is given by
$$
\rm{d}\theta_t = - \Precon^\theta_t \nabla_\theta \cal{E}_\lambda(\theta_t, r_t)\,\rm{d}t,\enskip \partial_t r_t = \nabla_z \cdot (r_t \Precon^r_t\nabla_z\delta_r\cal{E}_\lambda[\theta_t,r_t]),
$$
where $\delta_r\cal{E}_\lambda[\theta,r]= \delta_r\cal{E}[\theta,r] + \log r/\refbase$
We can rewrite the dynamics of $r_t$ as
\begin{align*}
    \partial_t r_t &= \nabla_z \cdot (r_t \Precon^r_t\nabla_z[\delta_r\cal{E}[\theta_t,r_t] - \log \refbase + \log r_t]), \\
    &=\nabla_z \cdot (r_t \Precon^r_t\nabla_z[\delta_r\cal{E}[\theta_t,r_t] - \log \refbase]) + \nabla_z \cdot (r_t \Precon^r_t \nabla_z \log r_t).
\end{align*}
The second term can be written as
\begin{align*}
    \nabla_z \cdot (r_t \Precon^r_t \nabla_z \log r_t) &= \nabla_z \cdot (\Precon^r_t \nabla_z r_t) = \nabla_z \cdot \left ( \nabla_z \cdot [\Precon^r_t r_t]\right ) - \nabla_z \cdot (r_t \nabla_z \cdot \Precon^r_t)
\end{align*}
where $(\nabla \cdot \Precon^r_t)_i = \sum_{j=1}^{d_z}\partial_{z_j} [(\Precon^r_t)_{ij}]$, and last equality holds since
\begin{align*}
\sum_{i=1}^{d_z} \partial_{z_{i}}\left \{\sum_{j=1}^{d_z}(\Precon^r_t)_{ij} \partial_{z_{j}} r_t\right \} 
= \sum_{i=1}^{d_z} \partial_{z_{i}}\left \{\sum_{j=1}^{d_z} \left (\partial_{z_{j}} \left [(\Precon^r_t)_{ij}  r_t\right ] - r_t \partial_{z_j} [(\Precon^r_t)_{ij}]\right ) \right \}.
\end{align*}
Hence, we have the following dynamics of $r_t$:
$$
\partial_t r_t = \nabla_z \cdot (r_t \left (\Precon^r_t\nabla_z[\delta_r\cal{E}[\theta_t,r_t] - \log \refbase] - \nabla_z \cdot \Precon^r_t \right )) + \nabla_z \cdot \left ( \nabla_z \cdot [\Precon^r_t r_t]\right ) )
$$

\textbf{Examples.} Following in the essence of RMSProp \citep{tieleman2012lecture}, we utilize the preconditioner defined as follows:
\begin{align*}
	B_k &= \beta B_{k-1} + (1-\beta) \rm{Diag}(A(\{\nabla_z \delta_r\cal{E}[\theta_k, r_k](Z_m)^2\}_{m=1}^M))\\
	\Precon^r_k(Z) &= (B_k)^{-0.5}
\end{align*}
where $B_k\in \r^{d_z\times d_z}$ and $A$ is some aggregation function such as the mean or max. The idea is to normalize by the aggregated gradient of the first variation across all the particles since this is the dominant component in the drift of PVI. Similarly to RMSProp, it keeps an exponential moving average of the squared gradient which can then be used in the preconditioner.

\section{Experimental details}

In this section, we highlight additional details for reproducibility and computation. The code was written in JAX \citep{jax2018github} and executed on a NVIDIA GeForce RTX 4090.

\begin{table}[]
    \centering
\begin{tabular}{ll}
\hline
Layers                    & Size      \\ \hline
Input                     & $d_{in}$  \\
Linear($d_{in}$, $d_{h}$), LReLU & $d_h$       \\
Linear($d_{h}$,$d_{h}$), LReLU    & $d_h$       \\
Linear($d_{h}$,$d_{out}$),    & $d_{out}$ \\ \hline
\end{tabular}
    \caption{Neural network architecture defined by $\rm{NN}(d_{in}, d_h, d_{out})$.}
    \label{tab:nn}
\end{table}

\label{app:exp_details}
\subsection{\Cref{sec:exp_mixing}}

\textbf{Hyperparameters}. For the neural network, we use $f_\theta = \rm{NN}(2, 128, 2)$ defined in \Cref{tab:nn}, the number of particles $M= 100$, $d_z = 2$, $K=1000$, $h_\theta = 10^{-4}$, $h_z=10^{-2}$, $\lambda_r=10^{-8}$, for $\Precon^\theta$ we use RMSProp and we set $\Precon^r = I_2$.

\textbf{Computation Time}. Each run took $8$ seconds using JIT compilation.
\subsection{\Cref{sec:exp_density}}
\label{app:toy}
In this section, we outline all the experimental details regarding \Cref{sec:exp_density}. 

\textbf{Densities.} \Cref{tab:toy_densities} shows the densities used in the toy experiments.

\begin{table}[H]
\centering
\begin{tabular}{@{}cc@{}}
\toprule
Name       & Density \\ \midrule
Banana     & $\cal{N}(x_2; x_1^2/4, 1)\cal{N}(x_1; 0, 2)$        \\
X-Shape    &  $\frac{1}{2}\cal{N}\left ({0}, \begin{pmatrix}
2 & 1.8 \\
1.8 & 2 
\end{pmatrix} \right ) + \frac{1}{2}\cal{N}\left (0,  \begin{pmatrix}
2 & -1.8 \\
-1.8 & 2 
\end{pmatrix}\right )$      \\
Multimodal & $\frac{1}{8}\cal{N}\left (\begin{pmatrix}
    2 \\
    2
\end{pmatrix}, I\right )+
\frac{1}{8}\cal{N}\left (\begin{pmatrix}
    -2 \\
    -2
\end{pmatrix}, I \right )+
\frac{1}{2}\cal{N}\left (\begin{pmatrix}
    2 \\
    -2
\end{pmatrix}, I \right )+
\frac{1}{4}\cal{N}\left ( \begin{pmatrix}
    -2 \\
    2
\end{pmatrix}, I\right)$         \\ \bottomrule
\end{tabular}
\caption{Densities used in toy experiments (see \Cref{sec:exp_density}).}
\label{tab:toy_densities}
\end{table}

\textbf{Hyperparameters.} We set the number of parameter updates and particle steps to be $K =15000$, and $d_z=2$.

\begin{itemize}
    \item $f_\theta$. We use $f_\theta = \rm{NN}(2, 512, 2)$.
    \item \textbf{PVI}. We use $M = 100$, $\lambda_\theta = 0$, $\lambda_r = 10^{-8}$, $h_x=10^{-2}$, $h_\theta=10^{-4}$, $\Precon^\theta $ we use the RMSProp preconditoner, $\Precon^r = I_{d_z}$, and $L=250$.
    \item \textbf{SVI}. We use $K=50$ to estimate the objective \citep[see Eq.\ (5)]{yin2018semi} which are around the values used in \citet{yin2018semi}. We utilize RMSProp with step size $10^{-4}$, and $r=\cal{N}(0,I_{d_z})$. The implicit distribution is set to  $r=\cal{N}(0,I_{d_z})$.
    \item \textbf{UVI}. For the HMC sampler, we follow in \citep{titsias2019unbiased} and use  $50$ burn-in steps, with step-size $10^{-1}$ and $5$ leap-frog steps.  We use the RMSProp optimizer with stepsize $10^{-4}$ for $k_\theta$. The implicit distribution is set to  $r=\cal{N}(0,I_{d_z})$.
    \item \textbf{SM}. For the ``dual'' function written as $f$ in the original paper \cite[see Algorithm 1]{yu2023semiimplicit} we use $\rm{NN}(2, 512, 2)$. We utilize RMSProp with decaying learning rate from $10^{-4}$ to $10^{-5}$ to optimize the kernel $k_\theta$, and RMSProp with $10^{-3}$ to $10^{-4}$ for the dual function $f$. The implicit distribution is set to  $r=\cal{N}(0,I_{d_z})$.
\end{itemize}

\textbf{Sliced Wasserstein Distance.}  We report the average sliced Wasserstein distance using $100$ projections computed from $10000$ samples from the target and the variational distribution.

\textbf{Two-Sample Test.} We use the MMD-Fuse implementation found in \url{https://github.com/antoninschrab/mmdfuse.git}.

\textbf{Computation Time.} An example run on Banana with JIT compilation, PVI took $42$ seconds, UVI took $10$ minutes $36$ seconds, SM took $45$ seconds, and SVI took $38$ seconds.

\subsection{\Cref{sec:lr}}
\label{app:lr_details}

In this section, we outline all the hyperparameters for each method used.

\textbf{Hyperparameters.} We use $K=20000$ set $d_z = 10$. For all kernel parameters, we use RMSProp preconditioner with step size $h_\theta =10^{-3}$ . 
\begin{itemize}
    \item $f_\theta$. We use $f_\theta = \rm{NN}(d_z, 512, 22)$. 
    \item \textbf{PVI}. We use $M=100$, $\lambda_\theta = 0$, $\lambda_r =10^{-8}$, $h_x=10^{-2}$,  and for $\Precon^r$ we use the one described in \Cref{app:precon} with mean as the aggregate function.
    \item \textbf{SVI}. We use $K=50$ to estimate the objective \citep[see Eq.\ (5)]{yin2018semi}
    which are around the values used in \citet{yin2018semi}.
    \item \textbf{UVI}. For the HMC sampler, we follow in \citep{titsias2019unbiased} and use  $50$ burn-in steps,
    with step-size $10^{-1}$ and $5$ leap-frog steps.
    \item \textbf{SM}. We were unable to improve the performance of SM with our chosen kernel and instead used the implementation in \url{https://github.com/longinYu/SIVISM?utm_source=catalyzex.com} to obtain posterior samples with implementation details found in the code repository and in the paper \citep{yu2023semiimplicit}.
\end{itemize}

\subsection{\Cref{sec:exp_bnn}}
\label{app:bnn_details}
In this section, we outline all the experimental details regarding \Cref{sec:exp_bnn}. 

\textbf{Model.} We consider the neural network $\rm{BNN}(d^{\rm{bnn}}_{in}, d^{\rm{bnn}}_{h})$ defined as
$f_x(o) = W_2^\top \rm{ReLU}(W_1^\top{o} + b_1) + b_2$ where $o \in \r^{d^{\rm{bnn}}_{in}}$, $x = [\rm{vec}(W_2) ,b_2,\rm{vec}(W_1),b_1]^\top$, $W_2 \in \r^{d^{\rm{bnn}}_{h} \times 1}$, $b_2 \in \r$,
$W_1 \in \r^{d^{\rm{bnn}}_{in} \times d^{\rm{bnn}}_{h}}$, $b_1 \in \r^{d^{\rm{bnn}}_{h}}$. Given an input-output pair $\bm{Y}:= \{(O_i, Y_i)\}_{i=1}^B$, the model can be defined as
$p(\bm{Y}, x) = p(\bm{Y} |x) p(x)$ where the likelihood is $p(\bm{Y}|x) = \prod_{i=1}^B\cal{N}(Y_i; f_x(O_i), 0.01^2)$
and the prior is $\cal{N}(x; 0, 25I)$.

\textbf{Datasets.} For all the datasets, we standardize by removing the mean and dividing by the standard deviation.
\begin{itemize}
    \item \textbf{Protein}. For the model, we use $\rm{BNN}(9, 30)$ which results in the problem having dimension $d_x=331$. The dataset is composed of $1600$ train examples,
        $401$ test examples. 
    \item \textbf{Yacht}. For the model, we use $\rm{BNN}(6, 10)$ which results in the problem having dimension $d_x=81$.
        The dataset is composed of $246$ train examples and $62$ test examples.
    \item \textbf{Concrete} For the model, we use $\rm{BNN}(8, 10)$ which results in the problem having dimension $d_x=101$. The dataset comprises of $824$ training examples
    and $206$ test examples.
\end{itemize}
\textbf{Hyperparameters.} We use $K=1500$ set $d_z = 10$. For all kernel parameters, we use RMSProp preconditioner with step size $h_\theta =10^{-3}$ that decays to $10^{-5}$ following a constant schedule that transitions every $100$ parameters steps. 
\begin{itemize}
    \item $f_\theta, \sigma_\theta$. We use $f_\theta = \rm{NN}(d_z, 512, d_x)$ and
    $\sigma_\theta = \rm{Softplus}(\rm{NN}(d_z, 512, d_x)) + 10^{-8}$ and they share parameters except for the last layers. 
    \item \textbf{PVI}. We use $M=100$, $\lambda_\theta = 0$, $\lambda_r =10^{-3}$, $h_x=10^{-3}$,  and for $\Precon^r$ we use the one described in \Cref{app:precon} with mean as the aggregate function. 
    \item \textbf{SVI}. We use $K=50$ to estimate the objective \citep[see Eq.\ (5)]{yin2018semi}
    which are around the values used in \citet{yin2018semi}. The implicit distribution is set to  $r=\cal{N}(0,I_{d_z})$.
    \item \textbf{UVI}. For the HMC sampler, we follow in \citep{titsias2019unbiased} and use  $50$ burn-in steps,
    with step-size $10^{-1}$ and $5$ leap-frog steps. The implicit distribution is set to  $r=\cal{N}(0,I_{d_z})$.
    \item \textbf{SM}. For the ``dual'' function written as $f$ in the original paper \cite[see Algorithm 1]{yu2023semiimplicit}
    we use $\rm{NN}(d_x, 512, d_x)$ and trained with RMSProp with stepsize $10^{-2}$. We tried a decaying learning schedule to $10{-4}$ but found that this degraded the performance. We used ReLU activations instead as we found that using leaky ReLUs harmed performance.
    The implicit distribution is set to  $r=\cal{N}(0,I_{d_z})$.
\end{itemize}
\textbf{Computation Time.} For each run in the Concrete dataset with JIT compilation, PVI took $~37$ seconds, UVI took approximately $1$ minute $40$ seconds, SVI took $~30$ seconds, and SM took $~27$ seconds.

\newpage
\section*{NeurIPS Paper Checklist}

\begin{enumerate}

\item {\bf Claims}
    \item[] Question: Do the main claims made in the abstract and introduction accurately reflect the paper's contributions and scope?
    \item[] Answer: \answerYes{} % Replace by \answerYes{}, \answerNo{}, or \answerNA{}.
    \item[] Justification: We clearly define our paper scope and contributions at the end of the introduction with references to why they are accurate. 
    \item[] Guidelines:
    \begin{itemize}
        \item The answer NA means that the abstract and introduction do not include the claims made in the paper.
        \item The abstract and/or introduction should clearly state the claims made, including the contributions made in the paper and important assumptions and limitations. A No or NA answer to this question will not be perceived well by the reviewers. 
        \item The claims made should match theoretical and experimental results, and reflect how much the results can be expected to generalize to other settings. 
        \item It is fine to include aspirational goals as motivation as long as it is clear that these goals are not attained by the paper. 
    \end{itemize}

\item {\bf Limitations}
    \item[] Question: Does the paper discuss the limitations of the work performed by the authors?
    \item[] Answer: \answerYes{} % Replace by \answerYes{}, \answerNo{}, or \answerNA{}.
    \item[] Justification: We highlight the limitations in \Cref{sec:ending}.
    \item[] Guidelines:
    \begin{itemize}
        \item The answer NA means that the paper has no limitation while the answer No means that the paper has limitations, but those are not discussed in the paper. 
        \item The authors are encouraged to create a separate "Limitations" section in their paper.
        \item The paper should point out any strong assumptions and how robust the results are to violations of these assumptions (e.g., independence assumptions, noiseless settings, model well-specification, asymptotic approximations only holding locally). The authors should reflect on how these assumptions might be violated in practice and what the implications would be.
        \item The authors should reflect on the scope of the claims made, e.g., if the approach was only tested on a few datasets or with a few runs. In general, empirical results often depend on implicit assumptions, which should be articulated.
        \item The authors should reflect on the factors that influence the performance of the approach. For example, a facial recognition algorithm may perform poorly when image resolution is low or images are taken in low lighting. Or a speech-to-text system might not be used reliably to provide closed captions for online lectures because it fails to handle technical jargon.
        \item The authors should discuss the computational efficiency of the proposed algorithms and how they scale with dataset size.
        \item If applicable, the authors should discuss possible limitations of their approach to address problems of privacy and fairness.
        \item While the authors might fear that complete honesty about limitations might be used by reviewers as grounds for rejection, a worse outcome might be that reviewers discover limitations that aren't acknowledged in the paper. The authors should use their best judgment and recognize that individual actions in favor of transparency play an important role in developing norms that preserve the integrity of the community. Reviewers will be specifically instructed to not penalize honesty concerning limitations.
    \end{itemize}

\item {\bf Theory Assumptions and Proofs}
    \item[] Question: For each theoretical result, does the paper provide the full set of assumptions and a complete (and correct) proof?
    \item[] Answer: \answerYes{} % Replace by \answerYes{}, \answerNo{}, or \answerNA{}.
    \item[] Justification: For each result, we outline relevant assumptions which can be found in the appendix.
    \item[] Guidelines:
    \begin{itemize}
        \item The answer NA means that the paper does not include theoretical results. 
        \item All the theorems, formulas, and proofs in the paper should be numbered and cross-referenced.
        \item All assumptions should be clearly stated or referenced in the statement of any theorems.
        \item The proofs can either appear in the main paper or the supplemental material, but if they appear in the supplemental material, the authors are encouraged to provide a short proof sketch to provide intuition. 
        \item Inversely, any informal proof provided in the core of the paper should be complemented by formal proofs provided in appendix or supplemental material.
        \item Theorems and Lemmas that the proof relies upon should be properly referenced. 
    \end{itemize}

    \item {\bf Experimental Result Reproducibility}
    \item[] Question: Does the paper fully disclose all the information needed to reproduce the main experimental results of the paper to the extent that it affects the main claims and/or conclusions of the paper (regardless of whether the code and data are provided or not)?
    \item[] Answer: \answerYes{} % Replace by \answerYes{}, \answerNo{}, or \answerNA{}.
    \item[] Justification: These can be found in \Cref{app:exp_details}.
    \item[] Guidelines:
    \begin{itemize}
        \item The answer NA means that the paper does not include experiments.
        \item If the paper includes experiments, a No answer to this question will not be perceived well by the reviewers: Making the paper reproducible is important, regardless of whether the code and data are provided or not.
        \item If the contribution is a dataset and/or model, the authors should describe the steps taken to make their results reproducible or verifiable. 
        \item Depending on the contribution, reproducibility can be accomplished in various ways. For example, if the contribution is a novel architecture, describing the architecture fully might suffice, or if the contribution is a specific model and empirical evaluation, it may be necessary to either make it possible for others to replicate the model with the same dataset, or provide access to the model. In general. releasing code and data is often one good way to accomplish this, but reproducibility can also be provided via detailed instructions for how to replicate the results, access to a hosted model (e.g., in the case of a large language model), releasing of a model checkpoint, or other means that are appropriate to the research performed.
        \item While NeurIPS does not require releasing code, the conference does require all submissions to provide some reasonable avenue for reproducibility, which may depend on the nature of the contribution. For example
        \begin{enumerate}
            \item If the contribution is primarily a new algorithm, the paper should make it clear how to reproduce that algorithm.
            \item If the contribution is primarily a new model architecture, the paper should describe the architecture clearly and fully.
            \item If the contribution is a new model (e.g., a large language model), then there should either be a way to access this model for reproducing the results or a way to reproduce the model (e.g., with an open-source dataset or instructions for how to construct the dataset).
            \item We recognize that reproducibility may be tricky in some cases, in which case authors are welcome to describe the particular way they provide for reproducibility. In the case of closed-source models, it may be that access to the model is limited in some way (e.g., to registered users), but it should be possible for other researchers to have some path to reproducing or verifying the results.
        \end{enumerate}
    \end{itemize}

\item {\bf Open access to data and code}
    \item[] Question: Does the paper provide open access to the data and code, with sufficient instructions to faithfully reproduce the main experimental results, as described in supplemental material?
    \item[] Answer: \answerYes{} % Replace by \answerYes{}, \answerNo{}, or \answerNA{}.
    \item[] Justification: We provide code for reproducibility.
    \item[] Guidelines:
    \begin{itemize}
        \item The answer NA means that paper does not include experiments requiring code.
        \item Please see the NeurIPS code and data submission guidelines (\url{https://nips.cc/public/guides/CodeSubmissionPolicy}) for more details.
        \item While we encourage the release of code and data, we understand that this might not be possible, so “No” is an acceptable answer. Papers cannot be rejected simply for not including code, unless this is central to the contribution (e.g., for a new open-source benchmark).
        \item The instructions should contain the exact command and environment needed to run to reproduce the results. See the NeurIPS code and data submission guidelines (\url{https://nips.cc/public/guides/CodeSubmissionPolicy}) for more details.
        \item The authors should provide instructions on data access and preparation, including how to access the raw data, preprocessed data, intermediate data, and generated data, etc.
        \item The authors should provide scripts to reproduce all experimental results for the new proposed method and baselines. If only a subset of experiments are reproducible, they should state which ones are omitted from the script and why.
        \item At submission time, to preserve anonymity, the authors should release anonymized versions (if applicable).
        \item Providing as much information as possible in supplemental material (appended to the paper) is recommended, but including URLs to data and code is permitted.
    \end{itemize}

\item {\bf Experimental Setting/Details}
    \item[] Question: Does the paper specify all the training and test details (e.g., data splits, hyperparameters, how they were chosen, type of optimizer, etc.) necessary to understand the results?
    \item[] Answer: \answerYes{} % Replace by \answerYes{}, \answerNo{}, or \answerNA{}.
    \item[] Justification: These can be found in \Cref{app:exp_details} and explanations can be found in \Cref{sec:experiments}.
    \item[] Guidelines:
    \begin{itemize}
        \item The answer NA means that the paper does not include experiments.
        \item The experimental setting should be presented in the core of the paper to a level of detail that is necessary to appreciate the results and make sense of them.
        \item The full details can be provided either with the code, in appendix, or as supplemental material.
    \end{itemize}

\item {\bf Experiment Statistical Significance}
    \item[] Question: Does the paper report error bars suitably and correctly defined or other appropriate information about the statistical significance of the experiments?
    \item[] Answer: \answerYes{} % Replace by \answerYes{}, \answerNo{}, or \answerNA{}.
    \item[] Justification: In our results, we report the mean and standard deviations from independent trials which can be found in \Cref{tab:bnn,tab:toy-sw-d}.
    \item[] Guidelines:
    \begin{itemize}
        \item The answer NA means that the paper does not include experiments.
        \item The authors should answer "Yes" if the results are accompanied by error bars, confidence intervals, or statistical significance tests, at least for the experiments that support the main claims of the paper.
        \item The factors of variability that the error bars are capturing should be clearly stated (for example, train/test split, initialization, random drawing of some parameter, or overall run with given experimental conditions).
        \item The method for calculating the error bars should be explained (closed form formula, call to a library function, bootstrap, etc.)
        \item The assumptions made should be given (e.g., Normally distributed errors).
        \item It should be clear whether the error bar is the standard deviation or the standard error of the mean.
        \item It is OK to report 1-sigma error bars, but one should state it. The authors should preferably report a 2-sigma error bar than state that they have a 96\% CI, if the hypothesis of Normality of errors is not verified.
        \item For asymmetric distributions, the authors should be careful not to show in tables or figures symmetric error bars that would yield results that are out of range (e.g. negative error rates).
        \item If error bars are reported in tables or plots, The authors should explain in the text how they were calculated and reference the corresponding figures or tables in the text.
    \end{itemize}

\item {\bf Experiments Compute Resources}
    \item[] Question: For each experiment, does the paper provide sufficient information on the computer resources (type of compute workers, memory, time of execution) needed to reproduce the experiments?
    \item[] Answer: \answerYes{} % Replace by \answerYes{}, \answerNo{}, or \answerNA{}.
    \item[] Justification: We outline the computational resource in \Cref{app:exp_details}.
    \item[] Guidelines:
    \begin{itemize}
        \item The answer NA means that the paper does not include experiments.
        \item The paper should indicate the type of compute workers CPU or GPU, internal cluster, or cloud provider, including relevant memory and storage.
        \item The paper should provide the amount of compute required for each of the individual experimental runs as well as estimate the total compute. 
        \item The paper should disclose whether the full research project required more compute than the experiments reported in the paper (e.g., preliminary or failed experiments that didn't make it into the paper). 
    \end{itemize}
    
\item {\bf Code Of Ethics}
    \item[] Question: Does the research conducted in the paper conform, in every respect, with the NeurIPS Code of Ethics \url{https://neurips.cc/public/EthicsGuidelines}?
    \item[] Answer: \answerYes{} % Replace by \answerYes{}, \answerNo{}, or \answerNA{}.
    \item[] Justification: We conform.
    \item[] Guidelines:
    \begin{itemize}
        \item The answer NA means that the authors have not reviewed the NeurIPS Code of Ethics.
        \item If the authors answer No, they should explain the special circumstances that require a deviation from the Code of Ethics.
        \item The authors should make sure to preserve anonymity (e.g., if there is a special consideration due to laws or regulations in their jurisdiction).
    \end{itemize}

\item {\bf Broader Impacts}
    \item[] Question: Does the paper discuss both potential positive societal impacts and negative societal impacts of the work performed?
    \item[] Answer: \answerNA{} % Replace by \answerYes{}, \answerNo{}, or \answerNA{}.
    \item[] Justification: This paper presents work whose goal is to advance the field of Machine Learning. There are many potential societal consequences of our work, none which we feel must be specifically highlighted here.
    \item[] Guidelines:
    \begin{itemize}
        \item The answer NA means that there is no societal impact of the work performed.
        \item If the authors answer NA or No, they should explain why their work has no societal impact or why the paper does not address societal impact.
        \item Examples of negative societal impacts include potential malicious or unintended uses (e.g., disinformation, generating fake profiles, surveillance), fairness considerations (e.g., deployment of technologies that could make decisions that unfairly impact specific groups), privacy considerations, and security considerations.
        \item The conference expects that many papers will be foundational research and not tied to particular applications, let alone deployments. However, if there is a direct path to any negative applications, the authors should point it out. For example, it is legitimate to point out that an improvement in the quality of generative models could be used to generate deepfakes for disinformation. On the other hand, it is not needed to point out that a generic algorithm for optimizing neural networks could enable people to train models that generate Deepfakes faster.
        \item The authors should consider possible harms that could arise when the technology is being used as intended and functioning correctly, harms that could arise when the technology is being used as intended but gives incorrect results, and harms following from (intentional or unintentional) misuse of the technology.
        \item If there are negative societal impacts, the authors could also discuss possible mitigation strategies (e.g., gated release of models, providing defenses in addition to attacks, mechanisms for monitoring misuse, mechanisms to monitor how a system learns from feedback over time, improving the efficiency and accessibility of ML).
    \end{itemize}
    
\item {\bf Safeguards}
    \item[] Question: Does the paper describe safeguards that have been put in place for responsible release of data or models that have a high risk for misuse (e.g., pretrained language models, image generators, or scraped datasets)?
    \item[] Answer: \answerNA{} % Replace by \answerYes{}, \answerNo{}, or \answerNA{}.
    \item[] Justification: We do not release data or models that have a high risk of misuse.
    \item[] Guidelines:
    \begin{itemize}
        \item The answer NA means that the paper poses no such risks.
        \item Released models that have a high risk for misuse or dual-use should be released with necessary safeguards to allow for controlled use of the model, for example by requiring that users adhere to usage guidelines or restrictions to access the model or implementing safety filters. 
        \item Datasets that have been scraped from the Internet could pose safety risks. The authors should describe how they avoided releasing unsafe images.
        \item We recognize that providing effective safeguards is challenging, and many papers do not require this, but we encourage authors to take this into account and make a best faith effort.
    \end{itemize}

\item {\bf Licenses for existing assets}
    \item[] Question: Are the creators or original owners of assets (e.g., code, data, models), used in the paper, properly credited and are the license and terms of use explicitly mentioned and properly respected?
    \item[] Answer: \answerYes{} % Replace by \answerYes{}, \answerNo{}, or \answerNA{}.
    \item[] Justification: Yes, we reference the dataset used.
    \item[] Guidelines:
    \begin{itemize}
        \item The answer NA means that the paper does not use existing assets.
        \item The authors should cite the original paper that produced the code package or dataset.
        \item The authors should state which version of the asset is used and, if possible, include a URL.
        \item The name of the license (e.g., CC-BY 4.0) should be included for each asset.
        \item For scraped data from a particular source (e.g., website), the copyright and terms of service of that source should be provided.
        \item If assets are released, the license, copyright information, and terms of use in the package should be provided. For popular datasets, \url{paperswithcode.com/datasets} has curated licenses for some datasets. Their licensing guide can help determine the license of a dataset.
        \item For existing datasets that are re-packaged, both the original license and the license of the derived asset (if it has changed) should be provided.
        \item If this information is not available online, the authors are encouraged to reach out to the asset's creators.
    \end{itemize}

\item {\bf New Assets}
    \item[] Question: Are new assets introduced in the paper well documented and is the documentation provided alongside the assets?
    \item[] Answer: \answerNA{}{} % Replace by \answerYes{}, \answerNo{}, or \answerNA{}.
    \item[] Justification: We do not release new assets.
    \item[] Guidelines:
    \begin{itemize}
        \item The answer NA means that the paper does not release new assets.
        \item Researchers should communicate the details of the dataset/code/model as part of their submissions via structured templates. This includes details about training, license, limitations, etc. 
        \item The paper should discuss whether and how consent was obtained from people whose asset is used.
        \item At submission time, remember to anonymize your assets (if applicable). You can either create an anonymized URL or include an anonymized zip file.
    \end{itemize}

\item {\bf Crowdsourcing and Research with Human Subjects}
    \item[] Question: For crowdsourcing experiments and research with human subjects, does the paper include the full text of instructions given to participants and screenshots, if applicable, as well as details about compensation (if any)? 
    \item[] Answer: \answerNA{} % Replace by \answerYes{}, \answerNo{}, or \answerNA{}.
    \item[] Justification: No human subjects were used.
    \item[] Guidelines:
    \begin{itemize}
        \item The answer NA means that the paper does not involve crowdsourcing nor research with human subjects.
        \item Including this information in the supplemental material is fine, but if the main contribution of the paper involves human subjects, then as much detail as possible should be included in the main paper. 
        \item According to the NeurIPS Code of Ethics, workers involved in data collection, curation, or other labor should be paid at least the minimum wage in the country of the data collector. 
    \end{itemize}

\item {\bf Institutional Review Board (IRB) Approvals or Equivalent for Research with Human Subjects}
    \item[] Question: Does the paper describe potential risks incurred by study participants, whether such risks were disclosed to the subjects, and whether Institutional Review Board (IRB) approvals (or an equivalent approval/review based on the requirements of your country or institution) were obtained?
    \item[] Answer: \answerNA{} % Replace by \answerYes{}, \answerNo{}, or \answerNA{}.
    \item[] Justification: We did not have human participants.
    \item[] Guidelines:
    \begin{itemize}
        \item The answer NA means that the paper does not involve crowdsourcing nor research with human subjects.
        \item Depending on the country in which research is conducted, IRB approval (or equivalent) may be required for any human subjects research. If you obtained IRB approval, you should clearly state this in the paper. 
        \item We recognize that the procedures for this may vary significantly between institutions and locations, and we expect authors to adhere to the NeurIPS Code of Ethics and the guidelines for their institution. 
        \item For initial submissions, do not include any information that would break anonymity (if applicable), such as the institution conducting the review.
    \end{itemize}

\end{enumerate}

\end{document}